\documentclass[10pt, a4paper]{amsart}

\input{style}

% \usepackage{xr-hyper}
% \makeatletter
% \newcommand*{\addFileDependency}[1]{% argument=file name and extension
%   \typeout{(#1)}
%   \@addtofilelist{#1}
%   \IfFileExists{#1}{}{\typeout{No file #1.}}
% }
% \makeatother
% \newcommand*{\myexternaldocument}[1]{%
%     \externaldocument{#1}%
%     \addFileDependency{#1.tex}%
%     \addFileDependency{#1.aux}%
% }
% %\myexternaldocument{main}
%\listfiles
\begin{document}

\title{A Statistical Test for Probabilistic Fairness}
\date{}
\author{Bahar Taskesen, Jos\'{e} Blanchet, Daniel Kuhn, Viet Anh Nguyen}
\thanks{The authors are with the Risk Analytics and Optimization Chair, EPFL, Switzerland (\texttt{bahar.taskesen, daniel.kuhn@epfl.ch}) and the Department of Management Science and Engineering, Stanford University (\texttt{jose.blanchet, viet-anh.nguyen@stanford.edu}).}
\begin{abstract}
Algorithms are now routinely used to make consequential decisions that affect human lives. Examples include college admissions, medical interventions or law enforcement. While algorithms empower us to harness all information hidden in vast amounts of data, they may inadvertently amplify existing biases in the available datasets. This concern has sparked increasing interest in fair machine learning, which aims to quantify and mitigate algorithmic discrimination. Indeed, machine learning models should undergo intensive tests to detect algorithmic biases before being deployed at scale. In this paper, we use ideas from the theory of optimal transport to propose a statistical hypothesis test for detecting unfair classifiers. Leveraging the geometry of the feature space, the test statistic quantifies the distance of the empirical distribution supported on the test samples to the manifold of distributions that render a pre-trained classifier fair. We develop a rigorous hypothesis testing mechanism for assessing the probabilistic fairness of any pre-trained logistic classifier, and we show both theoretically as well as empirically that the proposed test is asymptotically correct. In addition, the proposed framework offers interpretability by identifying the most favorable perturbation of the data so that the given classifier becomes fair. %Numerical experiments demonstrate that our proposed testing framework satisfies the desiderata.
\end{abstract}
	\maketitle

\section{Introduction}
\label{sec:intro}
% {\color{red}{
% Citer-arXiv:1907.00020-for $\kappa_Y$ is $\infty$}

% don't forget to cite this \cite{ref:taskesen2020distributionally}

% Related material
% \begin{itemize}
%     \item Auditing individual fairness~\cite{ref:xue2020auditing}
    % \item Flip-test \cite{ref:black2020fliptest}
%     \item Evaluating fairness using permutation tests: 
%     \item Fairtest
%     \item Conterfactual fairness test
% \end{itemize}}

The past decade witnessed data and algorithms becoming an integrative part of the human society. Recent technological advances are now allowing us to collect and store an astronomical amount of unstructured data, and the unprecedented computing power is enabling us to convert these data into decisional insights. Nowadays, machine learning algorithms can uncover complex patterns in the data to produce an exceptional performance that can match, or even surpass, that of humans. These algorithms, as a consequence, are proliferating in every corner of our lives, from suggesting us the next vacation destination to helping us create digital paintings and melodies. Machine learning algorithms are also gradually assisting humans in consequential decisions such as deciding whether a student is admitted to college, picking which medical treatment to be prescribed to a patient, and determining whether a person is convicted. Arguably, these decisions impact radically many people's lives, together with the future of their loved ones.

Algorithms are conceived and function following strict rules of logic and algebra; it is hence natural to expect that machine learning algorithms deliver objective predictions and recommendations. Unfortunately, in-depth investigations reveal the excruciating reality that state-of-the-art algorithmic assistance is far from being free of biases. For example, a predictive algorithm widely used in the United States criminal justice system is more likely to \textit{mis}classify African-American offenders into the group of high recidivism risk compared to white-Americans~\cite{chouldechova2017fair, ref:propublica}. The artificial intelligence tool developed by Amazon also learned to penalize gender-related keywords such as ``women's'' in the profile screening process, and thus may prefer to recommend hiring male candidates for software development and technical positions~\cite{ref:dastin2018amazon}. Further, Google’s ad-targeting algorithm displayed advertisements for higher-paying executive jobs more often to men than to women  \cite{ref:datta2015automated}.

There are several possible explanations for why cold, soulless algorithms may trigger biased recommendations. First, the data used to train machine learning algorithms may already encrypt human biases manifested in the data collection process. These biases arise as the result of a suboptimal design of experiments, or from historically biased human decisions that accumulate over centuries. Machine-learned algorithms, which are apt to detect underlying patterns from data, will unintentionally learn and maintain these existing biases~\cite{ref:buolamwini2018gender,  ref:manrai2016genetic}. For example, secretary or primary school teacher are  professions which are predominantly taken by women, thus, natural language processing systems are inclined to associate female attributes to these jobs. Second, training a machine learning algorithm typically involves minimizing the prediction error which privileges the majority populations over the minority groups. Clinical trials, for instance, typically involve very few participants from the minority groups such as indigenous people, and thus medical interventions recommended by the algorithms may not align perfectly to the characteristics and interests of patients from the minority groups. Finally, even when the sensitive attributes are not used in the training phase, strong correlations between the sensitive attributes and the remaining variables in the dataset may be exploited to generate unjust actions. For example, the sensitive attribute of race can be easily inferred with high accuracy based on common non-sensitive attributes such as the travel history of passengers or the grocery shopping records of customers.

% \begin{itemize}
% \item Third, sensitive attributes can have an implicit detrimental effect on the decision making process even if they are not explicitly represented in the training data. 
% \item For example, even if a protected attribute (\textit{e.g.} race) is removed from the feature space, a proxy attribute that points out that protected attribute (\textit{e.g.} skin pigmentation) could be present in the dataset.  \viet{I don't understand this point}
% \item Sensitive attributes are any attributes such as the race, gender or age of a person that distinguish privileged from unprivileged individuals. It is often illegal to use these sensitive attributes for decision making.
% \end{itemize}

The pressing needs to redress undesirable algorithmic biases have propelled the rising field of fair machine learning\footnote{Comprehensive surveys on fair machine learning can be found in \cite{ref:berk2018fairness, ref:chouldechova2020snapshot, ref:corbett2017algorithmic, ref:mehrabi2019survey}.}. A building pillar of this field involves the verification task: given a machine learning algorithm, we are interested in verifying if this algorithm satisfies a chosen criterion of fairness. This task is performed in two steps: first, we choose an appropriate notion of fairness, then the second step invokes a computational procedure, which may or may not involve data, to decide if the chosen fairness criterion is fulfilled. A plethora of criteria for fair machine learning were proposed in the literature, many of them are motivated by philosophical or sociological ideologies or legal constraints. For example, anti-discrimination laws may prohibit making decisions based on sensitive attributes such as age, gender, race or sexual orientation. Thus, a na\"{i}ve strategy, called fairness through unawareness, involves removing all sensitive attributes from the training data. However, this strategy seldom guarantees any fairness due to the inter-correlation issues~\cite{ref:grgic2016case, ref:garg2019counterfactual}, and thus potentially fails to generate inclusive outcomes~\cite{ref:barocas2016big, ref:black2020fliptest, ref:kleinberg2018algorithmic, ref:lipton2018does}. Other notions of fairness aim to either promote individual fairness~ \cite{ref:dwork2012fairness}, prevent disparate treatment \cite{ref:zafar2017fairness} or avoid disparate mistreatment~\cite{ref:feldman2015certifying, ref:zafar2015fairness} of the algorithms. Towards similar goals, notions of \textit{group} fairness focus on reducing the difference of favorable outcomes proportions among different sensitive groups. Examples of group fairness notions include
disparate impact~\cite{ref:zafar2017fairness},~demographic parity (statistical parity)~\cite{ref:calders2010three, ref:dwork2012fairness},~equality of opportunity~\cite{ref:hardt2016equality} and equalized odds~\cite{ref:hardt2016equality}. The notion of counterfactual fairness \cite{ref:garg2019counterfactual} was also suggested as a measure of causal fairness. Despite the abundance of available notions, there is unfortunately no general consensus on the most suitable measure to serve as the industry standard. Moreover, except in trivial cases, it is not possible for a machine learning algorithm to simultaneously satisfy multiple notions of fairness \cite{ref:berk2018fairness, kleinberg2016inherent}. Therefore, the choice of the fairness notion is likely to remain more an art than a science.

This paper focuses not on the normative approach to choosing an ideal notion of machine learning fairness. We endeavor in this paper to shed more light on the computational procedure to complement the verification task. Concretely, we position ourselves in the classification setting, which is arguably the most popular task in machine learning. Moreover, we will focus on notions of group fairness, and we employ the framework of statistical hypothesis test instead of algorithmic test.

    \noindent \textbf{Contributions.} Our paper makes two concrete contributions to the problem of fairness testing of machine learning's classifiers.
    \begin{enumerate}[leftmargin = 5mm]
        \item We propose the Wasserstein projection framework to perform statistical hypothesis test of group fairness for classification algorithms. 
        We derive in details the computation of the test statistic and the limiting distribution when fairness is measured using the probabilistic equality of opportunity and probabilistic equalized odds criteria.
        \item We demonstrate that the Wasserstein projection hypothesis testing paradigm is asymptotically correct and can exploit additional information on the geometry of the feature space. Moreover, we also show that this paradigm promotes transparency and interpretability through the analysis of the most favorable distributions.
    \end{enumerate}
    
    The remaining of the paper is structured as follows. In Section~\ref{sec:framework}, we introduce the general problem of statistical hypothesis test of classification fairness, and depict the current landscape of fairness testing in the literature. Section~\ref{sec:approach} details our Wasserstein projection approach to this problem. Sections~\ref{sec:EO} and~\ref{sec:EOdd} apply the proposed framework to test if a pre-trained logistic classifier satisfies the fairness notion of probabilistic equal opportunity and probabilistic equalized odds, respectively. Numerical experiments are presented in Section~\ref{sec:experiment} to empirically validate the correctness and demonstrate the power of our proposed paradigm. Section~\ref{sec:conclusion} concludes the paper with outlooks on the broader impact of our Wasserstein projection hypothesis testing approach. 
    
    All technical proofs are relegated to the Appendix.
%%%%%%%%%%%%%%%%%%%%%%%%%%%%%%%%
\section{Statistical Testing Framework for Fairness and Literature Review}
\label{sec:framework}

We consider throughout this paper a generic binary classification setting. Let $\mc X = \R^d$ and $\mc Y=\{0, 1\}$ be the space of feature inputs and label outputs of interest. We assume that there is a single sensitive attribute corresponding to each data point and its space is denoted by $\mc A= \{0, 1\}$. A probabilistic classifier is represented by a function $h(\cdot) : \mc X \to [0, 1]$ that outputs for each given sample $x \in \mc X$ the probability that $x$ belongs to the positive class. The deterministic classifier predicts class 1 if $h(x) \ge \tau$ and class 0 otherwise, where $\tau \in [0, 1]$ is a classification threshold. Note that the function $h$ depends only on the feature $X$, but not on the sensitive attribute $A$, thus predicting $Y$ using $h$ satisfies fairness through unawareness. 

The central goal of this paper is to provide a statistical test to detect if a classifier $h$ fails to satisfy a prescribed notion of machine learning fairness. A statistical hypothesis test can be cast with the null hypothesis being
\begin{center}
    $\mc H_0$: the classifier $h$ is fair,
\end{center}
against the alternative hypothesis being
\begin{center}
    $\mc H_1$: the classifier $h$ is not fair.
\end{center}
In this paper, we focus on statistical notions of \textit{group} fairness, which are usually defined using conditional probabilities. A prevalent notion of fairness in machine learning is the criterion of equality of opportunity\footnote{We use two terms ``equality of opportunity'' and ``equal opportunity'' interchangeably.}, which requires that the true positive rate are equal between subgroups.
\begin{definition}[Equal opportunity \cite{ref:hardt2016equality}] \label{def:EO}
    A classifier $h(\cdot):\mc X \to [0, 1]$ satisfies the equal opportunity criterion relative to $\QQ$ if
    \[
        \QQ(h(X) \ge \tau | A = 1, Y = 1) = \QQ(h(X) \ge \tau | A = 0, Y = 1),
    \]
    where $\tau$ is the classification threshold.
\end{definition}
Another popular criterion of machine learning fairness is the equalized odds, which is more stringent than the equality of opportunity: it requires that the positive outcome is conditionally independent of the sensitive attributes given the true label.
\begin{definition}[Equalized odds \cite{ref:hardt2016equality}] 
\label{def:EOdd}
    A classifier $h(\cdot):\mc X \to [0, 1]$ satisfies the equalized odds criterion relative to $\QQ$ if
    \[
        \QQ(h(X) \ge \tau | A = 1, Y = y) = \QQ(h(X) \ge \tau | A = 0, Y = y)  ~ \forall y \in \mc Y,
    \]
    where $\tau$ is the classification threshold.
\end{definition}
Notice that the criteria of fairness presented in Definitions~\ref{def:EO} and~\ref{def:EOdd} are dependent on the distribution $\QQ$: a classifier $h$ can be fair relative to a distribution $\QQ_1$, but it may become unfair with respect to another distribution $\QQ_2 \neq \QQ_1$. If we denote by $\PP$ the true population distribution that governs the random vector $(X, A, Y)$, then it is imperative and reasonable to test for group fairness with respect to $\PP$. For example, to test for the equality of opportunity, we can reformulate a two-sample equal conditional mean test of the null hypothesis
    \[
    \mc H_0: \EE_\PP[\mathbbm{1}_{h(X)\ge\tau} | A = 1, Y= 1] = \EE_\PP[\mathbbm{1}_{h(X)\ge\tau}| A= 0, Y = 1],
    \]
    and one can potentially employ a Welch's $t$-test with proper adjustment for the randomness of the sample size. Unfortunately, deriving the test becomes complicated when the null hypothesis involves an equality of multi-dimensional quantities, which arises in the case of equalized odds, due to the complication of the covariance terms. Variations of the permutation tests were also proposed to detect discriminatory behaviour of machine learning algorithms following the same formulation of the one-dimensional two-sample equality of conditional mean test \cite{ref:diciccio2020evaluating, ref:tramer2017fairtest}. However, these permutation tests follow a black-box mechanism and are unable to be generalized to multi-dimensional tests. Tests based on group fairness notions can also be accomplished using an algorithmic approach as in~\cite{ref:diciccio2020evaluating, ref:saleiro2018aequitas, ref:del2018obtaining}.

    From a broader perspective, deriving tests for fairness is an active area of research, and many testing procedures have been recently proposed to test for individual fairness \cite{ref:xue2020auditing, ref:john2020verifying}, for counterfactual fairness \cite{ref:black2020fliptest, ref:garg2019counterfactual} and diverse other criteria \cite{ref:bellamy2018ai, ref:wexler2019if, ref:tramer2017fairtest}.

\noindent \textbf{Literature related to optimal transport.} Optimal transport is a long-standing field that dates back to the seminal work of Gaspard Monge~\cite{monge1781memoire}. In the past few years, it has attracted significant attention in the machine learning and computer science communities thanks to the availability of fast approximation algorithms~\cite{sinkhorn, dvurechensky2018computational, benamou2015iterative, blondel2017smooth, genevay2016stochastic}. Optimal transport is particularly successful in various learning tasks, notably generative mixture models \cite{kolouri2017optimal, nguyen2013convergence}, image processing \cite{alvarez2017structured, ferradans2014regularized, kolouri2015transport, papadakis2017convex, tartavel2016wasserstein}, computer vision and graphics \cite{pele2008linear, pele2009fast, rubner2000earth, solomon2014earth, solomon2015convolutional}, clustering \cite{ho2017multilevel}, dimensionality reduction \cite{cazelles2018geodesic, flamary2018wasserstein, rolet2016fast, schmitzer2016sparse, seguy2015principal}, domain adaptation \cite{courty2016optimal, murez2018image}, signal processing \cite{thorpe2017transportation} and data-driven distributionally robust optimization~\cite{ref:kuhn2019wasserstein, ref:blanchet2019robust, ref:gao2017wasserstein, ref:zhao2018data}. Recent comprehensive survey on optimal transport and its applications can be found in~\cite{peyre2019computational, kolouri2017optimal}.

In the context of fair classification, ideas from optimal transport have been used to construct fair logistic classifier~\cite{ref:taskesen2020distributionally}, to detect classifiers that does not obey group fairness notions, or to ensure fairness by pre-processing \cite{ref:del2018obtaining}, to learn a fair subspace embedding that promotes fair classification~\cite{ref:yurochkin2020training}, to test individual fairness~\cite{ref:xue2020auditing}, or to construct a counterfactual test \cite{ref:black2020fliptest}.
% \begin{itemize}

% \item Wasserstein distance takes a crucial role in the context of distributionally robust optimization, which is an area provoked by the ambiguity in the unknown data generating distribution $\PP$. 
% \item Robust optimization community is mainly attracted by the measure concentration properties of Wasserstein distance, and thus endows the ambiguity set as ball around empirical distribution $\wh \PP^N$ using Wasserstein distance 
% \item Moreover, Wasserstein distance by definition inherits the geometry of the feature space, and thus provides the flexibility to the regulator to posses structures that would be beneficial for the test.
% Moreover, Wasserstein distributional robustness offers probabilistic interpretations for popular regularization techniques \cite{ ref:shafieezadeh2019regularization,  ref:abadeh2015distributionally}.
%    \end{itemize}
%%%%%%%%%%%%%%%%%%%
\section{Wasserstein Projection Framework for Statistical Test of Fairness}
\label{sec:approach}

We hereby provide a fresh alternative to the testing problem of machine learning fairness. On that purpose, for a given classifier $h$, we define abstractly the following set of distributions
\be \label{eq:F-def}
    \mc F_h = \left\{ \QQ \in \mc P:~ \text{ the classifier $h$ is fair relative to } \QQ \right\},
\ee
where $\mc P$ denotes the space of all distributions on $\mc X \times \mc A \times \mc Y$. Intuitively, the set $\mc F_h$ contains all probability distributions under which the classifier $h$ satisfies the prescribed notion of fairness. It is trivial to see that if $\mc F_h$ contains the true data-generating distribution $\PP$, then the classifier $h$ is fair relative to $\PP$. Thus, we can reinterpret the hypothesis test of fairness using the hypotheses
\begin{center}
    $\mc H_0$: $\PP \in \mc F_h$, \hspace{1cm} $\mc H_1$: $\PP \not \in \mc F_h$.
\end{center}
Testing the inclusion of $\PP$ in $\mc F_h$ is convenient if $\mc P$ is endowed with a distance.
In this paper, we equip $\mc P$  with the Wasserstein distance.
\begin{definition}[Wasserstein distance]
    The type-$2$ Wasserstein distance between two probability distributions $\QQ$ and $\QQ'$ supported on $\Xi$ is defined as 
    \[
    \Wass(\QQ',\QQ) = \Min{\pi  \in \Pi(\QQ', \QQ)} \sqrt{\EE_\pi[ c(\xi',\xi)^2]},
    \] 
    where the set $\Pi (\QQ', \QQ)$ contains all joint distributions of the random vectors $\xi'\in \Xi$ and $\xi\in \Xi$ under which $\xi'$ and $\xi$ have marginal distributions $\QQ'$ and $\QQ$, respectively, and $c:\Xi\times\Xi\rightarrow [0,\infty]$ constitutes a lower semi-continuous ground metric.
\end{definition}
The type-2 Wasserstein distance\footnote{From this point, we omit the term ``type-2'' for brevity.} is a special instance of the optimal transport. The squared Wasserstein distance between $\QQ'$ and $\QQ$ can be interpreted as the cost of moving the distribution $\QQ'$ to $\QQ$, where $c(\xi', \xi)$ is the cost of moving a unit mass from $\xi'$ to $\xi$. Being a distance on $\mc P$, $\Wass$ is symmetric, non-negative and vanishes to zero if $\QQ' = \QQ$. The Wasserstein distance is hence an attractive measure to identify if $\PP$ belongs to $\mc F_h$. Using this insight, the hypothesis test for fairness has the equivalent representation 
\begin{center}
    $\mc H_0$: $\inf_{\QQ \in \mc F_h} \Wass(\PP, \QQ) = 0$, \hspace{0.5cm} $\mc H_1$: $\inf_{\QQ \in \mc F_h} \Wass(\PP, \QQ) > 0$.
\end{center}
% \bahar{Testing for conditional mean --- discontinuous function it is difficult to analyse the limiting distribution...
% Conventionally, a hypothesis test is formulated with the null hypothesis $\mc H_0$ and the alternative hypothesis $\mc H_1$ defined as
% \[
%     \left\{
%         \begin{array}{l}
%         \mc H_0: \PP(h_\beta(X) \ge \tau | A = 1, Y = 1) = \PP(h_\beta(X) \ge \tau | A = 0, Y = 1)\\
%         \mc H_1: \PP(h_\beta(X) \ge \tau | A = 1, Y = 1) \not= \PP(h_\beta(X) \ge \tau | A = 0, Y = 1)
%         \end{array}
%     \right.
% \]

% Difficult with this approach:
% \begin{itemize}
%     \item need some literature review here
%     \item this can be converted to a test of equal conditional mean
%     \item difficulty: possibly some bootstrap has to be involved
% \end{itemize}
% }
Even though $\PP$ remains elusive to our knowledge, we are given access to a set of i.i.d~test samples $\{(\wh x_i, \wh a_i, \wh y_i)\}_{i=1}^N$ generated from the true distribution $\PP$. Thus we can rely on the empirical value
\[
\inf_{\QQ \in \mc F_h}~\Wass(\Pnom^N, \QQ),
\]
which is the distance from the empirical distribution supported on the samples $\Pnom^N = \sum_{i=1}^N \delta_{(\wh x_i, \wh a_i, \wh y_i)}$ to the set $\mc F_h$. To perform the test, it is sufficient to study the limiting distribution of the test statistic using proper scaling under the null hypothesis $\mc H_0$. The outcome of the test is determined by comparing the test statistic to the quantile value of the limiting distribution at a chosen level of significant $\alpha \in (0, 1)$.

\noindent \textbf{Advantages.} The Wasserstein projection framework to hypothesis testing that we described above offers several advantages over the existing methods.
\begin{enumerate}[leftmargin=5mm]
    \item Geometric flexibility: The definition of the Wasserstein distance implies that there exists a joint ground metric $c$ on the space of the features, the sensitive attribute and the label. If the modelers or the regulators possess any structural information on an appropriate metric on $\Xi = \mc X \times \mc A \times \mc Y$, then this information can be exploited in the testing procedure. Thus, the Wasserstein projection framework equips the users with an additional freedom to inject prior geometric  information into the statistical test.
    \item Mutivariate generalizability: Certain notions of fairness, such as equalized odds, are prescribed using multiple equalities of conditional expectations. The Wasserstein projection framework encapsulates these equalities simultaneously in the definition of the set $\mc F_h$, and provides a \textit{joint} test of these equalities without the hassle of decoupling and testing individual equalities as being done in the currently literature.
    \item Interpretability: If we denote by $\QQ\opt$ the projection of the empirical distribution $\Pnom^N$ onto the set of distributions $\mc F_h$, i.e.,
    \[
        \QQ\opt = \arg \Min{\QQ \in \mc F_h}~\Wass(\Pnom^N, \QQ),
    \]
    then $\QQ\opt$ encodes the minimal perturbation to the empirical samples so that the classifier $h$ becomes fair. The distribution $\QQ\opt$ is thus termed the most favorable distribution, and examining $\QQ\opt$ can reveal the underlying mechanism and explain the outcome of the hypothesis test. The accessibility to $\QQ\opt$ showcases the expressiveness of the Wasserstein projection framework. 
\end{enumerate}

Whilst theoretically sound and attractive, there are three potential difficulties with the Wasserstein projection approach to statistical test of fairness.
First, to project $\Pnom^N$ onto the set $\mc F_h$, we need to solve an infinite-dimensional optimization problem, which is inherently difficult. Second, for many notions of machine learning fairness such as the equality of opportunity and the equalized odds, the corresponding set $\mc F_h$ in~\eqref{eq:F-def} is usually prescribed using \textit{non}linear constraints. For example, if we consider the equal opportunity criterion in Definition~\ref{def:EO}, then the set $\mc F_h$ can be re-expressed using a fractional function of the probability measure as
    \begin{align*}
        \mc F_h =\left\{ \begin{array}{l}
            \QQ \in \mc P \text{ such that } \ds \frac{\QQ(h(X) \ge \tau, A = 1, Y = 1)}{\QQ(A = 1, Y=1)} = \frac{\QQ(h(X) \ge \tau, A = 0, Y = 1)}{\QQ(A = 0, Y=1)}
        \end{array}
        \right\}.
    \end{align*}
    Apart from involving nonlinear constraints, it is easy to verify that the set $\mc F_h$ is also non-convex, which amplifies the difficulty of computing the projection onto $\mc F_h$.
Finally, the limiting distribution of the test statistic is difficult to analyze due to the discontinuity of the probability function at the set $\{x \in \mc X: h(x) = \tau\}$. The asymptotic analysis with this discontinuity is of a combinatorial nature, and is significantly more problematic than the asymptotic analysis of smooth quantities.

While these difficulties may be overcome via various ways, in this paper we choose the following combination of remedies. First, we will use a relaxed notion of fairness termed \textit{probabilistic fairness}, which was originally introduced in \cite{ref:pleiss2017fairness}. 
Second, when computing the Wasserstein distances between distributions on $\mc X \times \mc A \times \mc Y$, we use
\be
\label{eq:cost}
    c\big( (x', a', y'),  (x, a, y) \big) = \| x - x'\| + \infty | a - a'| + \infty | y - y'|
\ee
as the ground metric, where $\|\cdot\|$ is a norm on $\R^d$. This case corresponds to having an absolute trust in the label and in the sensitive attribute of the training samples. This absolute trust restriction is common in the literature of fair machine learning~\cite{ref:xue2020auditing, ref:taskesen2020distributionally}. 

We now briefly discuss the advantage of using the ground metric of the form~\eqref{eq:cost}. Denote by $p \in \R_{++}^{|\mc A| \times |\mc Y|}$ the array of the true marginals of $(A, Y)$, in particular, $ p_{{a} {y}} = \PP(A = a, Y = y)$ for all $a \in \mc A$ and $y \in \mc Y$.
Further, let $\wh p^N \in \R_{++}^{|\mc A| \times |\mc Y|}$ be the array of the empirical marginals of $(A, Y)$ under the empirical measure $\Pnom^N$, that is, $\wh p^N_{{a} {y}} = \Pnom^N(A = a, Y = y)$ for all $a \in \mc A$ and $y \in \mc Y$.
Throughout this paper, we assume that the empirical marginals are proper, that is, $\wh p_{ay}^N \in (0, 1)$ for any $(a, y) \in \mc A \times \mc Y$.
We define temporarily the simplex set $\Delta \defeq \{ \bar p\in \R_{++}^{|\mc A|\times |\mc Y|}:\sum_{a \in \mc A, y \in \mc Y} \bar p_{ay} = 1 \}$.
Subsequently, for any marginals $\bar p \in \Delta$, we define the marginally-constrained set of distributions
\[
    \mc F_h(\bar p) \Let \left\{ \QQ \in \mc P:
    \begin{array}{l} h \text{ is fair relative to } \QQ \\
    \QQ(A = a, Y = y) = \bar p_{ay} ~~ \forall (a, y) \in \mc A \times \mc Y
    \end{array}
    \right\}.
\]
Using these notations, one can readily verify that
\[
    \mc F_h= \cup_{\bar p \in \Delta} \mc F_h(\bar p).
\]
Moreover, the next result asserts that in order to compute the projection of $\Pnom^N$ onto $\mc F_h$, to suffices to project onto the marginally-constrained set $\mc F_h(\wh p^N)$.
\begin{lemma}[Projection with marginal restrictions] \label{lemma:marginal}
    Suppose that the ground metric is chosen as in~\eqref{eq:cost}. If a measure $\QQ \in \mc F_h$ satisfies $\Wass(\Pnom^N, \QQ) < \infty$, then $\QQ \in \mc F_h(\wh p^N)$.   
\end{lemma}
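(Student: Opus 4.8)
The plan is to exploit the fact that the ground metric in~\eqref{eq:cost} places an infinite penalty on any transport that alters the sensitive attribute or the label, so that a finite-cost coupling cannot move mass across distinct $(a,y)$ cells. Concretely, since $\Wass(\Pnom^N, \QQ) < \infty$, there exists a coupling $\pi \in \Pi(\Pnom^N, \QQ)$ with finite expected squared cost, $\EE_\pi[c((\wh x, \wh a, \wh y), (x, a, y))^2] < \infty$, where the first copy $(\wh x, \wh a, \wh y)$ is distributed according to $\Pnom^N$ and the second copy $(x,a,y)$ according to $\QQ$. Because $c \geq 0$, the finiteness of this expectation forces $c < \infty$ to hold $\pi$-almost surely.

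Next I would read off what $c < \infty$ means for the discrete coordinates. Under the convention $0 \cdot \infty = 0$, the representation~\eqref{eq:cost} shows that $c((\wh x, \wh a, \wh y),(x,a,y)) < \infty$ if and only if $\wh a = a$ and $\wh y = y$, in which case the finite value equals $\|\wh x - x\|$. Hence, $\pi$-almost surely the two copies of the pair $(A,Y)$ carried by the coupling coincide: the pair sampled from $\Pnom^N$ and the pair sampled from $\QQ$ share the same value of $(a,y)$.

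The conclusion then follows by matching marginals. Since the two copies of $(A,Y)$ agree $\pi$-almost surely, their laws under $\pi$ are identical; the law of the first copy is the $(A,Y)$-marginal of $\Pnom^N$, namely $\wh p^N$, whereas the law of the second copy is the $(A,Y)$-marginal of $\QQ$. Therefore $\QQ(A=a, Y=y) = \wh p^N_{ay}$ for all $(a,y) \in \mc A \times \mc Y$. Combined with the hypothesis $\QQ \in \mc F_h$, which guarantees that $h$ is fair relative to $\QQ$, this places $\QQ$ in the marginally-constrained set $\mc F_h(\wh p^N)$, as claimed.

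The argument is short, and the only point requiring care is the treatment of the extended-real-valued cost: one must verify that any coupling with $\pi(c = \infty) > 0$ necessarily incurs infinite expected squared cost, so that finiteness of $\Wass(\Pnom^N, \QQ)$ genuinely prohibits transporting mass across different $(a,y)$ cells. This is where the substance of the lemma resides; everything else is bookkeeping on the discrete marginals.
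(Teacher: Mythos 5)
Your proof is correct and follows essentially the same approach as the paper: both arguments rest on the observation that the infinite penalty on $|a-a'|$ and $|y-y'|$ in the ground metric forces any finite-cost coupling to preserve the pair $(A,Y)$ almost surely, hence the $(A,Y)$-marginals of $\QQ$ and $\Pnom^N$ must coincide. The paper merely packages this as a proof by contradiction (assuming a marginal mismatch and exhibiting an index $i\opt$ whose conditional distribution transports mass across cells, yielding infinite cost), whereas you argue directly; the substance is identical.
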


A useful consequence of Lemma~\ref{lemma:marginal} is that 
\be \label{eq:relation}
\inf_{\QQ \in \mc F_h}~\Wass(\Pnom^N, \QQ) =  \inf_{\QQ \in \mc F_h(\wh p^N)}~\Wass(\Pnom^N, \QQ),
\ee
where the feasible set of the problem on the right-hand side is the marginally-constrained set $\mc F_h(\wh p^N)$ using the empirical marginals $\wh p^N$. For two notions of probabilistic fairness that we will explore in this paper, projecting $\Pnom^N$ onto $\mc F_h(\wh p^N)$ is arguably easier than onto $\mc F_h$. Thus, this choice of ground metric improves the  tractability when computing the test statistic.

Third, and finally, we will focus on the logistic regression setting, which is one of the most popular classification methods \cite{ref:hosmer2013applied}. In this setting, the conditional probability $\PP[Y=1 | X=x]$ is modelled by the sigmoid function
\[h_\beta(x) = \frac{1}{1 + \exp(-\beta^\top x)},\]
where $\beta \in \R^d$ is the regression parameter. Moreover, a classifier with $\beta = 0$, is trivially fair. Thus, it suffices to consider $\beta \neq 0$.

\noindent \textbf{Notations. } We use $\|\cdot\|_*$ to denote the dual norm of $\|\cdot\|$. For any integer $N$, we define $[N] \defeq \{1, 2, \ldots, N\}$. Given $N$ test samples $(\wh x_i, \wh a_i, \wh y_i)_{i=1}^N$, we use $\mc I_y \Let \{ i \in [N]: \wh y_i = y\}$ to denote the index set of observations with label $y$. The parameters $\lambda_i$ are defined as
    \be \label{eq:lambdai-def}
    \forall i \in [N]: \quad \lambda_i = \begin{cases}
        (\wh p_{11}^N)^{-1} & \text{if } (\wh a_i, \wh y_i) = (1, 1),\\
        - (\hat p_{01}^N)^{-1} &\text{if } (\wh a_i, \wh y_i) = (0, 1), \\
        (\wh p_{10}^N)^{-1} & \text{if } (\wh a_i, \wh y_i) = (1, 0),\\
        - (\hat p_{00}^N)^{-1} &\text{if } (\wh a_i, \wh y_i) = (0, 0).
    \end{cases}
    \ee
%%%%%%%%%%%%%%%%%%%
\section{Testing Fairness for Probabilistic Equal Opportunity Criterion}
\label{sec:EO}

In this section, we use the ingredients introduced in the previous section to concretely construct a statistical test for the fairness of a logistic classifier $h_\beta$. Specifically, we will employ the probabilistic equal opportunity criterion which was originally proposed in~\cite{ref:pleiss2017fairness}.
\begin{definition}[Probabilistic equal opportunity criterion \cite{ref:pleiss2017fairness}]
\label{def:proba-EO}
    A logistic classifier $h_\beta:\mc X \to [0, 1]$ satisfies the probabilistic equalized opportunity criteria relative to a distribution $\QQ$ if
    \[
        \EE_{\QQ}[h_\beta(X) | A = 1, Y = 1] = \EE_{\QQ}[h_\beta(X) | A = 0, Y = 1].
    \]
\end{definition}

The probabilistic equal opportunity criterion, which serves as a surrogate for the equal opportunity criterion in Definition~\ref{def:EO},  depends on the smooth and bounded sigmoid function $h_\beta$ but is independent of the classification threshold $\tau$. 
Motivated by \cite{ref:lohaus2020too}, we empirically illustrate in Figure~\ref{fig:unfairness_landscape} that the probabilistic surrogate provides a good approximation of the equal opportunity criterion. Figure~\ref{fig:unfairness_landscape-opp} plots the absolute difference of the classification probabilities $|\PP(h(X) \ge \half | A=1, Y=1)-\PP(h(X) \ge \half | A=0, Y=1)|$, while Figure~\ref{fig:unfairness_landscape-probopp} plots the absolute difference of the sigmoid expectations $|\EE_\PP[h(X)  | A=1, Y=1]-\EE_\PP[h(X) | A=0, Y=1]|$. One may observe that the regions of $\beta$ so that the absolute differences fall close to zero are similar in both plots. This implies that a logistic classifier $h_\beta$ which is equal opportunity fair is also likely to be \textit{probabilistic} equal opportunity fair, and vice versa.

\begin{figure}[h!]
    \centering
    \hspace{-4mm}
    \begin{subfigure}[t]{0.35 \columnwidth}
    \centering
    \includegraphics[width=\columnwidth]{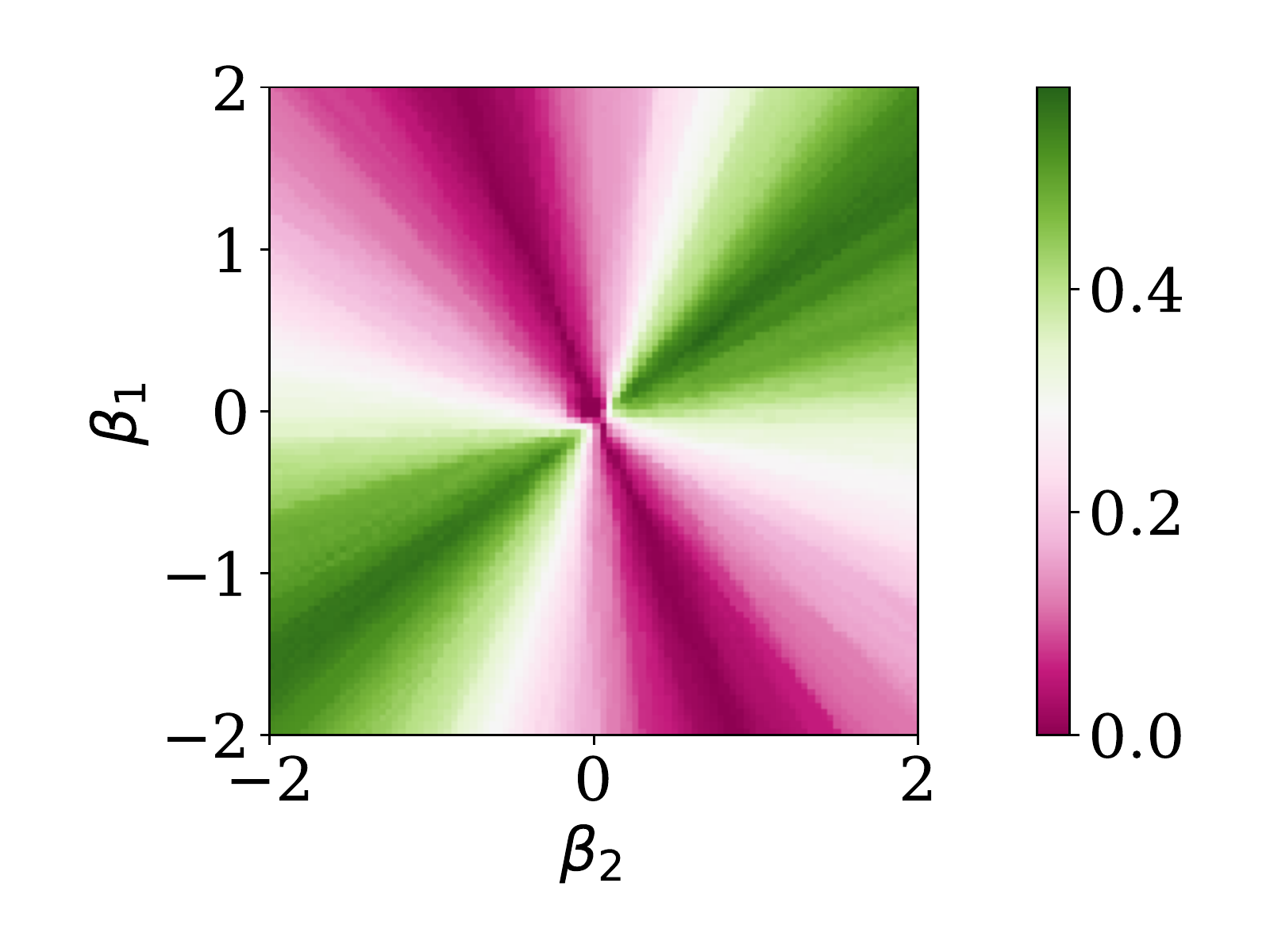}
    \caption{Equal opportunity}
    \label{fig:unfairness_landscape-opp}
    \end{subfigure}\hspace{.6cm}
    \begin{subfigure}[t]{0.375\columnwidth}
    \centering
    \includegraphics[width=0.923\columnwidth]{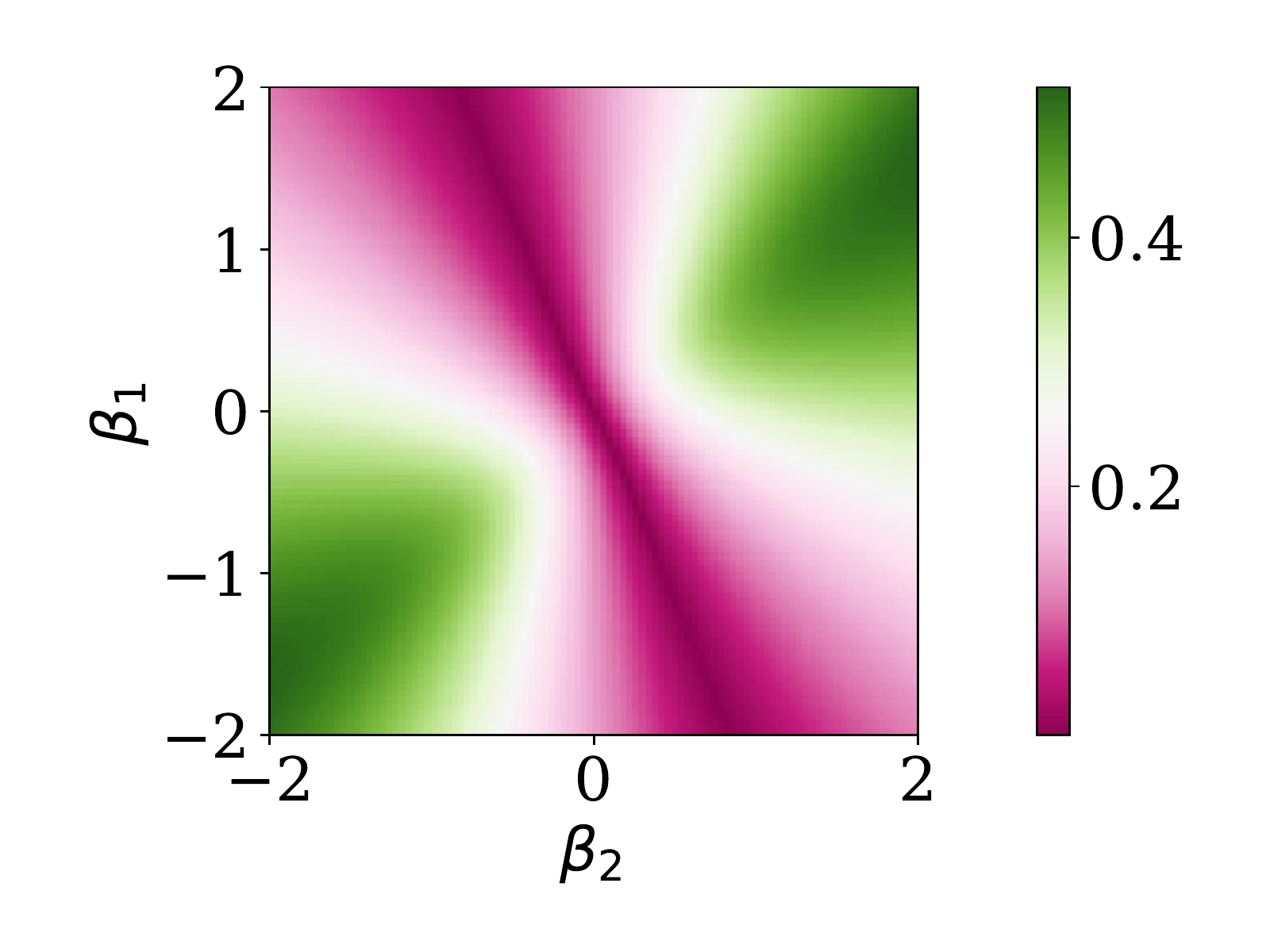}
    \caption{Probabilistic equal opportunity}
    \label{fig:unfairness_landscape-probopp}
    \end{subfigure}
    \vspace{-.1cm}
    \caption{Comparison of fairness notions for $d = 2$ and $h_\beta(x) = 1/(1 + \exp(\frac{1}{3} -\beta_1 x_1 - \beta_2 x_2))$. }
    \label{fig:unfairness_landscape}
    \vspace{-.2cm}
\end{figure}

We use the superscript ``opp'' to emphasize that fairness is measured using the probabilistic equal \textit{opp}ortunity criterion. Consequentially, the set of distributions $\mc F_{h_\beta}^\opp$ that makes the logistic classifier $h_\beta$ fair is
\[
     \mc F_{h_\beta}^\opp = \left\{ 
     \begin{array}{l}
     \QQ \in \mc P \text{ such that }
     \EE_{\QQ}[h_\beta(X) | A=1,Y = 1] = \EE_{\QQ}[h_\beta(X) | A = 0, Y = 1]
    \end{array}
    \right\}.
\]
The statistical hypothesis test to verify whether the classifier $h_\beta$ is fair is formulated with the null and alternative hypotheses
\[
    \mc H_0^\opp: \PP \in \mc F_{h_\beta}^\opp, \quad \mc H_1^\opp: \PP \not\in \mc F_{h_\beta}^\opp.
\]
The remainder of this section unfolds as follows. In Section~\ref{sec:opp-proj}, we delineate the computation of the projection of $\Pnom^N$ onto $\mc F_{h_\beta}^\opp$. Section~\ref{sec:opp-limit} studies the limiting distribution of the test statistic, while Section~\ref{sec:opp-favor} examines the most favorable distribution.

%%%%%%%%%%%%%%%%%%%%%%%%%%%
\subsection{Wasserstein Projection}
\label{sec:opp-proj}
Lemma~\ref{lemma:marginal} suggests that it is sufficient to consider the projection onto the marginally-constrained set $\mc F_{h_\beta}^\opp(\wh p^N)$, where $\wh p^N$ is the empirical marginals of the empirical distribution $\Pnom^N$. In particular, $\mc F_{h_\beta}^\opp(\wh p^N)$ is 
\begin{align*}
    \mc F_{h_\beta}^\opp(\wh p^N) = \left\{
     \QQ \in \mc P: \begin{array}{l}
    ( \wh p_{11}^{N})^{-1} \EE_{\QQ}[h_\beta(X) \mathbbm{1}_{(1, 1)}(A, Y)] = (\wh p_{01}^{N})^{-1} \EE_{\QQ}[h_\beta(X) \mathbbm{1}_{(0, 1)}(A, Y)] \\
     \QQ(A = a, Y = y) = \wh p_{ay}^N ~~\forall (a, y) \in \mc A \times \mc Y
    \end{array}
    \right\},
\end{align*}
where the equality follows from the law of conditional expectation. Notice that the set $\mc F_{h_\beta}^\opp(\wh p^N)$ is prescribed using \textit{linear} constraints of $\QQ$, and thus it is more amenable to optimization than the set $\mc F_{h_\beta}^\opp$. It is also more convenient to work with the \textit{squared} distance function $\mc{R}$ whose input is the empirical distribution $\Pnom^N$ and its corresponding vector of empirical marginals $\wh p^N$ by
\begin{align*}
    \mc{R}^\opp(\Pnom^N, \wh p^N) \defeq\left\{
        \begin{array}{cl}
            \inf & \Wass(\QQ, \Pnom^N)^2 \\
            \st & \EE_{\QQ}[h_\beta(X) ( (\wh p_{11}^N)^{-1} \mathbbm{1}_{(1, 1)}(A, Y) - (\wh p_{01}^N)^{-1} \mathbbm{1}_{(0, 1)}(A, Y))] = 0 \\ [1.5ex]
            & \EE_{\QQ}[\mathbbm{1}_{(a, y)}(A, Y)] = \wh p_{ay}^N \quad \forall (a, y) \in \mc A \times \mc Y.
        \end{array}
    \right.
\end{align*}
Notice that the constraints of the above infimum problem are linear in the measure $\QQ$, but the functions inside the expectation operators are possibly \textit{non}linear functions of $\wh p^N$. Using the equivalent characterization~\eqref{eq:relation}, the following relation holds
\[
     \inf_{\QQ \in \mc F_{h_\beta}^\opp}~\Wass(\Pnom^N, \QQ) =  \inf_{\QQ \in \mc F_{h_\beta}^\opp(\wh p^N)}~\Wass(\Pnom^N, \QQ) =  \sqrt{\mc{R}^\opp(\Pnom^N, \wh p^N)}.
\]
We now proceed to show how computing the projection can be reduced to solving a finite-dimensional optimization problem.

\begin{proposition}[Dual reformulation] \label{prop:R-refor}
    The squared projection distance $\mc{R}^\opp(\Pnom^N, \wh p^N)$ equals to the optimal value of the following finite-dimensional optimization problem 
    \begin{align}
    \label{eq:R_opp_refor_finx}
        \Sup{\dualvar \in \R} ~ \frac{1}{N} \sum_{i \in \mc I_1} \Inf{x_i \in \mc X} \left\{ \| x_i - \wh x_i \|^2 + \dualvar \lambda_i h_\beta(x_i) \right\} .
    \end{align}
\end{proposition}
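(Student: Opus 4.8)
The plan is to use the special structure of the ground metric~\eqref{eq:cost} to reduce the infinite-dimensional projection to a decoupled family of scalar problems, and then to dualize the single fairness constraint.

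First I would rewrite $\mc R^\opp$ as an optimization over transport plans. Since the squared ground cost equals $\|x-x'\|^2$ when the sensitive attribute and the label coincide and $+\infty$ otherwise, any plan $\pi$ with first marginal $\Pnom^N$ and finite transport cost must keep each empirical atom within its own $(a,y)$-cell, moving only the feature coordinate. Hence I can parametrize such a plan as $\pi=\frac1N\sum_{i=1}^N \delta_{(\wh x_i,\wh a_i,\wh y_i)}\otimes\mu_i$, where each $\mu_i$ is a probability measure on $\mc X$ describing where the $i$-th feature is transported. With this parametrization the marginal constraints $\QQ(A=a,Y=y)=\wh p_{ay}^N$ hold automatically (this is exactly the content of Lemma~\ref{lemma:marginal}), the objective becomes $\frac1N\sum_{i=1}^N\EE_{\mu_i}[\|x-\wh x_i\|^2]$, and, because the indicators in the fairness constraint vanish off the cells $(1,1)$ and $(0,1)$, that constraint collapses to $\frac1N\sum_{i\in\mc I_1}\lambda_i\,\EE_{\mu_i}[h_\beta(x)]=0$, using the coefficients $\lambda_i$ from~\eqref{eq:lambdai-def}.

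Next I would discard the label-$0$ samples: for $i\notin\mc I_1$ the term $\EE_{\mu_i}[\|x-\wh x_i\|^2]$ does not enter the constraint and is minimized by $\mu_i=\delta_{\wh x_i}$, contributing zero. The problem therefore reduces to minimizing $\frac1N\sum_{i\in\mc I_1}\EE_{\mu_i}[\|x-\wh x_i\|^2]$ over $\{\mu_i\}_{i\in\mc I_1}$ subject to the single scalar equality above. I would then attach a multiplier $\dualvar\in\R$ to this equality, form the Lagrangian, and interchange the infimum over $\{\mu_i\}$ with the supremum over $\dualvar$. Once the interchange is licensed, the inner infimum separates across $i\in\mc I_1$, and for each $i$ minimizing $\EE_{\mu_i}[\,\|x-\wh x_i\|^2+\dualvar\lambda_i h_\beta(x)\,]$ over all probability measures equals the pointwise infimum $\inf_{x_i\in\mc X}\{\|x_i-\wh x_i\|^2+\dualvar\lambda_i h_\beta(x_i)\}$, since the integrand is continuous and coercive (as $h_\beta$ is bounded), so placing all mass at its minimizer is optimal. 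This produces exactly~\eqref{eq:R_opp_refor_finx}.

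The main obstacle is to justify the inf--sup interchange, i.e.\ strong duality, since the dual variable ranges over the unbounded set $\R$ and the primal is an infinite-dimensional linear program over measures. Weak duality (the ``$\ge$'' direction) is immediate from plugging any feasible $\{\mu_i\}$ into the Lagrangian. For the reverse inequality I would verify a Slater-type constraint qualification: because $h_\beta$ takes values in the open interval $(0,1)$ while the coefficients $\lambda_i$ carry both signs --- the properness assumption $\wh p_{ay}^N\in(0,1)$ guarantees that $\mc I_1$ contains at least one index with $(\wh a_i,\wh y_i)=(1,1)$ and at least one with $(\wh a_i,\wh y_i)=(0,1)$ --- the constraint functional $\frac1N\sum_{i\in\mc I_1}\lambda_i\EE_{\mu_i}[h_\beta]$ attains values on both sides of $0$, so $0$ lies in the interior of its range. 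This qualification, together with the known strong-duality theorem for moment-constrained optimal transport, closes the ``$\le$'' direction and yields the claimed equality.
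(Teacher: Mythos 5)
Your proposal is correct and rests on the same mathematical engine as the paper's proof --- strong duality for a moment-constrained transport problem, certified by a Slater/interiority condition that exploits the fact that $h_\beta$ ranges over $(0,1)$ while the coefficients $\lambda_i$ take both signs on the nonempty cells $(1,1)$ and $(0,1)$ --- but you organize the argument differently. The paper keeps the problem as a generalized moment problem with $N+1$ constraints (the fairness constraint plus the $N$ atom-mass constraints $\EE_\pi[\mathbbm{1}_{\wh \xi_i}(\xi')]=1/N$), verifies a Slater condition for the full $(N+1)$-dimensional moment map (Lemma~\ref{lemma:slater-opp}, which requires an explicit construction of measures hitting every $q$ in a small box), invokes the duality theorem of \cite{ref:smith1995generalized}, and then optimizes out the $N$ dual variables $b_i$ to recover the per-sample infima. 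You instead disintegrate the plan as $\pi=\frac1N\sum_i\delta_{(\wh x_i,\wh a_i,\wh y_i)}\otimes\mu_i$ first, which makes the atom-mass and marginal constraints automatic, so only the single scalar fairness constraint remains to be dualized and the constraint qualification collapses to checking that $0$ is interior to the range of one real-valued functional --- a one-line verification rather than the paper's box construction. The two routes yield the same dual (your Lagrangian is exactly the paper's dual after eliminating $b$), and your final steps (replacing $\inf_{\mu_i}\EE_{\mu_i}[g]$ by $\inf_{x}g(x)$ for a function bounded below, and dropping the indices in $\mc I_0$ because their terms neither enter the constraint nor contribute cost) are both sound. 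The only place where you are terser than the paper is the appeal to ``the known strong-duality theorem'': to make this self-contained you would note that the perturbed value function $\epsilon\mapsto v(\epsilon)$ of your single-constraint convex program is a proper convex function of one real variable, finite on a neighborhood of $0$ by your interiority argument, hence subdifferentiable at $0$, which delivers the multiplier $\gamma^\star$ and zero duality gap; this is a legitimate and arguably more elementary substitute for the citation the paper uses.
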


%%%%%%%%%%%%%%%%%%%%%%%
While Proposition~\ref{prop:R-refor} asserts that computing the \textit{squared} projection distance $\mc{R}^\opp(\Pnom^N, \wh p^N)$ is equivalent to solving a finite-dimensional problem, unfortunately, this saddle point problem is in general difficult. Indeed, because $h_\beta$ is non-convex, even finding the optimal inner solution $x_i\opt$ for a fixed value of the outer variable $\gamma \in \R$ is generally NP-hard \cite{ref:murty1985some}. The situation can be partially alleviated if $\| \cdot \|$ is an Euclidean norm on $\R^d$.

\begin{lemma}[Univariate reduction] \label{lemma:R-compute}
    Suppose that $\|\cdot\|$ is the Euclidean norm on $\R^d$, we have
    \begin{align}
        \mc{R}^\opp(\Pnom^N, \wh p^N)=
         \Sup{\dualvar \in \R} { \frac{1}{N}\sum_{i \in \mc I_1} \Min{ k_i \in [0, \frac{1}{8}]}~ \gamma^2\lambda_i^2 \| \beta\|_2^2 k_i^2 + \frac{\gamma \lambda_i}{1 + \exp(\gamma \lambda_i \| \beta\|_2^2 k_i - \beta^\top \wh x_i)}}. \label{eq:R-refor-opp-2}
    \end{align}
\end{lemma}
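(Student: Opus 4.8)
The plan is to fix the dual variable $\gamma \in \R$ and an index $i \in \mc I_1$ and to evaluate, in closed form, the inner minimization $\Inf{x_i \in \mc X}\{\|x_i - \wh x_i\|_2^2 + \gamma \lambda_i h_\beta(x_i)\}$ that appears in Proposition~\ref{prop:R-refor}; since the outer supremum over $\gamma$ and the sum over $\mc I_1$ are untouched, it suffices to rewrite each inner term. Writing $\sigma(s) = 1/(1+\exp(-s))$ for the scalar sigmoid, we have $h_\beta(x_i) = \sigma(\beta^\top x_i)$, so the objective depends on $x_i$ only through the scalar $\beta^\top x_i$ and through the Euclidean distance to $\wh x_i$. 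I would therefore first split the minimization: for a fixed value $s = \beta^\top x_i$, minimize $\|x_i - \wh x_i\|_2^2$ over the hyperplane $\{x_i : \beta^\top x_i = s\}$. Because $\beta \neq 0$, the Euclidean projection of $\wh x_i$ onto this hyperplane gives the minimal squared distance $(s - \beta^\top \wh x_i)^2/\|\beta\|_2^2$, reducing the inner problem to the one-dimensional program $\Inf{s \in \R} (s - \beta^\top \wh x_i)^2/\|\beta\|_2^2 + \gamma\lambda_i \sigma(s)$.

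The second step is a change of variables. For $\gamma\lambda_i \neq 0$ the affine map $s \mapsto k_i = (\beta^\top \wh x_i - s)/(\gamma\lambda_i \|\beta\|_2^2)$ is a bijection of $\R$ onto $\R$; substituting $s = \beta^\top \wh x_i - \gamma\lambda_i\|\beta\|_2^2 k_i$ turns the quadratic term into $\gamma^2\lambda_i^2\|\beta\|_2^2 k_i^2$ and the sigmoid term into $\gamma\lambda_i/(1 + \exp(\gamma\lambda_i\|\beta\|_2^2 k_i - \beta^\top \wh x_i))$, which is exactly the objective in the claimed formula, but with $k_i$ ranging over all of $\R$. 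The degenerate case $\gamma = 0$, equivalently $\gamma\lambda_i = 0$ since $\lambda_i \neq 0$, is trivial: the inner objective reduces to $\|x_i - \wh x_i\|_2^2$ with minimal value $0$, and the claimed expression also vanishes there.

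The crux is then to justify restricting $k_i$ to $[0, 1/8]$, i.e., to show that the unconstrained minimum over $k_i \in \R$ is already attained in that interval. Denote the univariate objective by $\phi(k)$. Since the sigmoid term is bounded while the leading term $\gamma^2\lambda_i^2\|\beta\|_2^2 k^2$ grows quadratically (using $\gamma\lambda_i \neq 0$), $\phi$ is coercive and continuous, so its global minimum is attained at a finite stationary point. A direct computation gives $\phi'(k) = \gamma^2\lambda_i^2\|\beta\|_2^2\big(2k - \sigma'(\beta^\top \wh x_i - \gamma\lambda_i\|\beta\|_2^2 k)\big)$, so every stationary point satisfies $k = \tfrac12\sigma'(\cdot)$. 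Invoking the elementary bound $\sigma'(t) = \sigma(t)(1 - \sigma(t)) \in (0, 1/4]$ for all $t \in \R$, with maximum $1/4$ at $t = 0$, every stationary point, and in particular the global minimizer, lies in $(0, 1/8] \subseteq [0, 1/8]$. Hence the minimum over $[0, 1/8]$ equals the minimum over $\R$, and substituting the reduced inner term back into Proposition~\ref{prop:R-refor} yields the stated formula. I expect this final coercivity-plus-stationarity argument, together with the sharp bound $\sup_t \sigma'(t) = 1/4$ that pins down the endpoint $1/8$, to be the only genuinely delicate part; the hyperplane projection and the change of variables are routine.
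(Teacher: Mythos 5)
Your proposal is correct and follows essentially the same route as the paper, which relegates the key computation to Lemma~\ref{lemma:individual}: reduce the inner problem to a univariate one along the direction of $\beta$ (your hyperplane projection plus change of variables is the same as the paper's orthogonal decomposition $x = \wh x - k\omega\beta - k'\beta^\perp$), then use the bound $\sigma(1-\sigma) \le 1/4$ on the derivative to confine the minimizer to $[0,\tfrac18]$. The only cosmetic difference is that the paper argues via the sign of the derivative outside $[0,\tfrac18]$ while you argue via coercivity and the location of stationary points; both hinge on the same estimate.
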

The proof of Lemma~\ref{lemma:R-compute} follows trivially from application of Lemma~\ref{lemma:individual} to reformulate the inner infimum problems for each $i \in \mc I_1$. Lemma~\ref{lemma:R-compute} offers a significant reduction in the computational complexity to solve the inner subproblems of~\eqref{eq:R_opp_refor_finx}. Instead of optimizing over $d$-dimensional vector $x_i$, the representation in Lemma~\ref{lemma:R-compute} suggests that it suffices to search over a $1$-dimensional space for $k_i$. While the objective function is still non-convex in $k_i$, we can perform a grid search over a compact interval to find the optimal solution for $k_i$ to high precision. The grid search operations can also be parallelized across the index $i$ thanks to the independent structure of the inner problems. Furthermore, the objective function of the supremum problem is a point-wise minimum of linear, thus concave, functions of $\gamma$.
Hence, the outer problem is a concave maximization problem in $\gamma$, which can be solved using a golden section search algorithm.

%%%%%%%%%%%%%%%%%%%
\subsection{Limiting Distribution}
\label{sec:opp-limit}
We now characterize the limit properties of $\mc{R}^\opp(\Pnom^N, \wh p^N)$. The next theorem assert that the limiting distribution is of the chi-square type.
\begin{theorem}[Limiting distribution -- Probabilistic equal opportunity] \label{thm:limiting-opp}
    Suppose that $(\wh x_i, \wh a_i, \wh y_i)$ are i.i.d.~samples from $\PP$. Under the null hypothesis $\mc H_0^\opp$, we have 
    \begin{align*}
        &N \times \mc{R}^\opp(\Pnom^N, \wh p^N) \xrightarrow{d.} \theta \chi_1^2,
    \end{align*}
    % \begin{align*}
    %     &N \times \mc{R}^\opp(\Pnom^N, \wh p^N) \xrightarrow{d.} \\
    %     &\left(\EE_{\PP} \left[\left\| \nabla h_\beta(X) \left( \frac{\mathbbm{1}_{(1, 1)}(A, Y)}{ p_{11}}- \frac{\mathbbm{1}_{(0, 1)}(A, Y)}{p_{01}} \right) \right \|_*^2 \right] \right)^{-1} Z^2
    % \end{align*}
    % as $N \to \infty$,
    where $\chi_1^2$ is a chi-square distribution with 1 degree of freedom,
    \[
    \theta = \left(\EE_{\PP} \left[\left\| \nabla h_\beta(X) \left( \frac{\mathbbm{1}_{(1, 1)}(A, Y)}{ p_{11}}- \frac{\mathbbm{1}_{(0, 1)}(A, Y)}{p_{01}} \right) \right \|_*^2 \right] \right)^{-1} \frac{\sigma_1^2}{p_{01}^2 p_{11}^2}
    \] 
    with $\sigma_1^2 = \mathrm{Cov}( Z_1)$, and $Z_1$ is the random variable
    \begin{align*}
        Z_1&= h_\beta(X)\left( p_{01}\mathbbm{1}_{(1,1)}( A,Y) 
     -p_{11}\mathbbm{1}_{(0,1)}( A, Y) \right)\\ & \qquad +\mathbbm{1}_{(0,1)}(
    A,Y) \EE_{\PP}[\mathbbm{1}_{(1,1)}( A, Y)
    h_\beta(X)]\\
    &\qquad -\mathbbm{1}_{(1,1)}( A, Y) \EE_{\PP}[\mathbbm{1}_{(0,1)}(
    A, Y) h_\beta(X)].
\end{align*}
\end{theorem}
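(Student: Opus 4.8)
The plan is to start from the dual reformulation in Proposition~\ref{prop:R-refor} and to analyze the saddle point as $N \to \infty$ through a second-order expansion of the inner minimization, followed by a central limit theorem for the outer variable. Introduce the shorthands
\[
A_N \defeq \frac{1}{N}\sum_{i \in \mc I_1} \lambda_i h_\beta(\wh x_i),
\qquad
B_N \defeq \frac{1}{N}\sum_{i \in \mc I_1}\lambda_i^2 \,\|\nabla h_\beta(\wh x_i)\|_*^2 .
\]
The quantity $A_N$ equals $\EE_{\Pnom^N}[h_\beta(X)(\mathbbm{1}_{(1,1)}(A,Y)/\wh p_{11}^N - \mathbbm{1}_{(0,1)}(A,Y)/\wh p_{01}^N)]$, i.e.\ the empirical counterpart of the probabilistic equal-opportunity constraint, whose population value vanishes under $\mc H_0^\opp$; hence $A_N = O_\PP(N^{-1/2})$. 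Since the inner infimum in~\eqref{eq:R_opp_refor_finx} behaves like $\gamma A_N - O(\gamma^2)$, the maximizing dual variable is of order $N^{-1/2}$, which motivates the reparametrization $\gamma = c/\sqrt N$.

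For each $i \in \mc I_1$ I would write $x_i = \wh x_i + \delta_i$ and Taylor-expand the smooth sigmoid, so that
\[
\inf_{x_i}\left\{\|x_i - \wh x_i\|^2 + \gamma \lambda_i h_\beta(x_i)\right\}
= \gamma \lambda_i h_\beta(\wh x_i)
+ \inf_{\delta_i}\left\{\|\delta_i\|^2 + \gamma \lambda_i \nabla h_\beta(\wh x_i)^\top \delta_i\right\}
+ (\text{remainder}),
\]
where the leading minimization evaluates to $-\tfrac14 \gamma^2 \lambda_i^2 \|\nabla h_\beta(\wh x_i)\|_*^2$ by the definition of the dual norm. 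Substituting $\gamma = c/\sqrt N$ and multiplying through by $N$ yields
\[
N \cdot \mc R^\opp(\Pnom^N, \wh p^N)
= \sup_{c \in \R}\left\{ c\,\sqrt N\, A_N - \tfrac14 c^2 B_N + R_N(c)\right\},
\]
with $R_N(c)$ collecting the Taylor remainders. The main obstacle is to show that $R_N(c) \to 0$ in probability uniformly for $c$ in compact sets, and that the supremum may be restricted to a compact set. The restriction to compact $c$ follows from the concavity of the outer objective in $\gamma$ (it is a pointwise infimum of affine functions) together with the fact that $B_N$ is bounded away from zero; the uniform remainder control is the genuinely technical step, since $h_\beta$ is nonconvex and the exact inner infimum is not the quadratic surrogate. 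It does follow, however, from the uniform boundedness of the Hessian of the sigmoid and the envelope $|\lambda_i| \le (\min_{a,y}\wh p_{ay}^N)^{-1}$, which force the true minimizer to lie within $O(c/\sqrt N)$ of $\wh x_i$ and make each remainder $O(c^3 N^{-3/2})$, so that their sum over the $O(N)$ indices is $O_\PP(c^3 N^{-1/2}) \to 0$.

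Next I would identify the two random coefficients. By the law of large numbers, using $\wh p^N \to p$ almost surely, $B_N \xrightarrow{p} B \defeq \EE_\PP[\|\nabla h_\beta(X)(\mathbbm{1}_{(1,1)}(A,Y)/p_{11} - \mathbbm{1}_{(0,1)}(A,Y)/p_{01})\|_*^2]$, which is exactly the bracketed expectation appearing in $\theta^{-1}$. For the linear coefficient, $A_N$ is a smooth plug-in functional of the empirical means $\EE_{\Pnom^N}[\mathbbm{1}_{(a,1)}(A,Y)h_\beta(X)]$ and $\wh p_{a1}^N$. A delta-method computation — equivalently, the influence-function linearization of $A_N$ about its population value, which is zero under $\mc H_0^\opp$ — gives $\sqrt N\, A_N \xrightarrow{d} \mathcal N(0,\, \sigma_1^2/(p_{01}^2 p_{11}^2))$. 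Here the two correction terms in $Z_1$ are precisely the contributions of the randomness of the denominators $\wh p_{11}^N, \wh p_{01}^N$, and a short calculation shows $Z_1 = p_{01}p_{11}$ times this influence function, so that $\Cov(Z_1) = \sigma_1^2$ reproduces the stated variance up to the factor $p_{01}^2 p_{11}^2$.

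Finally I would assemble the pieces. Setting $W_N \defeq \sqrt N\, A_N$, the expansion above gives $N \cdot \mc R^\opp(\Pnom^N, \wh p^N) = \sup_{c}\{c W_N - \tfrac14 c^2 B_N\} + o_\PP(1) = W_N^2/B_N + o_\PP(1)$. Since $W_N \xrightarrow{d} \mathcal N(0,\, \sigma_1^2/(p_{01}^2 p_{11}^2))$ and $B_N \xrightarrow{p} B > 0$, the continuous mapping theorem and Slutsky's lemma yield
\[
N \cdot \mc R^\opp(\Pnom^N, \wh p^N)
\xrightarrow{d}
\frac{1}{B}\,\mathcal N\!\left(0, \frac{\sigma_1^2}{p_{01}^2 p_{11}^2}\right)^2
= \frac{\sigma_1^2}{B\,p_{01}^2 p_{11}^2}\,\chi_1^2
= \theta\,\chi_1^2,
\]
using that the square of a centered Gaussian of variance $v$ is distributed as $v\chi_1^2$. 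This matches the claimed $\theta = B^{-1}\sigma_1^2/(p_{01}^2 p_{11}^2)$. The crux of the whole argument is the uniform remainder control of the third paragraph; once that is in place, the distributional identification is routine delta-method and Slutsky bookkeeping.
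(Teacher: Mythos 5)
Your proposal is correct and follows essentially the same route as the paper's proof: dual reformulation, the $\gamma\mapsto\gamma/\sqrt N$ rescaling with a quadratic expansion of the inner infimum yielding the $-\tfrac14\gamma^2\|\cdot\|_*^2$ term, and a delta-method/Slutsky identification of the Gaussian limit of the scaled empirical constraint (your $\sqrt N A_N$ is the paper's $H^N$, and your influence-function correction terms are exactly the $\wh p^N$-randomness terms in $Z_1$). The only difference is presentational: the paper delegates the compactness and uniform remainder control to Assumptions A4 and 6$'$ and Lemma~4 of the cited RWPI reference, whereas you sketch that control directly via the Lipschitz and bounded-Hessian properties of the sigmoid.
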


\noindent \textbf{Construction of the hypothesis test.} Based on the result of Theorem~\ref{thm:limiting-opp}, the statistical hypothesis test proceeds as follows. Let $\eta_{1-\alpha}^\opp$ denote the $(1-\alpha)\times100\%$ quantile of $\theta \chi_1^2$, where $\alpha \in (0, 1)$ is the predetermined significance level. By Theorem~\ref{thm:limiting-opp}, the statistical decision has the form
\begin{center}
    Reject $\mc H_0^\opp$ if $\wh s_N^\opp > \eta_{1-\alpha}^\opp$
\end{center}
with $
    \wh s_N^\opp = N \times \mc{R}^\opp(\Pnom^N, \wh p^N)$.
The limiting distribution $\theta \chi_1^2$ is nonpivotal because $\theta$ depends on the true distribution $\PP$. Luckily, because the quantile function of $\theta \chi_1^2$ is continuous in $\theta$, if $\wh \theta_N$ is a consistent estimator of $\theta$ then it is also valid to use the quantile of $\wh \theta_N \chi_1^2$ for the purpose of testing. We thus proceed to discuss a consistent estimator $\wh \theta_N$ constructed from the available data. First, notice that $\wh p_{01}^N$ and $\wh p_{11}^N$ are consistent estimator for $p_{01}$ and $p_{11}$. Similarly, the law of large numbers asserts that the denominator term in the definition of $\theta$ can be estimated by the sample average
\begin{align*}
    \EE_{\PP} \left[\left\| \nabla h_\beta(X) \left( \frac{\mathbbm{1}_{(1, 1)}(A, Y)}{ p_{11}}- \frac{\mathbbm{1}_{(0, 1)}(A, Y)}{p_{01}} \right)\right \|_*^2 \right]\\
    &\hspace{-2cm}\approx \wh T^N =\frac{\|\beta\|_*^2}{N} \sum_{i=1}^N h_\beta(\wh x_i)^2 (1- h_\beta(\wh x_i))^2\left ( \frac{ \mathbbm{1}_{(1,1)}(\wh a_i, \wh y_i)}{(\wh p_{11}^N)^2} + \frac{ \mathbbm{1}_{(0,1)}(\wh a_i, \wh y_i) }{ (\wh p_{01}^N)^{2} }\right).
\end{align*}
Under the null hypothesis $\mc H_0^\opp$, $Z_1$ has mean 0. The sample average estimate of $\sigma_1^2$ is $\sigma_1^2 \approx (\wh \sigma^N)^2$ with
\begin{align}
 (\wh \sigma_1^N)^2& = \frac{1}{N} \sum_{i=1}^N \Big[ h_\beta(\wh x_i)\left( p_{01}\mathbbm{1}_{(1,1)}( \wh a_i , \wh y_i) 
 -p_{11}\mathbbm{1}_{(0,1)}( A, Y) \right) \notag \\ 
 & \qquad +\mathbbm{1}_{(0,1)}(
\wh a_i, \wh y_i) \big( \sum_{j=1}^N \mathbbm{1}_{(1,1)}( \wh a_j, \wh y_j)
h_\beta(\wh x_j) \big) \label{eq:sigma-estimate} \\
&\qquad -\mathbbm{1}_{(1,1)}( \wh a_i, \wh y_i) \big( \sum_{j=1}^N \mathbbm{1}_{(0,1)}(
\wh a_j, \wh y_j) h_\beta(\wh x_j)\big) \Big]^2. \notag
\end{align}
Using a nested arguments involving the continuous mapping theorem and Slutsky's theorem, the estimator
\[
\wh \theta^N =  \frac{(\wh \sigma_1^N)^2}{\wh T^N (\wh p_{01}^N)^2 (\wh p_{11}^N)^2}
\]
is consistent for $\theta$. Let the corresponding $(1-\alpha)\times100\%$ quantile of the random variable $\wh \theta^N \chi_1^2$ be $\wh \eta_{1-\alpha}^\opp$. The statistical test decision using the plug-in consistent estimate becomes
\begin{center}
    Reject $\mc H_0^\opp$ if $\wh s_N^\opp > \wh \eta_{1-\alpha}^\opp$.
\end{center}

%%%%%%%%%%%%%%%%%%%%%%%%%%%
\subsection{Most Favorable Distributions}
\label{sec:opp-favor}
We now discuss the construction of the most favorable distribution $\QQ\opt$, the projection of the empirical distribution $\Pnom^N$ onto the set $\mc F_{h_\beta}^\opp$. Intuitively, $\QQ\opt$ is the distribution closest to $\Pnom^N$ that makes $h_\beta$ a fair classifier under the equal opportunity criterion. If $\|\cdot\|$ is the Euclidean norm, the information about $\QQ\opt$ can be recovered from the optimal solution of problem~\eqref{eq:R-refor-opp-2} by the result of the following lemma.
\begin{lemma}[Most favorable distribution]
\label{lemma:favorable-opp}
Suppose that $\|\cdot\|$ is the Euclidean norm. Let $\dualvar\opt$ be the optimal solution of problem~\eqref{eq:R-refor-opp-2}, and for any $i \in \mc I_1$, let $k_i\opt$ be a solution of the inner minimization of~\eqref{eq:R-refor-opp-2} with respect to $\dualvar\opt$. Then the most favorable distribution $\QQ\opt = \arg\Min{\QQ \in \mc F_{h_\beta}^\opp}~\Wass(\Pnom^N, \QQ)$ is a discrete distribution of the form
\[
\QQ\opt = \frac{1}{N} \Big( \sum_{i\in\mc I_0} \delta_{(\wh x_i, \wh a_i, \wh y_i)} + \sum_{i \in \mc I_1} \delta_{(\wh x_i - k_i\opt \dualvar\opt \lambda_i \beta, \wh a_i, \wh y_i)}\Big).
\]
\end{lemma}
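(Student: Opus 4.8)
The plan is to reconstruct the primal optimizer $\QQ\opt$ from the dual optimizer $\dualvar\opt$ furnished by Proposition~\ref{prop:R-refor} and Lemma~\ref{lemma:R-compute}, and then to certify its optimality by a value-matching (sandwich) argument, rather than by characterizing all minimizers. First I would read the candidate relocations directly off the dual. The inner infimum $\inf_{x_i\in\mc X}\{\|x_i-\wh x_i\|^2+\dualvar\lambda_i h_\beta(x_i)\}$ in~\eqref{eq:R_opp_refor_finx} is precisely the constraint-penalized cost of transporting the feature $\wh x_i$, and Lemma~\ref{lemma:R-compute} identifies its Euclidean-norm minimizer as $x_i\opt=\wh x_i-k_i\opt\dualvar\opt\lambda_i\beta$ for $i\in\mc I_1$. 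The indices $i\in\mc I_0$ never enter the dual sum, so they are left in place, $x_i\opt=\wh x_i$. Assembling these atoms with equal mass $1/N$ and their original $(\wh a_i,\wh y_i)$ yields exactly the $\QQ\opt$ of the lemma, whose $(A,Y)$-marginals coincide with $\wh p^N$ by construction.

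Second, I would show that this $\QQ\opt$ lies in $\mc F_{h_\beta}^\opp$. Evaluating the single fairness equality under $\QQ\opt$ collapses, by the definition~\eqref{eq:lambdai-def} of $\lambda_i$ (which encodes exactly the coefficient $(\wh p_{11}^N)^{-1}\mathbbm{1}_{(1,1)}-(\wh p_{01}^N)^{-1}\mathbbm{1}_{(0,1)}$ and annihilates the label-$0$ atoms), to the scalar condition $\tfrac{1}{N}\sum_{i\in\mc I_1}\lambda_i h_\beta(x_i\opt)=0$. This is nothing but the first-order stationarity of the concave outer maximization over $\dualvar$: an envelope (Danskin) computation gives a supergradient of the dual objective as $\tfrac{1}{N}\sum_{i\in\mc I_1}\lambda_i h_\beta(x_i\opt(\dualvar))$, which must contain $0$ at the interior maximizer $\dualvar\opt$. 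Equivalently, in the saddle-point view, $\dualvar\mapsto\Wass(\Pnom^N,\QQ\opt)^2+\dualvar\cdot(\text{fairness LHS at }\QQ\opt)$ is affine and maximized over all of $\R$, forcing the slope, i.e. the fairness LHS, to vanish. Hence $\QQ\opt\in\mc F_{h_\beta}^\opp(\wh p^N)\subseteq\mc F_{h_\beta}^\opp$.

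Third, I would close the optimality argument. Since $\QQ\opt$ is feasible, $\Wass(\Pnom^N,\QQ\opt)^2\ge\mc R^\opp(\Pnom^N,\wh p^N)$. Conversely, once the fairness term is known to vanish, the optimal dual value reduces to $\mc R^\opp(\Pnom^N,\wh p^N)=\tfrac{1}{N}\sum_{i\in\mc I_1}\|x_i\opt-\wh x_i\|^2$; and the identity coupling—matching atom $i$ of $\Pnom^N$ to atom $i$ of $\QQ\opt$, which is admissible because it preserves $(a,y)$ under the cost~\eqref{eq:cost}—transports $\Pnom^N$ to $\QQ\opt$ at cost exactly $\tfrac{1}{N}\sum_{i}\|x_i\opt-\wh x_i\|^2$, giving $\Wass(\Pnom^N,\QQ\opt)^2\le\mc R^\opp(\Pnom^N,\wh p^N)$. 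The two bounds force equality, and since $\QQ\opt$ has marginals $\wh p^N$, relation~\eqref{eq:relation} shows it is optimal for the full projection as well, so $\QQ\opt=\argmin_{\QQ\in\mc F_{h_\beta}^\opp}\Wass(\Pnom^N,\QQ)$.

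The hard part will be the stationarity step and its edge cases. The inner objective in~\eqref{eq:R-refor-opp-2} is non-convex in $k_i$, so the minimizer $x_i\opt(\dualvar)$ need not be unique and the dual objective is only superdifferentiable; I would therefore phrase the envelope identity through the superdifferential and argue that the particular selection $k_i\opt$ returned by the inner solve can be taken to furnish the member that vanishes at $\dualvar\opt$. I would also treat the degenerate case $\dualvar\opt=0$ (the empirical classifier is already probabilistically fair), where $x_i\opt=\wh x_i$ for every $i$, the fairness equality holds trivially, and $\mc R^\opp=0$, so that the stated formula remains valid for any admissible $k_i\opt$. Finally, I would confirm that the supremum over $\dualvar\in\R$ is attained at a finite $\dualvar\opt$—guaranteed by concavity together with coercivity of the outer objective—so that interior stationarity, rather than a boundary analysis at $\pm\infty$, is the operative optimality condition.
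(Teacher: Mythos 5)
Your proposal follows exactly the route the paper sketches and then omits: verify that $\Wass(\Pnom^N,\QQ\opt)^2=\mc R^\opp(\Pnom^N,\wh p^N)$ via the identity coupling built from the inner minimizers, and that $\QQ\opt\in\mc F_{h_\beta}^\opp$ via first-order stationarity (Danskin) of the concave outer maximization over $\dualvar$; your write-up is in fact more detailed than the paper's, which merely asserts these two facts and states that the detailed proof is omitted. The delicacy you flag about non-unique inner minimizers (where $0$ may only lie in the superdifferential as a convex combination over selections, so an arbitrary choice of $k_i\opt$ need not make the fairness constraint vanish exactly) is real, but the paper itself does not address it either, acknowledging the non-uniqueness of $k_i\opt$ only in the remark following the lemma, so there is no gap relative to the paper's own argument.
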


By using the result of Lemma~\ref{lemma:R-compute}, it is easy to verify that $\QQ\opt$ satisfies $\Wass(\QQ\opt, \Pnom^N)^2 = \mc{R}^\opp(\Pnom^N, \wh p^N)$. Moreover, one can also show that $\QQ\opt \in \mc F_{h_\beta}^\opp$. These two observations imply that $\QQ\opt$ is the projection of $\Pnom^N$ onto $\mc F_{h_\beta}^\opp$. The detailed proof is omitted.

Lemma~\ref{lemma:favorable-opp} suggests that in order to obtain the most favorable distribution, it suffices to perturb only the data points with positive label. This is intuitively rational because the notion of probabilistic equality of opportunity only depends on the positive label, and thus the perturbation with a minimal energy requirement should only move sample points with $\wh y_i = 1$. When the underlying geometry is the Euclidean norm, the optimal perturbation of the point $\wh x_i$ is to move it along a line dictated by $\beta$ with a scaling factor $k_i\opt \gamma\opt \lambda_i$. Notice that $\lambda_i$ defined in~\eqref{eq:lambdai-def} are of opposite signs between samples of different sensitive attributes, which implies that it is optimal to perturb $\wh x_i$ in opposite directions dependent on whether $\wh a_i = 0$ or $\wh a_i = 1$. This is, again, rational because moving points in opposite direction brings the clusters of points closer to the others, which reduces the discrepancy in the expected value of $h_\beta(X)$ between subgroups.

As a final remark, we note that $\QQ\opt$ is not necessarily unique. This is because of the non-convexity of the inner problem over $k_i$ in~\eqref{eq:R-refor-opp-2}, which leads to the non-uniqueness of the optimal solution $k_i\opt$ (see Appendix~\ref{sec:app-k-result} and Figure~\ref{fig:non-convex-k}).

%%%%%%%%%%%%%%%%%%%%%%%%%%%
\section{Testing Fairness for Probabilistic equalized odds Criterion}
\label{sec:EOdd}
In this section, we extend the Wasserstein projection framework to the statistical test of probabilistic equalized odds for a pre-trained logistic classifier.
\begin{definition}[Probabilistic equalized odds criterion \cite{ref:pleiss2017fairness}]
    A logistic classifier $h_\beta(\cdot) : \mc X \to [0, 1]$ satisfies the probabilistic equalized odds criteria relative to $\QQ$ if
    \[
        \EE_{\QQ}[h_\beta(X) | A = 1, Y = y] = \EE_{\QQ}[h_\beta(X) | A = 0, Y = y] \quad \forall y \in \mc Y.
    \]
\end{definition}
The notion of probabilistic equalized odds requires that the conditional expectation of $h_\beta$ to be independent of $A$ for any label subgroup, thus it is more stringent than the probabilistic equal opportunity studied in the previous section. We use the superscript ``odd'' in this section to emphasize on this specific notion of fairness. The definition of the probabilistic equalized odds prescribes the following set of distributions
\[
     \mc F_{h_\beta}^\odd = \left\{ 
     \QQ \in \mc P:  \begin{array}{l}
     \EE_{\QQ}[h_\beta(X) | A=1,Y = 1]= \EE_{\QQ}[h_\beta(X) | A = 0, Y = 1]\\
     \EE_{\QQ}[h_\beta(X) | A=1,Y = 0] = \EE_{\QQ}[h_\beta(X) | A = 0, Y = 0]
    \end{array}
    \right\}.
\]
Correspondingly, the Wasserstein projection hypothesis test for probabilisitc equalized odds can be formulated as
\[
    \mc H_0^\odd: \PP \in \mc F_{h_\beta}^\odd, \quad \mc H_1^\odd: \PP \not\in \mc F_{h_\beta}^\odd.
\]

In the sequence, we study the projection onto the manifold $\mc F_{h_\beta}^\odd$ in Section~\ref{sec:odd-proj}. Section~\ref{sec:odd-limit} examines the asymptotic behaviour of the test statistic, and we close this section by studying the most favorable distribution $\QQ\opt$ in Section~\ref{sec:odd-favor}.
%%%%%%%%%%%%%%%%%
\subsection{Wasserstein Projection}
\label{sec:odd-proj}

Following a similar strategy as in Section~\ref{sec:EO}, we define the set 
\begin{align*}
    \mc F_{h_\beta}^\odd(\wh p^N) 
    =\left\{ 
     \QQ \in \mc P : \begin{array}{l}
    ( \wh p_{11}^{N})^{-1} \EE_{\QQ}[h_\beta(X) \mathbbm{1}_{(1, 1)}(A, Y)] = (\wh p_{01}^{N})^{-1} \EE_{\QQ}[h_\beta(X) \mathbbm{1}_{(0, 1)}(A, Y)] \\ [1ex]
    ( \wh p_{10}^{N})^{-1} \EE_{\QQ}[h_\beta(X) \mathbbm{1}_{(1, 0)}(A, Y)] = (\wh p_{00}^{N})^{-1} \EE_{\QQ}[h_\beta(X) \mathbbm{1}_{(0, 0)}(A, Y)] \\ [1ex]
     \QQ(A = a, Y = y) = \wh p_{ay}^N ~~\forall (a, y) \in \mc A \times \mc Y
    \end{array}
    \right\},
\end{align*}
and the squared distance function
\begin{align*}
    \mc{R}^\odd(\Pnom^N, \wh p^N) = \left\{
        \begin{array}{cl}
            \inf & \Wass(\QQ, \Pnom^N)^2 \\[.2 cm]
            \st & \EE_{\QQ}[h_\beta(X) ( (\wh p_{11}^N)^{-1} \mathbbm{1}_{(1, 1)}(A, Y) - (\wh p_{01}^N)^{-1} \mathbbm{1}_{(0, 1)}(A, Y))] = 0 \\[.2cm]
            & \EE_{\QQ}[h_\beta(X) ( (\wh p_{10}^N)^{-1} \mathbbm{1}_{(1, 0)}(A, Y) - (\wh  p_{00}^N)^{-1} \mathbbm{1}_{(0, 0)}(A, Y))] = 0 \\[.2cm]
            & \EE_{\QQ}[\mathbbm{1}_{(a, y)}(A, Y)] = \wh p_{ay}^N \quad \forall (a, y) \in \mc A \times \mc Y.
        \end{array}
    \right.
\end{align*}
The equivalent relation~\eqref{eq:relation} suggests that the projection onto the set of distributions $\mc F_{h_\beta}^\odd$ satisfies
\[
    \inf_{\QQ \in \mc F_{h_\beta}^\odd}~\Wass(\Pnom^N, \QQ) =  \inf_{\QQ \in \mc F_{h_\beta}^\odd(\wh p^N)}~\Wass(\Pnom^N, \QQ) = \sqrt{\mc R^\odd(\Pnom^N, \wh p^N)}.
\]

The squared distance $\mc R^\odd(\Pnom^N, \wh p^N)$ can be computed by solving the saddle point problem in the following proposition.

\begin{proposition}[Dual reformulation] \label{prop:R-refor-odd}
    The squared projection distance $\mc{R}^\odd(\Pnom^N, \wh p^N)$ equals to the optimal value of the following finite-dimensional optimization problem 
    \be \label{eq:R-refor-odd}
     \Sup{\dualvar \in \R, \zeta \in \R} ~ \frac{1}{N} \sum_{i=1}^N \Inf{x_i \in \mc X} \left\{ \| x_i - \wh x_i \|^2 +  
     (\dualvar \lambda_i \mathbbm{1}_1 (\wh y_i) +\zeta
     \lambda_i \mathbbm{1}_0 (\wh y_i)) h_\beta(x_i) \right\} .
    \ee
\end{proposition}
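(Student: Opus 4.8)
The plan is to mirror the argument that establishes Proposition~\ref{prop:R-refor} for the equal-opportunity case, now carrying two fairness constraints instead of one. First I would write $\mc{R}^\odd(\Pnom^N, \wh p^N)$ as an infinite-dimensional optimization over joint couplings $\pi \in \Pi(\QQ, \Pnom^N)$, using the definition of the Wasserstein distance to replace $\Wass(\QQ,\Pnom^N)^2$ by $\EE_\pi[c(\xi',\xi)^2]$. Because the ground metric~\eqref{eq:cost} assigns infinite cost to any transport that alters $a$ or $y$, every finite-cost coupling must leave the sensitive attribute and the label of each atom $(\wh x_i, \wh a_i, \wh y_i)$ untouched; hence the marginal constraints $\QQ(A=a,Y=y)=\wh p^N_{ay}$ are automatically satisfied and the problem decouples into choosing, for each $i$, a conditional law $\QQ_i$ on $\mc X$ with $(A,Y)$ frozen at $(\wh a_i, \wh y_i)$. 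This reduces the objective to $\tfrac{1}{N}\sum_{i=1}^N \EE_{\QQ_i}[\|X - \wh x_i\|^2]$ subject to the two linear fairness equalities.

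Next I would introduce Lagrange multipliers $\dualvar, \zeta \in \R$ for the two fairness constraints and invoke strong duality. The key observation is that both fairness constraints are linear in $\QQ$, so the Lagrangian is linear in the decision measure; strong duality for this class of moment-constrained transport problems is standard (it can be justified by a Slater-type condition, which holds here because $\beta \neq 0$ makes the constraint maps nondegenerate, or by appealing to the general duality results for Wasserstein projection problems established in the distributionally-robust optimization literature cited earlier, e.g.~\cite{ref:blanchet2019robust, ref:gao2017wasserstein}). After dualizing, the inner problem over $\QQ$ separates across the atoms: each atom with label $\wh y_i = 1$ contributes a term weighted by $\dualvar\lambda_i$ and each atom with label $\wh y_i = 0$ contributes a term weighted by $\zeta\lambda_i$, which is exactly how the multiplier $(\dualvar\lambda_i\mathbbm{1}_1(\wh y_i) + \zeta\lambda_i\mathbbm{1}_0(\wh y_i))$ in~\eqref{eq:R-refor-odd} arises. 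Here the sign pattern of $\lambda_i$ from~\eqref{eq:lambdai-def} is what encodes the subtraction structure of the two constraints.

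Because the inner measure-valued problem is linear in $\QQ_i$, its optimal value is attained at a Dirac mass, so minimizing over conditional laws collapses to the pointwise infimum $\Inf{x_i\in\mc X}\{\|x_i-\wh x_i\|^2 + (\dualvar\lambda_i\mathbbm{1}_1(\wh y_i)+\zeta\lambda_i\mathbbm{1}_0(\wh y_i)) h_\beta(x_i)\}$. Collecting the dualized objective over all $i$ and maximizing over $(\dualvar,\zeta)\in\R^2$ yields precisely~\eqref{eq:R-refor-odd}. I would expect the main obstacle to be the rigorous justification of strong duality and the interchange of the supremum over multipliers with the infimum over $x_i$: one must verify that no duality gap opens despite the nonconvexity of $h_\beta$ (the inner minimization is nonconvex, as the paper already flags after Proposition~\ref{prop:R-refor}), and confirm that the dual value is attained so that the saddle-point reformulation is exact rather than merely an upper bound. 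This is handled by noting that duality holds at the level of the \emph{linear} measure problem before the inner minimization is performed pointwise, so the nonconvexity of $h_\beta$ affects only the tractability of evaluating the inner infimum, not the validity of the reformulation itself.
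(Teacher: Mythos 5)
Your proposal follows essentially the same route as the paper: recast the projection as a moment-constrained transport problem, observe that the infinite cost on $\mc A$ and $\mc Y$ forces the $(A,Y)$-marginals so those constraints can be dropped, dualize the two linear fairness constraints by strong duality for moment problems, and let the dual decouple atom-by-atom into the pointwise infima over $x_i$, with the sign pattern of $\lambda_i$ encoding the two equality constraints. The only step where the paper is more explicit is the verification of the Slater condition, which it carries out by an explicit construction of interior-attaining measures (Lemma~\ref{lemma:slater-odd}) rather than by the nondegeneracy remark you give; the hypotheses you identify ($\beta \neq 0$ together with $\wh p^N_{ay} \in (0,1)$) are exactly what that lemma uses.
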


To complete this section, we now discuss an efficient way to compute~$\mc{R}^\odd(\Pnom^N, \wh p^N)$. The next lemma reveals that computing $\mc{R}^\odd(\Pnom^N, \wh p^N)$ can be decomposed into two subproblems of similar structure.
    
\begin{lemma}[Univariate reduction] \label{lemma:R-compute-odd}
    We have 
    \[
        \mc{R}^\odd(\Pnom^N, \wh p^N) = \mc{R}^\opp(\Pnom^N, \wh p^N) + U_N,
    \]
    where $U_N$ is computed as
    \begin{align*}
        U_N = \Sup{\zeta \in \R} ~ \frac{1}{N} \sum_{i \in \mc I_0} \Inf{x_i \in \mc X} \left\{ \| x_i - \wh x_i \|^2 + \zeta \lambda_i h_\beta(x_i) \right\} .
    \end{align*}
    Furthermore, if $\|\cdot\|$ is the Euclidean norm on $\R^d$, then
    \begin{align}
        U_N = 
         \Sup{\zeta \in \R} \frac{1}{N}\left\{ \sum_{i \in \mc I_0} \Min{ k_i \in [0, \frac{1}{8}]}~ \zeta^2\lambda_i^2 \| \beta\|_2^2 k_i^2 + \frac{\zeta \lambda_i}{1 + \exp(\zeta \lambda_i \| \beta\|_2^2 k_i - \beta^\top \wh x_i)}
        \right\}. \label{eq:U_N}
    \end{align}
\end{lemma}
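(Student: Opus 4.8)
The plan is to start from the dual reformulation in Proposition~\ref{prop:R-refor-odd} and exploit the fact that the two dual variables $\dualvar$ and $\zeta$ act on disjoint blocks of samples. Concretely, I would split the sum over $i \in [N]$ in~\eqref{eq:R-refor-odd} according to the label $\wh y_i$. For every $i \in \mc I_1$ we have $\mathbbm{1}_1(\wh y_i) = 1$ and $\mathbbm{1}_0(\wh y_i) = 0$, so the inner infimum only sees the coefficient $\dualvar \lambda_i$; symmetrically, for every $i \in \mc I_0$ the inner infimum only sees the coefficient $\zeta \lambda_i$. Hence the objective of the saddle point problem separates additively into a term $A(\dualvar) = \frac{1}{N} \sum_{i \in \mc I_1} \inf_{x_i} \{ \|x_i - \wh x_i\|^2 + \dualvar \lambda_i h_\beta(x_i)\}$ that depends solely on $\dualvar$, plus a term $B(\zeta) = \frac{1}{N} \sum_{i \in \mc I_0} \inf_{x_i} \{ \|x_i - \wh x_i\|^2 + \zeta \lambda_i h_\beta(x_i)\}$ that depends solely on $\zeta$.

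The key step is then the elementary observation that the supremum of a separable function factorizes, $\sup_{\dualvar, \zeta} [A(\dualvar) + B(\zeta)] = \sup_{\dualvar} A(\dualvar) + \sup_{\zeta} B(\zeta)$, which holds because the two maximizations range over independent variables and can therefore be carried out coordinatewise. I would identify $\sup_{\dualvar} A(\dualvar)$ with the right-hand side of Proposition~\ref{prop:R-refor}, i.e.\ with $\mc{R}^\opp(\Pnom^N, \wh p^N)$, and $\sup_{\zeta} B(\zeta)$ with the claimed expression for $U_N$. This immediately yields the decomposition $\mc{R}^\odd(\Pnom^N, \wh p^N) = \mc{R}^\opp(\Pnom^N, \wh p^N) + U_N$.

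For the second assertion under the Euclidean norm, I would apply the univariate reduction already established for the opportunity case (Lemma~\ref{lemma:R-compute}, which in turn invokes Lemma~\ref{lemma:individual}) verbatim to each inner infimum indexed by $i \in \mc I_0$. Since these subproblems have exactly the same form as those appearing in the proof of Lemma~\ref{lemma:R-compute}, with $\zeta$ and $\mc I_0$ playing the roles of $\dualvar$ and $\mc I_1$, the same scalar reparametrization reduces each $d$-dimensional infimum to a one-dimensional minimization over $k_i \in [0, \tfrac{1}{8}]$, producing the stated formula~\eqref{eq:U_N}.

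I expect no genuine obstacle here: the whole argument is essentially bookkeeping once the separation of the dual variables across $\mc I_1$ and $\mc I_0$ is recognized. The only point deserving a line of justification is the factorization of the supremum over the decoupled variables, and the Euclidean reduction is entirely parallel to the equal-opportunity case and introduces nothing new.
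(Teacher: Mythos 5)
Your proposal is correct and follows essentially the same route as the paper: split the sum in the dual reformulation of Proposition~\ref{prop:R-refor-odd} over $\mc I_1$ and $\mc I_0$, use the separability of the objective in $\dualvar$ and $\zeta$ to decompose the joint supremum, identify the first term with the expression for $\mc{R}^\opp$ from Proposition~\ref{prop:R-refor}, and apply Lemma~\ref{lemma:individual} to each inner infimum over $\mc I_0$ for the Euclidean case. No gaps.
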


Notice that problem~\eqref{eq:U_N} has a similar structure to problem~\eqref{eq:R-refor-opp-2}: the mere difference is that the summation in the objective function of~\eqref{eq:U_N} runs over the index set $\mc I_0 = \{i \in [N]: \wh y_i = 0\}$ instead of $ \mc I_1$ in~\eqref{eq:R-refor-opp-2}. Solving for $U_N$ thus incurs the same computational complexity as, and can also be performed in parallel with, computing $\mc{R}^\opp(\Pnom^N, \wh p^N)$.
%%%%%%%%%%%%%%%%%%%%%%%%%%
\subsection{Limiting Distribution}
\label{sec:odd-limit}

The next result asserts that the squared projection distance $\mc R^\odd$ has the $O(N^{-1})$ convergence rate.
\begin{theorem}[Limiting distribution -- Probabilistic equalized odds] \label{thm:limiting-odd}
    Suppose that $(\wh x_i, \wh a_i, \wh y_i)$ are i.i.d.~samples from $\PP$. Under the null hypothesis $\mc H_0^\odd$, we have
    \begin{align*}
        N \times \mc{R}^\odd(\Pnom^N, \wh p^N) \xrightarrow{d.}  \Sup{\gamma, \zeta
        }\Bigg\{\gamma H_1 + \zeta H_0+  \EE_\PP\Bigg[\Bigg\Vert\begin{pmatrix}
            \gamma \\
            \zeta
        \end{pmatrix}^\top
        \begin{pmatrix} 
        p_{11}^{-1} \mathbbm{1}_{(1, 1)}(A, Y)- p_{01}^{-1} \mathbbm{1}_{(0, 1)}(A, Y) \\
        p_{10}^{-1} \mathbbm{1}_{(1, 0)}(A, Y)- p_{00}^{-1} \mathbbm{1}_{(0, 0)}(A, Y)
        \end{pmatrix} \nabla h_\beta(X) 
        \Bigg\Vert_*^2 \Bigg]\Bigg\},
    \end{align*}
    where $\nabla h_\beta(X) = h_\beta(X) (1- h_\beta(X) \beta$, and $H_y = \mc N(0, \sigma_y^2) /(p_{1y}p_{0y})$ with $\sigma_y^2 = \mathrm{Cov}(Z_y)$, and $Z_y$ are random variables
    \begin{align*}
    Z_y &= h_\beta(X)\left( p_{0y}\mathbbm{1}_{(1,y)}( A,Y) 
 -p_{1y}\mathbbm{1}_{(0,y)}( A, Y) \right)\\ & \qquad +\mathbbm{1}_{(0,y)}(
A,Y) \EE_{\PP}[\mathbbm{1}_{(1,y)}( A, Y)
h_\beta(X)]\\
&\qquad -\mathbbm{1}_{(1,y)}( A, Y) \EE_{\PP}[\mathbbm{1}_{(0,y)}(
A, Y) h_\beta(X)].
\end{align*}
\end{theorem}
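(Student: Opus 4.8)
The plan is to start from the finite-dimensional saddle-point reformulation in Proposition~\ref{prop:R-refor-odd} and to extract the $O(N^{-1})$ fluctuations by rescaling the two dual variables. Guided by the expected $O(N^{-1})$ scaling of $\mc{R}^\odd$, I would substitute $\dualvar \mapsto \dualvar/\sqrt N$ and $\zeta \mapsto \zeta/\sqrt N$ in \eqref{eq:R-refor-odd} and study the rescaled random objective $F_N(\dualvar,\zeta) \defeq \sum_{i=1}^N \Inf{x_i}\{\|x_i-\wh x_i\|^2 + t_i\, h_\beta(x_i)\}$, where $t_i = (\dualvar \lambda_i \mathbbm{1}_1(\wh y_i)+\zeta \lambda_i \mathbbm{1}_0(\wh y_i))/\sqrt N$, so that $N\,\mc{R}^\odd(\Pnom^N,\wh p^N)=\Sup{\dualvar,\zeta}F_N(\dualvar,\zeta)$. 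An equivalent, slightly cleaner route is to invoke the decomposition $\mc{R}^\odd=\mc{R}^\opp+U_N$ from Lemma~\ref{lemma:R-compute-odd} and to treat the $\dualvar$-block exactly as in Theorem~\ref{thm:limiting-opp} and the $\zeta$-block by the verbatim analogue with $\mc I_0$ in place of $\mc I_1$.

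The first technical step is a second-order expansion of each inner infimum. For a small coefficient $t$, writing $x=\wh x_i+\delta$ and using the dual-norm identity $\Inf{\delta}\{\|\delta\|^2+\langle v,\delta\rangle\}=-\tfrac14\|v\|_*^2$ with $v=t\,\nabla h_\beta(\wh x_i)$, I obtain $\Inf{x}\{\|x-\wh x_i\|^2+t\,h_\beta(x)\}=t\,h_\beta(\wh x_i)-\tfrac14 t^2\|\nabla h_\beta(\wh x_i)\|_*^2+r_i(t)$ with $r_i(t)=O(t^3)$ uniformly (the sigmoid and its derivatives being bounded). Summing, using $\mathbbm{1}_1(\wh y_i)\mathbbm{1}_0(\wh y_i)=0$ so that no $\dualvar\zeta$ cross-term survives, and undoing the rescaling yields $F_N(\dualvar,\zeta)=\dualvar\,G_1^N+\zeta\,G_0^N-\tfrac14(\dualvar^2\wh A_1+\zeta^2\wh A_0)+o_{\PP}(1)$, a concave quadratic in $(\dualvar,\zeta)$; here $G_y^N=\sqrt N(\wh m_{1y}/\wh p^N_{1y}-\wh m_{0y}/\wh p^N_{0y})$ with $\wh m_{ay}=\tfrac1N\sum_i h_\beta(\wh x_i)\mathbbm{1}_{(a,y)}(\wh a_i,\wh y_i)$, and $\wh A_y=\tfrac1N\sum_i\lambda_i^2\mathbbm{1}_y(\wh y_i)\|\nabla h_\beta(\wh x_i)\|_*^2$. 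The aggregate remainder is $\sum_i O(t_i^3)=O(N^{-1/2})$, which vanishes uniformly over $(\dualvar,\zeta)$ in compacts.

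Next I would identify the limits of the coefficients. Under $\mc H_0^\odd$ both probabilistic equalized-odds equalities hold, so each functional $\wh m_{1y}/\wh p^N_{1y}-\wh m_{0y}/\wh p^N_{0y}$ vanishes in the limit; applying a joint CLT to the empirical averages $(\wh m_{1y},\wh m_{0y},\wh p^N_{1y},\wh p^N_{0y})$ and the multivariate delta method to the smooth map $(m,p)\mapsto m_{1y}/p_{1y}-m_{0y}/p_{0y}$, one checks that its influence function equals $Z_y/(p_{1y}p_{0y})$, whence $G_y^N\xrightarrow{d.}H_y=\mc N(0,\Cov(Z_y))/(p_{1y}p_{0y})$. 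Because a single observation carries only one label, $Z_1 Z_0\equiv 0$ pointwise, so $\Cov(Z_1,Z_0)=0$ and the jointly Gaussian pair $(H_1,H_0)$ is independent. The quadratic coefficients converge by the law of large numbers, $\wh A_y\to \EE_{\PP}[\|\nabla h_\beta(X)(\mathbbm{1}_{(1,y)}/p_{1y}-\mathbbm{1}_{(0,y)}/p_{0y})\|_*^2]$, which is precisely the quadratic form displayed in the theorem (the disjointness of the indicators again removing the cross-term).

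The last and most delicate step is to interchange the supremum with the weak limit, that is, to pass from the process-level convergence $F_N\to F_\infty(\dualvar,\zeta)=\dualvar H_1+\zeta H_0-\tfrac14(\dualvar^2 A_1+\zeta^2 A_0)$ (in distribution, uniformly on compacts) to $\Sup{\dualvar,\zeta}F_N\xrightarrow{d.}\Sup{\dualvar,\zeta}F_\infty$. I would argue by an argmax/epi-convergence continuity theorem: the negative-definite quadratic term makes the maximizer $(\dualvar_N^\star,\zeta_N^\star)$ tight (bounded in probability), so the supremum is effectively taken over a fixed compact set, and joint distributional convergence of the finitely many coefficients $(G_y^N,\wh A_y)$ together with uniform-on-compacts control of the cubic remainder $\sum_i r_i$ yields convergence of the supremum value by the continuous mapping theorem. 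This uniform remainder control and the tightness of the maximizer are the genuine obstacles; everything else is CLT-plus-LLN bookkeeping. Finally, since $F_\infty$ separates into independent $\dualvar$- and $\zeta$-blocks, completing the square gives $\Sup{\dualvar,\zeta}F_\infty=H_1^2/A_1+H_0^2/A_0$, each summand being a $\theta_y\chi_1^2$ variable exactly as in Theorem~\ref{thm:limiting-opp}, which confirms both the stated joint-supremum form and the chi-square flavor of the limit.
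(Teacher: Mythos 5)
Your proposal is correct and follows the same overall route as the paper's proof: dualize via Proposition~\ref{prop:R-refor-odd}, rescale the dual variables and perturbations by $\sqrt{N}$, expand the inner infimum so that a linear term $\gamma H_1^N+\zeta H_0^N$ and a quadratic term emerge, and identify the limits $H_y^N\xrightarrow{d.}H_y$ by exactly the Slutsky/delta-method bookkeeping you describe (the paper's computation of $H^N$ is precisely your influence-function calculation yielding $Z_y/(p_{1y}p_{0y})$). The one genuine difference is in how the process-level limit is justified: where you sketch a direct argument (uniform $O(t^3)$ remainder control via the dual-norm identity, tightness of the maximizer from the negative-definite quadratic, and an argmax/epi-convergence continuity theorem), the paper instead verifies the non-degeneracy condition $\PP(\Vert(\gamma,\zeta)^\top\Lambda(A,Y)\nabla h_\beta(X)\Vert_*>0)>0$ and a Lipschitz bound on $\nabla h_\beta$, and then delegates the interchange of supremum and weak limit wholesale to \cite[Lemma~4]{ref:blanchet2019rwpi}; your by-hand sketch is essentially a re-derivation of that lemma, so nothing is missing in substance, but to make it rigorous you must supply the same hypotheses (in particular $A_y>0$, which is exactly what the non-degeneracy check buys and which your tightness step silently assumes). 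Two further remarks. First, your limit carries the factor $-\tfrac14$ in front of the quadratic term, consistent with Theorem~\ref{thm:limiting-opp} and with what \cite[Lemma~4]{ref:blanchet2019rwpi} actually delivers; the ``$+$'' in the displayed statement of Theorem~\ref{thm:limiting-odd} (under which the supremum would be $+\infty$) is evidently a typographical slip, and your sign is the right one. Second, your observation that the label indicators are disjoint, so the quadratic form decouples, $(H_1,H_0)$ are independent, and the limit closes to $H_1^2/A_1+H_0^2/A_0$ (a sum of two independent scaled $\chi_1^2$ variables), is correct and goes beyond the paper, which treats the limit as unavailable in closed form and resorts to sampling $(H_0,H_1)$ and solving an optimization problem per sample.
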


\noindent \textbf{Construction of the hypothesis test.} Contrary to the explicit chi-square limiting distribution for the probabilistic equal opportunity fairness in Theorem~\ref{thm:limiting-opp}, the limiting distribution for the probabilistic equalized odds fairness is not available in closed form. Nevertheless, the limiting distribution in this case can be obtained by sampling $H_0$ and $H_1$ and solving a collection of optimization problems for each sample. Notice that the objective function of the supremum problem presented in Theorem~\ref{thm:limiting-odd} is continuous in $H_1$ and $H_0$, one thus can define
\[
    \wh H_y = \mc N(0, \wh \sigma_y^2)/ (\wh p_{1y}^N \wh p_{0y}^N),
\]
where $\wh \sigma_y^2$ is the sample average estimate of $\sigma_y^2$, which can be computed using an equation similar to~\eqref{eq:sigma-estimate}. The limiting distribution can be computed by solving the optimization problem with plug-in values
\begin{align*}
    \Sup{\gamma, \zeta
    }\Bigg\{\gamma \wh H_1 + \zeta \wh H_0+ ~\EE_{\Pnom^N}\Bigg[\Bigg\Vert\begin{pmatrix}
        \gamma \\[.2cm]
        \zeta
    \end{pmatrix}^\top
    \begin{pmatrix} 
    (\wh p_{11}^N)^{-1} \mathbbm{1}_{(1, 1)}(A, Y)- (\wh p_{01}^N)^{-1} \mathbbm{1}_{(0, 1)}(A, Y) \\[.2cm]
    (\wh p_{10}^N)^{-1} \mathbbm{1}_{(1, 0)}(A, Y)- (\wh p_{00}^N)^{-1} \mathbbm{1}_{(0, 0)}(A, Y)
    \end{pmatrix} \nabla h_\beta(X) 
    \Bigg\Vert_*^2 \Bigg]\Bigg\}.
\end{align*}
Notice that the expectation in taken over the empirical distribution $\Pnom^N$, and can be written as a finite sum. The last optimization problem can be solved efficiently using quadratic programming for any realization of $\wh H_1$ and $\wh H_0$. The objective values can be collected to compute the $(1-\alpha) \times 100\%$-quantile estimate $\wh \eta_{1-\alpha}^\odd$ of the limiting distribution. The statistical test decision using the plug-in estimate becomes
\begin{center}
    Reject $\mc H_0^\odd$ if $\wh s_N^\odd > \wh \eta_{1-\alpha}^\odd$,
\end{center}
where $\wh s_N^\odd = N \times \mc R^\odd(\Pnom^N, \wh p^N)$.
%%%%%%%%%%%%%%%%%%%%%%%%%
\subsection{Most Favorable Distributions}
\label{sec:odd-favor}

If the feature space $\mc X$ is endowed with an Euclidean norm, then the most favorable distribution $\QQ\opt$, defined in this section as the projection of $\Pnom^N$ onto $\mc F_{h_\beta}^\odd$, can be constructed by exploiting Lemma~\ref{lemma:R-compute-odd}.

\begin{lemma}[Most favorable distribution]
\label{lemma:favorable-odd}
Suppose that $\|\cdot\|$ is the Euclidean norm. Let $\dualvar\opt$ and $\zeta\opt$ be the optimal solution of problems~\eqref{eq:R-refor-opp-2} and~\eqref{eq:U_N}, respectively. For any $i \in \mc I_1$, let $k_i\opt$ be the solution of the inner minimization of~\eqref{eq:R-refor-opp-2} with respect to $\dualvar\opt$, and for any $i \in \mc I_0$, let $k_i\opt$ be a solution of the inner minimization of~\eqref{eq:U_N} with respect to $\zeta\opt$. Then the most favorable distribution $\QQ\opt = \arg\min_{\QQ \in \mc F_{h_\beta}^\odd}~\Wass(\Pnom^N, \QQ)$ is a discrete distribution of the form
\[
\QQ\opt = \frac{1}{N} \Big( \sum_{i\in\mc I_0} \delta_{(\wh x_i - k_i\opt \zeta\opt \lambda_i \beta, \wh a_i, \wh y_i)} + \sum_{i \in \mc I_1} \delta_{(\wh x_i - k_i\opt \dualvar\opt \lambda_i \beta, \wh a_i, \wh y_i)}\Big).
\]
\end{lemma}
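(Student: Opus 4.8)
The plan is to mirror the argument sketched for Lemma~\ref{lemma:favorable-opp} and to exploit the group-wise decoupling exposed by Lemma~\ref{lemma:R-compute-odd}. I would prove the statement by establishing two facts about the candidate measure: that it is \emph{feasible}, i.e.\ $\QQ\opt \in \mc F_{h_\beta}^\odd$, and that it is \emph{cost-optimal}, i.e.\ $\Wass(\QQ\opt, \Pnom^N)^2 = \mc R^\odd(\Pnom^N, \wh p^N)$. Since~\eqref{eq:relation} together with Lemma~\ref{lemma:marginal} gives $\mc R^\odd(\Pnom^N, \wh p^N) = \inf_{\QQ \in \mc F_{h_\beta}^\odd} \Wass(\Pnom^N, \QQ)^2$, these two facts jointly certify that $\QQ\opt$ attains the infimum and is therefore a projection of $\Pnom^N$ onto $\mc F_{h_\beta}^\odd$.

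The first step is to recover the primal transport map from the dual solution. By Lemma~\ref{lemma:R-compute-odd} the saddle point problem~\eqref{eq:R-refor-odd} separates across label groups: the summands indexed by $i \in \mc I_1$ involve only $\dualvar$, while those indexed by $i \in \mc I_0$ involve only $\zeta$. Applying the univariate reduction of Lemma~\ref{lemma:individual} to each inner infimum $\inf_{x_i} \{\|x_i - \wh x_i\|^2 + c_i \lambda_i h_\beta(x_i)\}$, with $c_i = \dualvar\opt$ for $i\in\mc I_1$ and $c_i = \zeta\opt$ for $i \in \mc I_0$, the minimizer takes the form $x_i\opt = \wh x_i - k_i\opt c_i \lambda_i \beta$, which is exactly the support point appearing in the statement. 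I would then set $\QQ\opt = \tfrac{1}{N}\sum_i \delta_{(x_i\opt, \wh a_i, \wh y_i)}$; because only the feature coordinate is displaced while $(\wh a_i, \wh y_i)$ and the uniform weights $1/N$ are preserved, the marginal constraints $\QQ\opt(A=a, Y=y) = \wh p_{ay}^N$ hold automatically.

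The second step, verifying the two fairness equalities, is where I expect the main obstacle. Writing the outer objective as a sum of the concave value functions $\dualvar \mapsto \inf_{x_i}\{\cdots\}$ over $\mc I_1$ and $\zeta \mapsto \inf_{x_i}\{\cdots\}$ over $\mc I_0$, I would invoke Danskin's theorem to identify a subgradient in $\dualvar$ with $\tfrac{1}{N}\sum_{i\in\mc I_1}\lambda_i h_\beta(x_i\opt)$ and a subgradient in $\zeta$ with $\tfrac{1}{N}\sum_{i\in\mc I_0}\lambda_i h_\beta(x_i\opt)$. Optimality of $(\dualvar\opt, \zeta\opt)$ forces $0$ into each subdifferential, and recalling the definition~\eqref{eq:lambdai-def} of $\lambda_i$, these vanishing conditions are precisely the constraints $(\wh p_{1y}^N)^{-1}\EE_{\QQ\opt}[h_\beta(X)\mathbbm{1}_{(1,y)}(A,Y)] = (\wh p_{0y}^N)^{-1}\EE_{\QQ\opt}[h_\beta(X)\mathbbm{1}_{(0,y)}(A,Y)]$ for $y \in \{0,1\}$, i.e.\ $\QQ\opt \in \mc F_{h_\beta}^\odd$. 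The delicate point is the non-convexity of the inner problem in $k_i$: the minimizer $x_i\opt$ need not be unique, so the value function may fail to be differentiable and I must argue that there is a \emph{selection} of $k_i\opt$ (hence of $x_i\opt$) realizing the zero subgradient rather than merely some subgradient. This selection issue is exactly the source of the non-uniqueness of $\QQ\opt$ noted after Lemma~\ref{lemma:favorable-opp}.

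The final step is cost-optimality. The coupling matching each atom $\wh x_i$ of $\Pnom^N$ to $x_i\opt$ is admissible, so $\Wass(\QQ\opt, \Pnom^N)^2 \le \tfrac{1}{N}\sum_i \|x_i\opt - \wh x_i\|^2$. Having established feasibility, the penalty contributions $c_i \lambda_i h_\beta(x_i\opt)$ summed over each group vanish (each equals $c\opt$ times a fairness constraint that is now zero), so the saddle value $\mc R^\odd(\Pnom^N, \wh p^N)$ collapses to $\tfrac{1}{N}\sum_i \|x_i\opt - \wh x_i\|^2$; hence $\Wass(\QQ\opt, \Pnom^N)^2 \le \mc R^\odd(\Pnom^N, \wh p^N)$. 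The reverse inequality is immediate from feasibility and the definition of $\mc R^\odd$ as the infimum over $\mc F_{h_\beta}^\odd$. Equality then identifies $\QQ\opt$ as the projection, which completes the argument.
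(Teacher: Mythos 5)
Your proposal follows exactly the route the paper indicates: the paper's (omitted) proof of Lemma~\ref{lemma:favorable-odd} is precisely the two-step verification that $\QQ\opt \in \mc F_{h_\beta}^\odd$ and $\Wass(\QQ\opt, \Pnom^N)^2 = \mc{R}^\odd(\Pnom^N, \wh p^N)$ via the decomposition of Lemma~\ref{lemma:R-compute-odd}, with feasibility coming from the first-order optimality of $(\dualvar\opt,\zeta\opt)$. Your writeup is in fact more explicit than the paper's, and correctly flags the selection issue caused by non-uniqueness of the inner minimizers, which the paper glosses over.
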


The proof of Lemma~\ref{lemma:favorable-odd} follows from verifying that $\QQ\opt \in \mc F_{h_\beta}^\odd$ and that $\Wass(\QQ\opt, \Pnom^N)^2 = \mc{R}^\odd(\Pnom^N, \wh p^N)$ using Lemma~\ref{lemma:R-compute-odd}, the detailed proof is omitted. For probabilistic equalized odds, the most favorable distribution $\QQ\opt$ alters the locations of both $i \in \mc I_0$ and $i \in \mc I_1$. The directions of perturbation are dependent on $\lambda_i$, which is determined using~\eqref{eq:lambdai-def}. Notice that $\lambda_i$ carry opposite signs corresponding to whether $\wh a_i = 0$ or $\wh a_i = 1$, thus the perturbations will move $\wh x_i$ in opposite directions based on the value of the sensitive attribute $\wh a_i$.

%%%%%%%%%%%%%%%%%%%%%%%%%%%
\section{Numerical Experiment}
\label{sec:experiment}
All experiments are run on an Intel Xeon based cluster composed of 287 compute nodes each with 2 Skylake processors running at 2.3 GHz with 18 cores each. We only use 2 nodes of this cluster and all optimization problems are implemented in Python version 3.7.3.
In all experiments, we use the 2-norm to measure distances in the feature space. Moreover, we focus on the hypothesis test of probabilistic equal opportunity, and thus the Wasserstein projection, the limiting distribution and the most favorable distribution follow from the results presented in Section~\ref{sec:EO}.
%%%%%%%%%%%%%%%%%%%%%%%%%%%
\subsection{Validation of the Hypothesis Test}
We now demonstrate that our proposed Wasserstein projection framework for statistical test of fairness is a valid, or asymptotically correct, test. We consider a binary classification setting in which $\mc X$ is 2-dimensional feature space. The true distribution $\PP$ has true marginal values $p_{ay}$ being
\[
p_{11} = 0.2,~p_{01} = 0.1,~p_{10} = 0.3,~p_{00} = 0.4.
\]
Moreover, conditioning on $(A, Y)$, the feature $X$ follows a Gaussian distribution of the form
\begin{align*}
X|A = 1, Y = 1 &\sim \mc N([6, 0], [3.5, 0; 0, 5]),\\
X|A = 0, Y = 1 &\sim \mc N ([-2, 0], [5, 0; 0, 5]),\\
X|A = 1, Y = 0 &\sim \mc N ([6, 0], [3.5, 0; 0, 5]),\\
X|A = 0, Y = 0 &\sim \mc N ([-4, 0], [5, 0; 0, 5]).
\end{align*}
The true distribution $\PP$ is thus a mixture of Gaussian, and under this specification, a simple algebraic calculation indicates that a logistic classifier with $\beta = (0, 1)^\top$ is fair with respect to the probabilistic equal opportunity criteria in Definition~\ref{def:proba-EO}. We thus focus on verifying fairness for this specific classifier.
    \begin{figure*}[h]
        \centering
        \begin{subfigure}[h]{0.37\textwidth}
        \includegraphics[width=\textwidth]{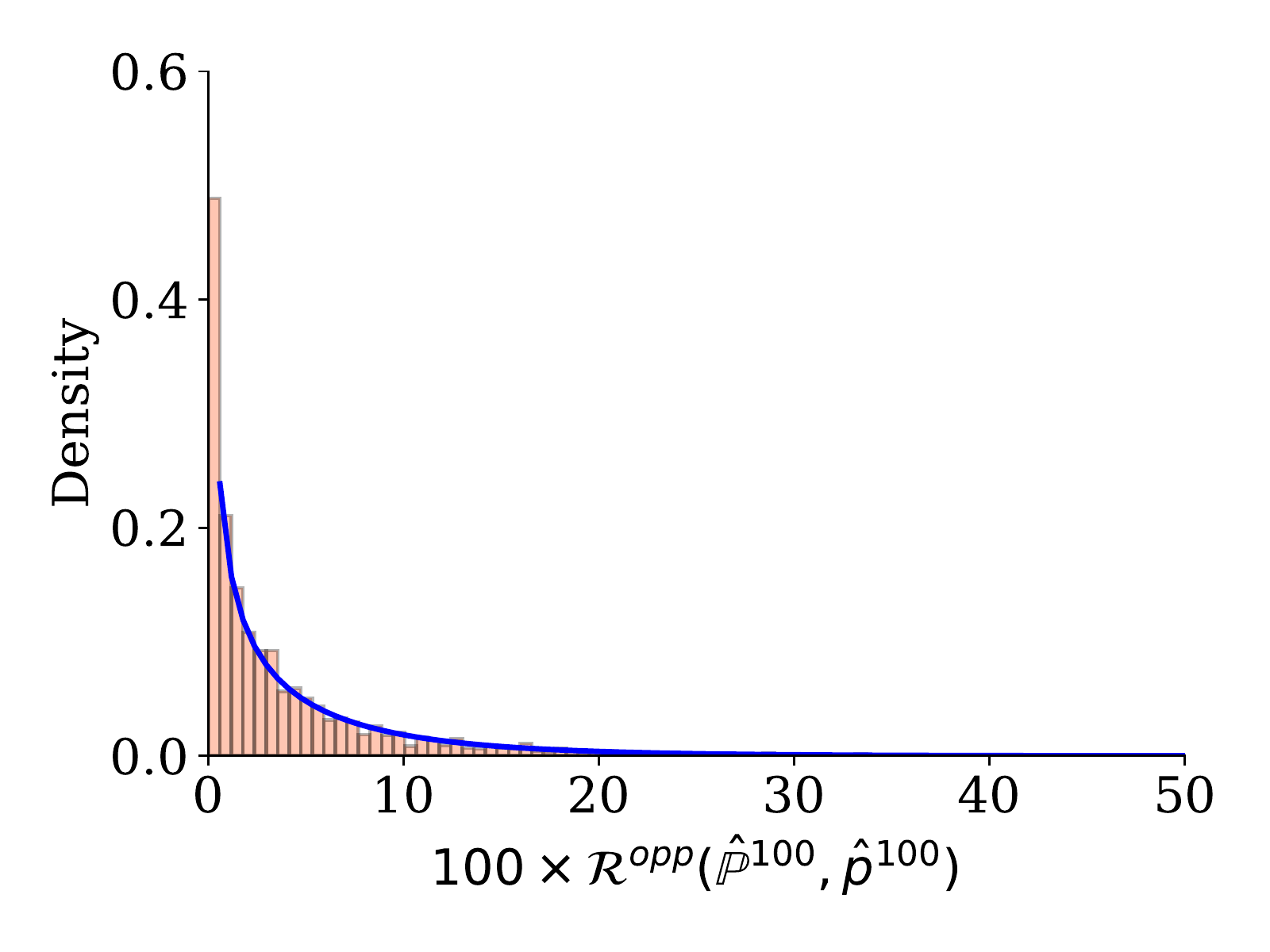}
        \caption{$N=100$}
        \label{fig:limita}
        \end{subfigure}
        \begin{subfigure}[h]{0.37\textwidth}
        \includegraphics[width=\textwidth]{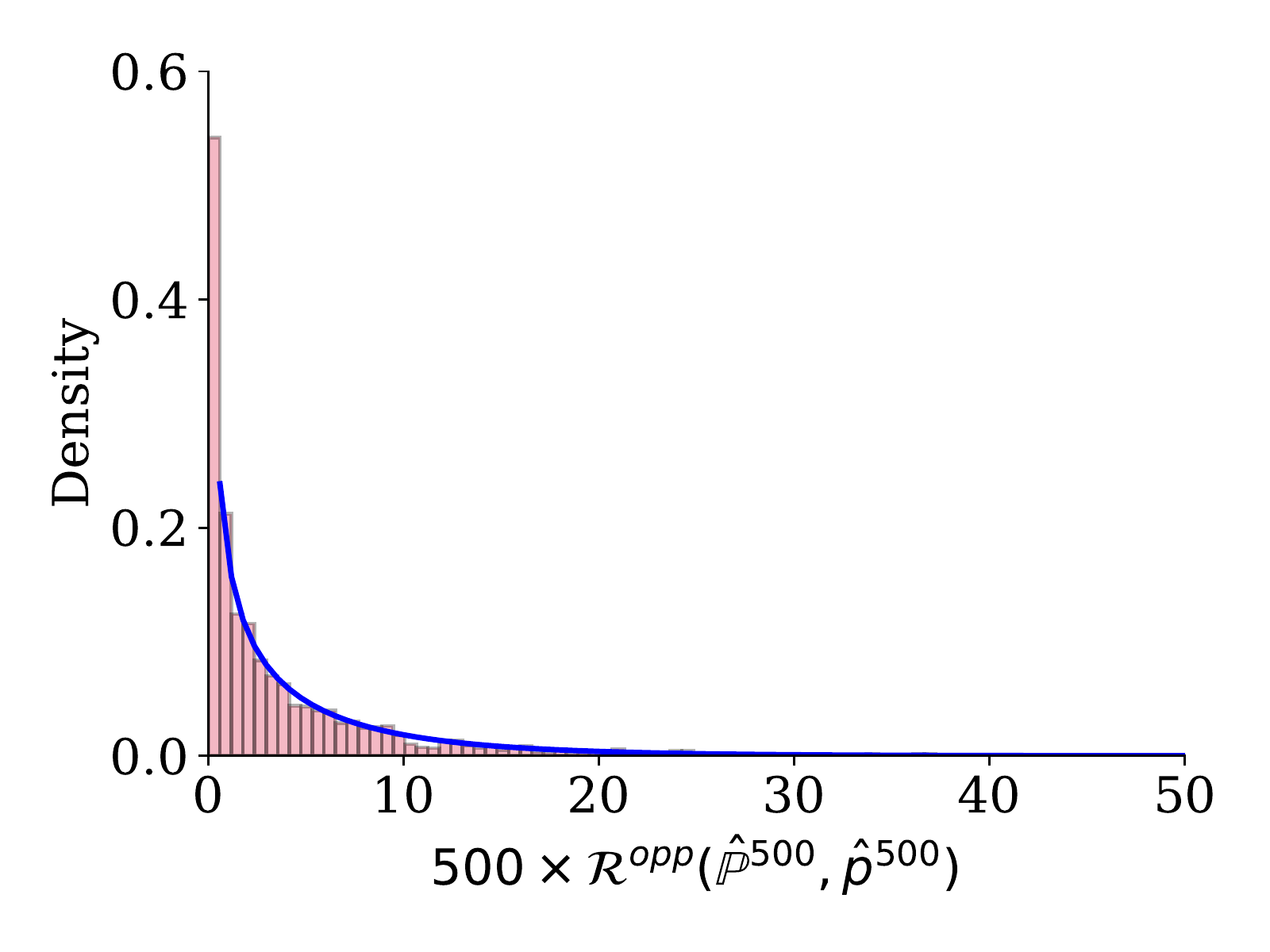}
        \caption{$N=500$}
        \label{fig:limitb}
        \end{subfigure}
        \begin{subfigure}[h]{0.37\textwidth}
        \includegraphics[width=\textwidth]{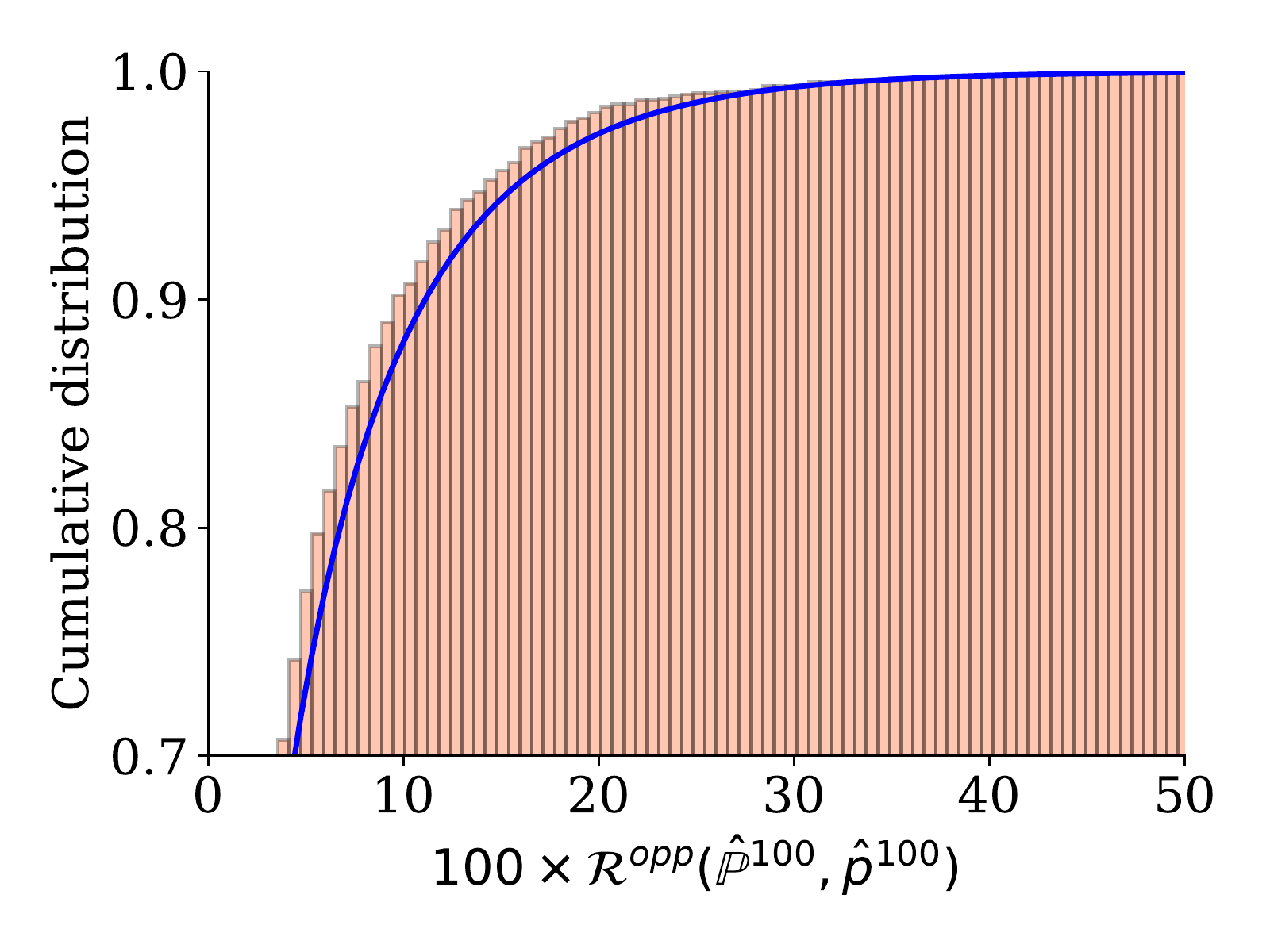}
        \caption{$N=100$}
        \label{fig:limitc}
        \end{subfigure}
        \begin{subfigure}[h]{0.37\textwidth}
        \includegraphics[width=\textwidth]{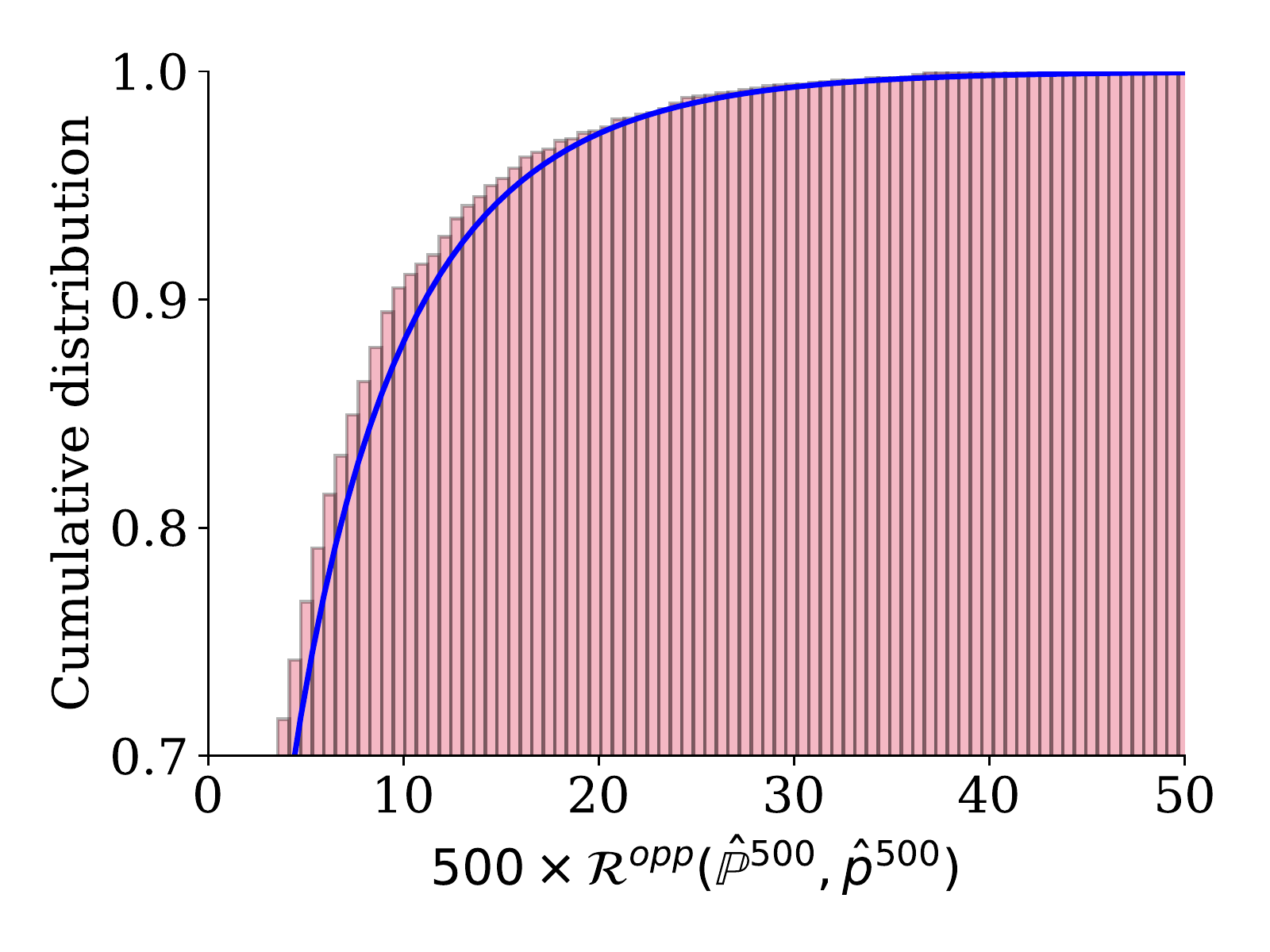}
        \caption{$N=500$}
        \label{fig:limitd}
        \end{subfigure}
        \caption{Empirical distribution of $N \times \mc R^\opp(\Pnom^N, \wh p^N)$ taken over 2,000 replications (histogram) versus the limiting distribution $\theta \chi_1^2$ (blue curve) with different sample sizes $N$. Fig.~\ref{fig:limita}-\ref{fig:limitb} are density plots, Fig.~\ref{fig:limitc}-\ref{fig:limitd} are cumulative distribution plots. }
        \label{fig:limit}
    \end{figure*}
In the first experiment, we empirically validate Theorem~\ref{thm:limiting-opp}. To this end, we generate $N \in \{100, 500\}$ i.i.d.~samples from $\PP$ to be used as the test data, and then calculate the squared projection distance $\mc R^\opp(\Pnom^N, \wh p^N)$ using Proposition~\ref{prop:R-refor}. The process is repeated 2,000 times to obtain an empirical estimate of the distribution of $N \times \mc R^\opp(\Pnom^N, \wh p^N)$. We also generate another set of one million i.i.d.~samples from $\PP$ to estimate the limiting distribution $\theta \chi_1^2$. Figure~\ref{fig:limit} shows that the empirical distribution of $N \times \mc R^\opp(\Pnom^N, \wh p^N)$ converges to the limiting distribution $\theta \chi_1^2$ as $N$ increases.

The second set of experiments aims to show that our proposed Wasserstein projection hypothesis test is asymptotically valid. We generate  $N \in \{100, 500, 1000\}$ i.i.d.~samples from $\PP$ and calculate the test statistic $N \times \mc R^\opp(\Pnom^N, \wh p^N)$. The same data is used to estimate $\wh \theta^N$ and compute the $(1-\alpha)\times 100\%$-quantile of $\wh \theta^N \chi_1^2$ to perform the quantile based test as laid out in Section~\ref{sec:opp-limit}. We repeat this procedure for 2,000 replications to keep track of the rejection projection at different significant values of $\alpha \in \{0.5, 0.3, 0.1, 0.05, 0.01\}$. 
Table~\ref{tab:rej_rates_table} summarizes the rejection probabilities of Wasserstein projection tests for equal opportunity criterion under the null hypothesis $\mc H_0^\opp$. We can observe that at sample size $N > 100$, the rejection probability is close to the desired level $\alpha$, which empirically validates our testing procedure.
\begin{center}
\vspace{-.2cm}
\begin{table}[h]
 \begin{tabular}{||c c c c ||c||}
 \hline \hline
 $N=100$ & $N=500$  & $N=1000 $ & $\alpha$\\
 \hline\hline
  0.511 &0.4905 & 0.5 & 0.50\\ \hline
  0.282 &0.2895 &0.299 & 0.30\\\hline
  0.048 &0.0895 &0.093 & 0.10\\ \hline
  0.007 & 0.0425 & 0.0405 &0.05\\\hline
  0.0 & 0.0065 & 0.005 & 0.01\\
 \hline \hline
\end{tabular}
\vspace{.2cm}
 \caption{Comparison of the null rejection probabilities of probabilistic equal opportunity tests with different significance levels~$\alpha$ and test sample sizes $N$.} \label{tab:rej_rates_table}
\end{table}
\end{center}
%%%%%%%%%%%%%%%
\subsection{Most Favorable Distribution Analysis}
\begin{figure}
    \centering
    % \begin{subfigure}[h]{0.45\columnwidth}
    \includegraphics[width=0.4\columnwidth]{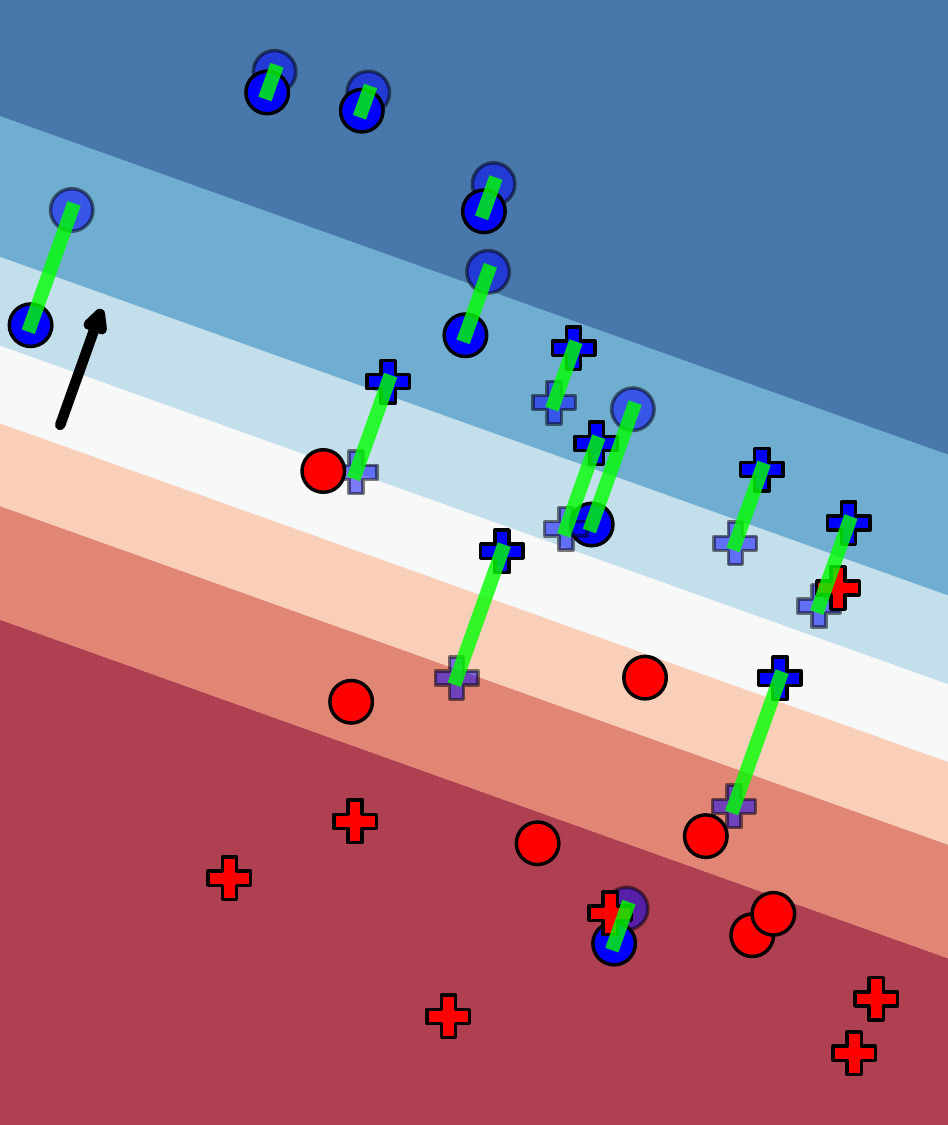}
    \caption{Visualization of the most favorable distribution $\QQ\opt$ for a logistic classifier with weight $\beta =(0.4, 0.12)^\top$. The black arrow indicates the vector $\beta$. Colors represent class, while symbolic shapes encode the sensitive values.
    The green lines show the transport plan of the empirical test samples from their original positions (indicated with transparent colors) to their ultimate destinations (with non-transparent colors).}
    \label{fig:most_fav}
\end{figure}

In this section, we visualize the most favorable distribution $\QQ\opt$ from Lemma~\ref{lemma:favorable-opp} for a vanilla logistic regression classifier with weight $\beta=(0.4, 0.12)^\top$. We simply generate 28 samples with equal subgroup proportions to form the empirical distribution $\Pnom^N$. To find the support of~$\QQ\opt$, we solve problem~\eqref{eq:R-refor-opp-2}, whose optimizer dictates the transportation plan of each sample $\wh x_i$. Figure~\ref{fig:most_fav} visualizes the original test samples that forms $\Pnom^N$, along with the most favorable distribution $\QQ\opt$. Green lines in the figure represent how samples are perturbed. As we are testing for the probabilistic notion of equal opportunity, only the samples with positive label $\wh y_i=1$ presented in blue are perturbed in order to obtain $\QQ\opt$. Furthermore, we observe that the positively-labeled test samples are transported along the axis directed by $\beta$ (black arrow). Moreover, the samples with different sensitive attributes, represented by different shapes, move in opposite direction so that they get closer to each other, which reduces the discrepancy in the expected value of $h_\beta(X)$ between the relevant subgroups.

%%%%%%%%%%%%%%%
\subsection{The COMPAS Dataset}
\begin{figure}
    \centering
    \includegraphics[width=0.6\columnwidth]{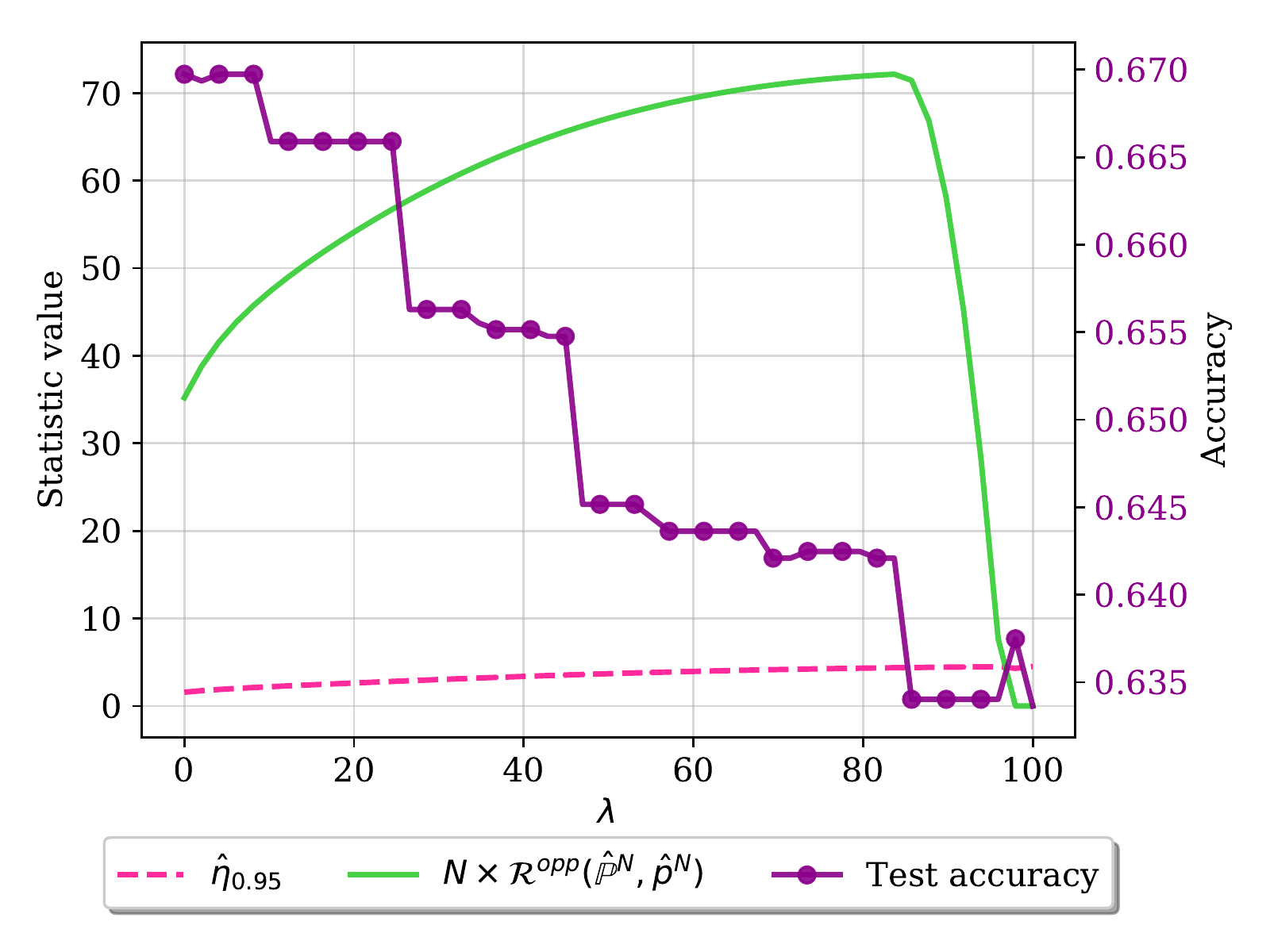}
    \caption{Test statistic and accuracy of Tikhonov regularized logistic regression on test data with rejection threshold $\wh \eta_{0.95}$.}
    \vspace{-5mm}
    \label{fig:reg_figure}
\end{figure}

COMPAS (Correctional Offender Management Profiling for Alternative Sanctions)\footnote{https://www.propublica.org/datastore/dataset/compas-recidivism-risk-score-data-and-analysis} is a commercial tool used by judges and parole officers for scoring criminal defendant’s likelihood of recidivism. The COMPAS dataset is used by the COMPAS algorithm to compute the risk score of reoffending for defendants, and also contains the criminal records within 2 years after the decision. The dataset consists of 6,172 samples with 10 attributes including gender, age category, race, etc. We concentrate on the subset of the data with violent recidivism, and we use race (African-American and Caucasian) as the sensitive attribute. We split 70$\%$ of the COMPAS data to train a Tikhonov-regularized logistic classifier, with the tuning penalty parameter $\lambda$ chosen in the range from 0 to 100 with 50 equi-distant points. The remaining $30\%$ of the data is used as the test samples for auditing.

Figure~\ref{fig:reg_figure} demonstrates the relation between the accuracy and the degree of fairness with respect to the regularization parameter $\lambda$. Strong regularization penalty (high values of $\lambda$) results in small values of the test statistic, but the classifier has low test accuracy. On the contrary, weak penalization leads to undesirable fairness level but higher prediction accuracy.
The pink dashed line in Figure~\ref{fig:reg_figure} shows the rejection threshold of the Wasserstein projection test at significance level $\alpha = 0.05$ for varying value of the regularization parameter $\lambda$. We can observe that the Wasserstein projection test recommends a rejection of the null hypothesis $\mc H_0^\opp$ for a wide range of $\lambda$. Only at $\lambda$ sufficiently large that the test fails to reject the null hypothesis.

%%%%%%%%%%%%%%%%%%%%%%%%%%%
\section{Concluding Remarks and Broader Impact} 
\label{sec:conclusion}

In this paper, we propose a statistical hypothesis test for group fairness of classification algorithms based on the theory of optimal transport. Our test statistic relies on computing the projection distance from the empirical distribution supported on the test samples to the manifold of distributions that renders the classifier fair. When the notion of fairness is chosen to be either the probabilistic equal opportunity or the probabilistic equalized odds, we show that the projection can be computed efficiently. We provide the limiting distribution of the test statistic and show that our Wasserstein projection test is asymptotically correct. Our proposed test also offers the flexibility to incorporate the geometric information of the feature space into testing procedure. Finally, analyzing the most favorable distribution can help interpreting the reasons behind the outcome of the test.

The Wasserstein projection hypothesis test is the culmination of a benevolent motivation and effort, and it aims to furnish the developers, the regulators and the general public a quantitative method to verify certain notions of fairness in the classification setting. At the same time, we acknowledge the risks and limitations of the results presented in this paper.

First, it is essential to keep in mind that this paper focuses on \textit{probabilistic} notions of fairness, in particular, we provide the Wasserstein statistical test for probabilistic equality of opportunity and probabilistic equalized odds. Probabilistic notions are only approximations of the original definitions, and the employment of probabilistic notions are solely for the technical purposes. Due to the sensitivity of the test result on the choice of fairness notions, a test that is designed for probabilistic notions may not be applicable to test for original notions of fairness due to the interplay with the threshold $\tau$ and the radical difference of both the test statistic and the limiting distribution. If a logistic classifier $h_\beta$ is rejected using our framework for probabilistic equal opportunity, it does \textit{not} necessarily imply that the classifier $h_\beta$ fails to satisfy the equal opportunity criterion, and vice versa. The same argument holds when we test for probabilistic equalized odds.

Second, the outcome of the Wasserstein projection test is dependent on the choice of the underlying metric on the feature, the sensitive attribute and the label spaces. Indeed, the test outcome can change if we switch the metric of the feature space, for example, from the Euclidean norm to a 1-norm. In the scope of this paper, we do not study how sensitive the test outcome is with respect to the choice of the metric, nor can we make any recommendation on the optimal choice of the metric. Nevertheless, it is reasonable to recommend that the metric should be chosen judiciously, and the action of tuning the metric in order to obtain favorable test outcome should be prohibited.

Third, to simplify the computation, we have assumed absolute trust on the sensitive attributes and the label. The users of our test should be mindful if there is potential corruption to these values. Moreover, our test is constructed under the assumption that there is no missing values in the test data. This assumption, unfortunately, may not hold in real-world implementations. Constructing statistical test which is robust to adversarial attacks and missing data using the Wasserstein projection framework is an interesting research direction.

Fourth, the statistical test in this paper is for a simple null hypothesis. In practice, the regulators may be interested in a relaxed fairness test in which the difference of the conditional expectations is upper bounded by a fixed positive constant $\epsilon$. The extension of the Wasserstein hypothesis testing framework for a composite null hypothesis is non-trivial, thus we leave this idea for future study.

Finally, any auditing process for algorithmic fairness can become a dangerous tool if it falls into the hand of unqualified or vicious inspectors. The results in this paper are developed to broaden our scientific understanding, and we recommend that the test and its outcomes should be used as an informative reference, but \textit{not} as an absolute certification to promote any particular classifier or as a justification for any particular classification decision.

We thus sincerely recommend that the tools proposed in this paper be exercised with utmost consideration.

\textbf{Acknowledgments.}
Research supported by the Swiss National Science Foundation under NCCR Automation, grant agreement 51NF40$\_$180545. Material in this paper is based upon work supported by the Air Force Office of Scientific Research under award number FA9550-20-1-0397. Additional support is gratefully acknowledged from NSF grants 1915967, 1820942, 1838676, and also from the China Merchant Bank. 
Finally, we would like to thank Nian Si and Michael Sklar for helpful comments and discussions.

%%%%%%%%%%%%%%%%%%%%%%%%%%%
\appendix
\section{Appendix - Proofs}

\subsection{Proofs of Section~\ref{sec:framework}}
\begin{proof}[Proof of Lemma~\ref{lemma:marginal}]
Because the fairness constraints are similar in both sets $\mc F_h$ and $\mc F_h(\wh p^N)$, it thus suffice to verify that $\QQ$ satisfies the marginal conditions $\QQ(A = a, Y = y) = \wh p_{ay}^N$ for all $(a, y) \in \mc A \times \mc Y$. By the definition of the Wasserstein distance and the ground metric $c$, there exists a coupling $\pi$ such that
\[
\Wass(\Pnom^N, \QQ)^2 = \EE_\pi[ (\| X' - X\| + \infty | A' - A| + \infty | Y' - Y|)^2]
\]
and the marginal distribution of $\pi$ are $\Pnom^N$ and $\QQ$, respectively. By the law of total probability and because $\Pnom^N$ is an empirical distribution, we can write $\pi = {N}^{-1}\sum_{i=1}^N \delta_{(\hat x_i , \hat a_i, \hat y_i)}\otimes \QQ_i$, where $\QQ_i$ denotes the conditional distributions of $(X, A, Y)$ given $(X', A', Y') = (\hat x_i , \hat a_i , \hat y_i)$ for all $i \in [N]$.

Suppose without any loss of generality that there exists a tuple $(a, y) \in \mc A \times \mc Y$ such that $\QQ(A = a, Y = y) > \wh p_{ay}^N$. This means
\begin{align*}
    \QQ(A = a, Y = y) &= \frac{1}{N}\sum_{i=1}^N \QQ_i(A = a, Y = y)> \frac{1}{N} \sum_{i=1}^N \mathbbm{1}_{(a, y)}(\wh a_i, \wh y_i).
\end{align*}
This implies that there must exist an index $i\opt \in [N]$ with $(\wh a_{i\opt}, \wh y_{i\opt}) \neq (a, y)$, and that
\[
    \QQ_{i\opt}(A = a, Y = y) > 0.
\]
However, this further implies that
\begin{align*}
    \Wass(\Pnom^N, \QQ)^2 &= \frac{1}{N} \sum_{i=1}^N \EE_{\QQ_i}[ (\| \wh x_i - X\| + \infty | \wh a_i - A| + \infty | \wh y_i - Y|)^2] \\
    &\ge \frac{1}{N} \EE_{\QQ_{i\opt}}[ (\| \wh x_{i\opt} - X\| + \infty | \wh a_{i\opt} - A| + \infty | \wh y_{i\opt} - Y|)^2] \\
    &\ge \frac{1}{N} \QQ_{i\opt}(A = a, Y = y) \left( \infty (\wh a_{i\opt} - a) + \infty (\wh y_{i\opt} - y) \right)^2 =\infty,
\end{align*}
where the equality follows from the decomposition of $\pi$ using the law of total probability and the first inequality follows because the transportation cost is nonnegative. This contradicts the fact that $\Wass(\Pnom^N, \QQ) < \infty$.
\end{proof}

%%%%%%%%%%%%%%%%%%%
\subsection{Proofs of Section~\ref{sec:EO}}

Before proving Proposition~\ref{prop:R-refor}, we first prove a preparatory lemma that verifies the Slater condition of the conic optimization problem. To shorten the notation, we write $\xi = (X, A, Y)$ and denote $\Xi = \mc X \times \mc A \times \mc Y$, $\wh \Xi_N = \{(\wh x_i, \wh a_i, \wh y_i)\}_{i=1}^N$. We assume that $N \ge 2$ and $\wh \xi_i = (\wh x_i, \wh a_i, \wh y_i)$ are distinct.  We use $\mc M_+(\Xi \times \wh \Xi_N)$ to denote the set of all nonnegative measures on $\Xi \times \wh \Xi_N$.
\begin{lemma}[Slater condition - Probabilistic equal opportunity] \label{lemma:slater-opp}
    Suppose that $\beta \neq 0$, $\wh p_{11}^N \in (0,1)$ and $\wh p_{01}^N \in (0, 1)$. Define the function 
    \[f_\beta(X, A, Y)\Let\frac{1}{\wh p_{11}^N} h_\beta(X) \mathbbm{1}_{(1,1)}(A, Y) -  \frac{1}{\wh p_{01}^N} h_\beta(X)  \mathbbm{1}_{(0,1)}(A, Y), \]
    and let $f$ be a vector-valued function $f: \Xi \times \wh \Xi_N \to \R^{N+1}$ 
    \[
        f(\xi, \xi') = \begin{pmatrix}
         \mathbbm{1}_{\hat \xi_i}(\xi') \\
         \vdots \\
         \mathbbm{1}_{\hat \xi_N}(\xi') \\
         f_\beta(\xi) 
    \end{pmatrix}.
    \]
    Then we have
    \[
        \begin{pmatrix}
            1/N \\
            \vdots \\
            1/N \\
            0
        \end{pmatrix} \in
        \mathrm{int}\left\{ 
            \EE_{\pi}[f(\xi, \xi')]: \pi \in \mc M_+(\Xi \times \wh \Xi_N)
        \right\}.
    \]
\end{lemma}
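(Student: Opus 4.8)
The plan is to turn the abstract moment set $\{\EE_\pi[f(\xi,\xi')]:\pi\in\mc M_+(\Xi\times\wh\Xi_N)\}$ into an explicit region by exploiting that the second marginal lives on the finite atomic set $\wh\Xi_N=\{\wh\xi_1,\ldots,\wh\xi_N\}$. Every $\pi\in\mc M_+(\Xi\times\wh\Xi_N)$ can be written as $\pi=\sum_{i=1}^N\mu_i\otimes\delta_{\wh\xi_i}$ with $\mu_i\in\mc M_+(\Xi)$, so the first $N$ coordinates of $\EE_\pi[f]$ are exactly the slice masses $m_i\defeq\mu_i(\Xi)\ge 0$, while the last coordinate equals $\int_\Xi f_\beta\,\diff\mu$ with $\mu\defeq\sum_{i=1}^N\mu_i$. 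This reduces the statement to a two-piece question: which mass vectors $m=(m_1,\ldots,m_N)$ and which values $v=\int_\Xi f_\beta\,\diff\mu$ are jointly attainable.

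Next I would determine the range of $f_\beta$. On the event $(A,Y)=(1,1)$ one has $f_\beta=h_\beta(X)/\wh p_{11}^N$, on $(A,Y)=(0,1)$ one has $f_\beta=-h_\beta(X)/\wh p_{01}^N$, and $f_\beta\equiv 0$ elsewhere. Since $\beta\neq 0$ the linear form $\beta^\top X$ takes every real value, hence $h_\beta(X)$ takes every value in $(0,1)$; together with the zero region this makes the range of $f_\beta$ the \emph{full} open interval $(-1/\wh p_{01}^N,1/\wh p_{11}^N)$, which contains $0$ strictly in its interior because $\wh p_{01}^N,\wh p_{11}^N\in(0,1)$. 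The benefit of a full-interval range is a one-line realization: given any $m$ with $M\defeq\sum_i m_i>0$ and any $v$ with $v/M\in(-1/\wh p_{01}^N,1/\wh p_{11}^N)$, pick $\xi^\star\in\Xi$ with $f_\beta(\xi^\star)=v/M$ and set $\mu_i=m_i\,\delta_{\xi^\star}$; this $\pi$ has slice masses $m_i$ and realizes $\int f_\beta\,\diff\mu=M\cdot(v/M)=v$.

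Combining the two steps gives the explicit description: the moment set equals $\{(m,v):m_i\ge 0,\ v\in(-M/\wh p_{01}^N,M/\wh p_{11}^N)\}$ together with the degenerate point $m=0,v=0$. It remains to evaluate this at the candidate $(1/N,\ldots,1/N,0)$, where $M=1$, every $m_i=1/N>0$, and $0$ sits strictly inside $(-1/\wh p_{01}^N,1/\wh p_{11}^N)$. For a perturbation $(1/N+\eps_1,\ldots,1/N+\eps_N,\delta)$ with the $\eps_i$ and $\delta$ small, each $1/N+\eps_i$ remains positive and $\delta$ stays inside $(-M'/\wh p_{01}^N,M'/\wh p_{11}^N)$ for the perturbed total mass $M'=1+\sum_i\eps_i$; the explicit construction above then produces an attaining $\pi$. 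Hence an entire open neighborhood of the target lies in the moment set, which is exactly the required interiority.

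The only delicate point is to confirm \emph{interiority} rather than mere membership, i.e.\ that the mass coordinates and the value coordinate can be perturbed independently in all directions. Independence of the $m_i$ is immediate from the slice decomposition, but the value $v$ is coupled to the total mass $M$ through the interval $(-M/\wh p_{01}^N,M/\wh p_{11}^N)$; I therefore have to verify that $0$ is a \emph{two-sided} interior point of this interval and that this stays true under small changes of $M$. Both facts rest precisely on the hypotheses $\beta\neq 0$ (ensuring $f_\beta$ attains strictly positive and strictly negative values, so the interval does not collapse to one side of $0$) and $\wh p_{01}^N,\wh p_{11}^N\in(0,1)$ (ensuring the interval has nonzero width on each side of $0$).
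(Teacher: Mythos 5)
Your proof is correct, and it reaches the conclusion by a somewhat different construction than the paper's. The paper fixes a concrete open box $\left(\tfrac{1}{2N},\tfrac{3}{2N}\right)^N\times\left(-\tfrac14,\tfrac14\right)$ around the target, builds for each $q$ in that box a measure supported on $N$ points of the form $(x_{\wh a_i\wh y_i},\wh a_i,\wh y_i)$ that automatically matches the first $N$ moments, and then solves explicitly for $h_\beta(x_{11})$ (or $h_\beta(x_{01})$, by a sign case split) so that the last moment equals $q_{N+1}$, checking that the required value of $h_\beta$ lands in $(0,1)$. You instead decompose every $\pi\in\mc M_+(\Xi\times\wh\Xi_N)$ as $\sum_i\mu_i\otimes\delta_{\wh\xi_i}$ (valid because the $\wh\xi_i$ are distinct, as the paper assumes), observe that the range of $f_\beta$ is the full open interval $\left(-1/\wh p_{01}^N,\,1/\wh p_{11}^N\right)$, and realize any admissible $(m,v)$ with a single point mass at some $\xi^\star$ satisfying $f_\beta(\xi^\star)=v/M$. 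Both arguments ultimately rest on the same two facts --- $h_\beta$ has range $(0,1)$ when $\beta\neq0$, and the empirical marginals are strictly positive --- but yours buys an exact description of the entire moment set (hence interiority is immediate and two-sided by inspection), at the cost of using measures whose $(A,Y)$-coordinates need not match those of $\xi'$; that is perfectly legitimate here since $\mc M_+(\Xi\times\wh\Xi_N)$ carries no coupling or cost constraint, and it is worth noting explicitly that this freedom is what makes the one-point construction admissible. The paper's construction is slightly more conservative (it preserves the $(a,y)$ labels of the atoms), which changes nothing for this lemma but mirrors the structure of the primal transport problem more closely.
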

\begin{proof}[Proof of Lemma~\ref{lemma:slater-opp}]
    It suffices to show that for any 
    \[
        q \in \left(\frac{1}{2N}, \frac{3}{2N} \right)^N \times \left(-\frac{1}{4}, \frac{1}{4} \right),
    \]
    there exists a nonnegative measure $\pi \in \mc M_+(\Xi \times \wh \Xi_N)$ such that $q = \EE_{\pi}[f(\xi, \xi')]$. We will verify this claim by constructing $\pi$ explicitly. To this end, define the following locations
    \[
        x_{ay} \in \mc X \quad \forall (a, y) \in \mc A \times \mc Y,
    \]
    and set $\pi \in \mc M_+(\Xi \times \wh \Xi_N)$ explicitly as
    \[
        \pi( \xi = (x_{\wh a_i \wh y_i}, \wh a_i, \wh y_i), \xi' = (\wh x_i, \wh a_i, \wh y_i) ) = q_i,
    \]
    and $\pi$ is 0 everywhere else. By construction, one can verify that $\EE_{\pi}[\mathbbm{1}_{\wh \xi_i}(\xi')] = q_i$ for all $i \in [N]$. If we define the following index sets $\mc I_{ay} = \{ i \in [N]: \wh a_i = a, \wh y_i = y\}$, then
    \begin{align*}
        \EE_{\pi}[f_\beta(\xi)] = (\wh p_{11}^N)^{-1} h_\beta(x_{11}) \sum_{i \in \mc I_{11}} q_i - (\wh p_{01}^N)^{-1} h_\beta(x_{01}) \sum_{i \in \mc I_{01}} q_i.
    \end{align*}
    It now remains to find the locations of $x_{11}$ and $x_{01}$ to balance the above equation. We have the following two cases.
    \begin{enumerate}[leftmargin = 5mm]
        \item Suppose that $q_{N+1} \ge 0$. In this case, choose $x_{01} \in \mc X$ such that $h_\beta(x_{01}) = \frac{1}{6}$. The condition $\EE_{\pi}[f_\beta(\xi)] = q_{N+1}$ requires that
        \[
            h_\beta(x_{11}) = \ds \frac{q_{N+1} + \frac{1}{6} (\wh p_{01}^N)^{-1} \sum_{i \in \mc I_{01}}q_i}{(\wh p_{11}^N)^{-1} \sum_{i \in \mc I_{11}} q_i}.
        \]
        Because $q_{N+1} \ge 0$ and $q_i$ are strictly positive, the term on the right hand side is strictly positive. Moreover, we have
        \begin{align*}
            (\wh p_{01}^N)^{-1} \sum_{i \in \mc I_{01}}q_i < \frac{3}{2} \quad \text{and} \quad
            (\wh p_{11}^N)^{-1} \sum_{i \in \mc I_{11}} q_i  > \frac{1}{2}
        \end{align*}
        for any feasible value of $q_i$, which implies that
        \[
            0 < \ds \frac{q_{N+1} + \frac{1}{6} (\wh p_{01}^N)^{-1} \sum_{i \in \mc I_{01}}q_i}{(\wh p_{11}^N)^{-1} \sum_{i \in \mc I_{11}} q_i} < \frac{\frac{1}{4} + \frac{1}{4}}{\frac{1}{2}} = 1.
        \]
        This implies the existence of $x_{11} \in \mc X$ so that $ \EE_{\pi}[f_\beta(\xi)] = q_{N+1}$.
        \item Suppose that $q_{N+1} < 0$. In this case, we can choose $x_{11} \in \mc X$ such that $h_\beta(x_{11}) = \frac{1}{6}$. A similar argument as in the previous case implies the existence of $x_{01} \in \mc X$ such that $ \EE_{\pi}[f_\beta(\xi)] = q_{N+1}$.
    \end{enumerate}
    Combining the two cases leads to the postulated results.
\end{proof}

We are now ready to prove Proposition~\ref{prop:R-refor}.
\begin{proof}[Proof of Proposition~\ref{prop:R-refor}]
    For the purpose of this proof, we define the function $\lambda: \mc A \times \mc Y \to \R$ as
\be \label{eq:lambda-def}
    \lambda(a, y) = \frac{\mathbbm{1}_{(1, 1)}(a, y)}{\wh p_{11}^N} - \frac{\mathbbm{1}_{(0, 1)}(a, y)}{\wh p_{01}^N}.
\ee
By definition of the squared distance function $\mc{R}^\opp$, we have
    \begin{align*}
       \mc{R}^\opp(\Pnom^N, \wh p^N) 
        =& \left\{ \begin{array}{cl}
        \Inf{\QQ \in \mc P}&\Wass(\Pnom^N, \QQ)^2 \\
        \st & (\hat p_{11}^N)^{-1} \EE_{\QQ}[h_\beta(X) \mathbbm{1}_{(1,1)}(A, Y)]  =  (\wh p_{01}^N)^{-1} \EE_{\QQ}[h_\beta(X)  \mathbbm{1}_{(0,1)}(A, Y)] \\
     & \QQ(A = a, Y = y) = \wh p_{ay}^N
        \quad\forall a \in \mc A,~ y\in \mc Y
        \end{array}
        \right. \\
        =& \left\{ \begin{array}{cll}
        \Inf{\pi}& \EE_{\pi}[ c\big( (X', A', Y'), (X, A, Y) \big)^2] \\
        \st & \pi \in \mc P((\mc X \times \mc A \times \mc Y) \times (\mc X \times \mc A \times \mc Y)) \\
        &\EE_{\pi}[f_\beta(X, A, Y)] = 0 \\
        & \pi(A = a, Y = y) = \wh p_{ay}^N
        &\hspace{-.5cm}\forall a \in \mc A,~ y\in \mc Y \\
        &\EE_{\pi}[ \mathbbm{1}_{(\wh x_i, \wh a_i, \wh y_i)}(X', A', Y')] = 1/N &\hspace{-.5cm}\forall i \in [N],
        \end{array}
        \right.
    \end{align*}
    where the function $f_\beta$ is defined as
 \begin{align} 
    f_\beta(x, a, y) &\Let (\wh p_{11}^N)^{-1} h_\beta(x) \mathbbm{1}_{(1,1)}(a, y) -  (\wh p_{01}^N)^{-1} h_\beta(x)  \mathbbm{1}_{(0,1)}(a, y) h_\beta(x) \lambda(a, y), \label{eq:f-def}
\end{align}
    and $\mc P(\mc S)$ denotes the set of all joint probability measures supported on $\mc S$.
    Because of the infinity individual cost on $\mc A$ and $\mc Y$ by the definition of cost in~\eqref{eq:cost}, any joint measure $\pi$ with finite objective value should satisfies $\pi (A= a, Y= y)  = \Pnom^N(A' = a, Y' = y) = \hat p_{ay}^N$ for any $a \in \mc A$ and $y \in \mc Y$. Thus, the set of constraints $\pi(A = a, Y = y) = \hat p_{ay}^N$ can be eliminated without alternating the optimization problem. We thus have
    \begin{align*}
        \mc{R}^\opp(\Pnom^N, \wh p^N) = 
        \left\{ \begin{array}{cl}
        \Inf{\pi}& \EE_{\pi}[ c\big( (X', A', Y'), (X, A, Y) \big)^2 ] \\
        \st & \pi \in \mc P((\mc X \times \mc A \times \mc Y) \times (\mc X \times \mc A \times \mc Y)) \\
        &\EE_{\pi}[f_\beta(X, A, Y)] = 0 \\
        &\EE_{\pi}[\mathbbm{1}_{(\wh x_i, \wh a_i, \wh y_i)}(X', A', Y')] = 1/N \quad \forall i \in [N].
        \end{array}
        \right.
    \end{align*}
    To shorten the notations, we use $\Xi = \mc X \times \mc A \times \mc Y$ and $\wh \Xi_N = \{(\wh x_i, \wh a_i, \wh y_i)\}$. Moreover, define the vector $\bar q$ and the vector-valued Borel measurable function on $\Xi \times \wh \Xi_N$ as \[\bar q= \begin{pmatrix}
         0 \\
         1/N \\
         \vdots \\
         1/N
    \end{pmatrix}~ \hspace{.5cm}
    f(\xi, \xi') = \begin{pmatrix}
         f_\beta(\xi) \\
         \mathbbm{1}_{\hat \xi_i}(\xi') \\
         \vdots \\
         \mathbbm{1}_{\hat \xi_N}(\xi')
    \end{pmatrix}.\]
    By using the introduced notation, we can reformulate the above optimization problem as
    \[\inf \left\{
         \EE_{\pi}[ c(\xi, \xi')^2]:\pi \in \mc M_+(\Xi \times \wh \Xi_N),
        \EE_{\pi}[f(\xi, \xi')] = \bar q \right\}
    \] which is a problem of moments. 
    By Lemma~\ref{lemma:slater-opp}, the above optimization problem satisfies the Slater condition, thus the strong duality result~\cite[Section~2.2]{ref:smith1995generalized} implies that 
    \begin{align}
    \label{eq:s_N_opp_refor}
        \mc{R}^\opp(\Pnom^N, \wh p^N) = \left\{ \begin{array}{cl}
            \sup & \ds   \frac{1}{N} \sum_{i=1}^N b_i \\
            \st & b \in \R^{N},~\dualvar \in \R \\
            &\ds \sum_{i=1}^N b_i \mathbbm{1}_{(\wh x_i, \wh a_i, \wh y_i)}(x', a', y') - \dualvar f_\beta(x, a, y) \le c\big( (x', a', y'), (x, a, y) \big)^2 \\
            &\hspace{1cm}\quad \forall (x, a, y), (x', a', y') \in \mc X \times \mc A \times \mc Y.
        \end{array} 
        \right. 
        \end{align}
         Note that the problem in~\eqref{eq:s_N_opp_refor} can be equivalently represented as
        \begin{align}
        & \left\{ \begin{array}{cl}
            \sup & \ds \frac{1}{N} \sum_{i=1}^N  b_i\\
            \st & b \in \R^{N},~\dualvar \in \R \\
            &\ds b_i - \dualvar f_\beta(x_i, a_i, y_i) \le c\big( (\wh x_i, \wh a_i, \wh y_i), (x_i, a_i, y_i) \big)^2 \\
            &\hspace{2cm}\forall (x_i, a_i, y_i) \in \mc X \times \mc A \times \mc Y, \forall i \in [N]
        \end{array}
        \right. \notag\\
        =& \Sup{\dualvar \in \R} ~ \frac{1}{N} \sum_{i=1}^N \Inf{x_i \in \mc X} \left\{ \| x_i - \wh x_i \|^2 + \dualvar f_\beta(x_i, \wh a_i, \wh y_i) \right\}. \label{eq:good-exp}
    \end{align}
    Because $f_\beta$ has the form~\eqref{eq:f-def}, we have the equivalent problem
    \[
     \Sup{\dualvar \in \R} ~ \frac{1}{N} \sum_{i=1}^N \Inf{x_i \in \mc X} \left\{ \| x_i - \wh x_i \|^2 + \dualvar \lambda(\wh a_i, \wh y_i) h_\beta(x_i) \right\} .
     \]
     For any $i \in \mc I_0$, $\lambda(\wh a_i, \wh y_i) = 0$, and in this case we have the optimal solution of $x_i$ satisfies $x_i\opt = \wh x_i$. As a consequence, the summation collapses to a partial sum over $\mc I_1$. This observation completes the proof.
\end{proof}

\begin{proof}[Proof of Theorem~\ref{thm:limiting-opp}]
Leveraging equation~\eqref{eq:good-exp}, we can express
\begin{align*}
    \mc{R}^\opp(\Pnom^N, \wh p^N) =
    \Sup{\gamma} \EE_{\Pnom^N}\left[\Inf{\Delta}\gamma h_\beta(X + \Delta) \left( \frac{ \mathbbm{1}_{(1,1)}(A, Y) }{\wh p_{11}^N} - \frac{ \mathbbm{1}_{(0,1)}(A, Y)}{\wh p_{01}^N} \right) + \|\Delta\|^2 \right].
\end{align*}
We define 
\begin{align*}
    H^N &\Let \frac{1}{\sqrt{N}} \sum_{i=1}^N h_\beta(\wh x_i) \left( \frac{ \mathbbm{1}_{(1,1)}(\wh a_i, \wh y_i) }{\wh p_{11}^N} - \frac{ \mathbbm{1}_{(0,1)}(\wh a_i, \wh y_i)}{\wh p_{01}^N} \right),
\end{align*}
and using this expression we can reformulate $\mc{R}(\Pnom^N, \wh p^N)$ as
\begin{align}
    \Sup{\gamma} \Bigg\{ \frac{1}{\sqrt{N}}\gamma H^N + \EE_{\Pnom^N}\Big[&\Inf{\Delta} \gamma [h_\beta(X + \Delta) - h_\beta(X)] \times
    \left( \frac{ \mathbbm{1}_{(1,1)}(A, Y) }{\wh p_{11}^N} - \frac{ \mathbbm{1}_{(0,1)}(A, Y)}{\wh p_{01}^N} \right)  +
    \| \Delta\|^2 \Big]\Bigg\}. \notag
\end{align}
Because $h_\beta$ is a sigmoid function, it is differentiable, and by the fundamental theorem of calculus, we have for any $x \in \mc X$, 
\[
    h_\beta(x + \Delta) - h_\beta(x) = \int_0^1 \nabla h_\beta(x+t\Delta) \cdot \Delta \mathrm{d}t,
\]
where $\cdot$ represents the inner product on $\R^d$. By applying variable transformations $\gamma \leftarrow \gamma \sqrt{N}$ and $\Delta \leftarrow \Delta \sqrt{N}$, we have
\begin{align*}
    & N \times \mc{R}^\opp(\Pnom^N, \wh p^N)\\
    =& 
    \Sup{\gamma} \Bigg\{ \gamma H^N + \EE_{\Pnom^N}\Big[\Inf{\Delta} \gamma \int_0^1 \nabla h_\beta\left(X+t\frac{\Delta}{\sqrt{N}}\right) \cdot \Delta \mathrm{d}t \times   \left( \frac{ \mathbbm{1}_{(1,1)}(A, Y) }{\wh p_{11}^N} - \frac{ \mathbbm{1}_{(0,1)}(A, Y)}{\wh p_{01}^N} \right)  + \| \Delta\|^2 \Big]\Bigg\} \\
    = &
    \Sup{\gamma} \Bigg\{ \gamma H^N + \frac{1}{N} \sum_{i=1}^N \Inf{\Delta_i} \gamma \int_0^1 \nabla h_\beta\left(\wh x_i + t\frac{\Delta_i}{\sqrt{N}}\right) \cdot \Delta_i \mathrm{d}t \times\left( \frac{ \mathbbm{1}_{(1,1)}(\wh a_i, \wh y_i) }{\wh p_{11}^N} - \frac{ \mathbbm{1}_{(0,1)}(\wh a_i, \wh y_i)}{\wh p_{01}^N} \right)  + \| \Delta_i\|^2 \Bigg\},
\end{align*}
where the second equality follows by the definition of the empirical distribution $\Pnom^N$. 
For any values of $\wh p_{01}^N > 0$ and $\wh p_{11}^N > 0$, we have for any $\gamma \neq 0$, 
\begin{align*}
    &\PP\!\left(\! \left\| \gamma \nabla h_\beta(X)\left( \frac{ \mathbbm{1}_{(1,1)}(A, \!Y) }{\wh p_{11}^N}\! -\! \frac{ \mathbbm{1}_{(0,1)}\!(A,\! Y)}{\wh p_{01}^N} \right)\!\right\|_*\!\! =\! 0\right) \!=\! \PP\big( (\hat p_{11}^N)^{-1}  \mathbbm{1}_{(1,1)}\!(A, \!Y) \!\!=\!\! (\hat p_{01}^N)^{-1}  \mathbbm{1}_{(0,1)}\!(A, Y) \big) = \PP( Y \!=\! 0 ) \!<\! 1,
\end{align*}
which implies that 
\[
    \PP\left( \left\| \gamma \nabla h_\beta(X)\left( \frac{ \mathbbm{1}_{(1,1)}(A, Y) }{\wh p_{11}^N} - \frac{ \mathbbm{1}_{(0,1)}(A, Y)}{\wh p_{01}^N} \right)\right\|_* > 0\right) > 0.
\]
This coincides with Assumption A4 in~\cite{ref:blanchet2019rwpi}. Using the same argument as in the proof of \cite[Theorem~3]{ref:blanchet2019rwpi}, we can show that the optimal solution for $\gamma$ and $\Delta_i$ belong to a compact set with high probability.
Moreover, we have
\begin{align*}
    \frac{ \mathbbm{1}_{(1,1)}(\wh a_i, \wh y_i) }{\wh p_{11}^N} - \frac{ \mathbbm{1}_{(0,1)}(\wh a_i, \wh y_i)}{\wh p_{01}^N} = 
    \frac{ \mathbbm{1}_{(1,1)}(\wh a_i, \wh y_i) }{ p_{11}} \left(1 - o_{\mathbb{P}}(1)  \right) - \frac{ \mathbbm{1}_{(0,1)}(\wh a_i, \wh y_i)}{ p_{01}} \left(1 - o_{\mathbb{P}}(1)\right),
\end{align*}
and thus
\begin{align*}
    &N \times \mc{R}^\opp(\Pnom^N, \wh p^N) \\
    =& 
    \Sup{\gamma} \Bigg\{ \gamma H^N + \frac{1}{N} \sum_{i=1}^N \Inf{\Delta_i} \gamma \int_0^1 \nabla h_\beta\left(\wh x_i + t\frac{\Delta_i}{\sqrt{N}}\right) \cdot \Delta_i \mathrm{d}t \times\left( \frac{ \mathbbm{1}_{(1,1)}(\wh a_i, \wh y_i) }{ p_{11}} - \frac{ \mathbbm{1}_{(0,1)}(\wh a_i, \wh y_i)}{p_{01}} \right)  + \| \Delta_i\|^2  + o_{\PP}(1)\Bigg\}.
\end{align*}
In the next step, fix any tuple $(a, y) \in \mc A \times \mc Y$, and denote the following constant \[
            M_1 = | p_{11}^{-1}\mathbbm{1}_{(1, 1)}(a, y) -  p_{01}^{-1} \mathbbm{1}_{(0,1)}(a, y)|.
        \]
        We find
        \begin{align*}
            &\| [\nabla h_\beta(x+\Delta) - \nabla h_\beta(x)] (p_{11}^{-1}\mathbbm{1}_{(1, 1)}(a, y) -  p_{01}^{-1} \mathbbm{1}_{(0,1)}(a, y))\|_* \\
            =&| h_\beta(x+\Delta) - h_\beta(x) - h_\beta(x+\Delta)^2 + h_\beta(x)^2| \| \beta\|_* M_1 \\
            \le & (|h_\beta(x + \Delta) - h_\beta(x)| + |h_\beta(x + \Delta)^2 - h_\beta(x)^2| )\| \beta\|_* M_1.
        \end{align*}
        Because the sigmoid function is slope-restricted in the interval $[0, 1]$~\cite[Proposition~2]{ref:fazlyab2019safety}, we have
        \[
            0 \le \frac{h_\beta(x+\Delta) - h_\beta(x)}{\beta^\top \Delta} \le 1,
        \]
        which implies that
        \[
            |h_\beta(x + \Delta) - h_\beta(x)| \leq |\beta^\top \Delta| \leq \|\beta\|_* \|\Delta\|,
        \]
        where the second inequality follows from H\"{o}lder inequality. 
        Using a similar argument, we have
        \begin{align*} 
             |h_\beta(x + \Delta)^2 - h_\beta(x)^2|
             =&\le (h_\beta(x + \Delta)+ h_\beta(x)) |h_\beta(x + \Delta) - h_\beta(x)| \le 2 \|\beta\|_* \|\Delta\|.
        \end{align*}
        Combining these inequalities, we conclude that
        \begin{align*}
        &\| [\nabla h_\beta(x+\Delta) - \nabla h_\beta(x)] (p_{11}^{-1}\mathbbm{1}_{(1, 1)}(a, y) -  p_{01}^{-1} \mathbbm{1}_{(0,1)}(a, y))\|_2  \le 3\|\beta\|_*^2 M_1 \|\Delta\|,
        \end{align*}
        and thus Assumption 6' in~\cite{ref:blanchet2019rwpi} is satisfied. If $H^N \xrightarrow{d.} \tilde Z$ for some random variable $\tilde Z$, then \cite[Lemma~4]{ref:blanchet2019rwpi} asserts that
\begin{align}
\label{eq:limiting_dist_opp_fin}
        &N \times \mc{R}^\opp(\Pnom^N, \wh p^N) \nonumber \\ 
        \xrightarrow{d.}
        &\Sup{\gamma \in \R} \left\{\gamma \tilde Z - \frac{\gamma^2}{4} \EE_{\PP} \left[\left\| \nabla h_\beta(X) \left( \frac{\mathbbm{1}_{(1, 1)}(A, Y)}{ p_{11}} - \frac{\mathbbm{1}_{(0, 1)}(A, Y)}{p_{01}} \right) \right\|_*^2\right] \right\} \nonumber \\ 
        =& \left(\EE_{\PP}\left [\left \| \nabla h_\beta(X) \left( \frac{\mathbbm{1}_{(1, 1)}(A, Y)}{ p_{11}}- \frac{\mathbbm{1}_{(0, 1)}(A, Y)}{p_{01}} \right) \right\|_*^2 \right] \right)^{-1} \tilde Z^2, \notag
\end{align}
where the equality sign follows from the fact that for any realization of $\tilde Z$, the optimal solution of $\gamma$ is 
\[
    \gamma\opt(\tilde Z) = \frac{2 \tilde Z}{\EE_{\PP}\left [\left\| \nabla h_\beta(X) \big( \frac{\mathbbm{1}_{(1, 1)}(A, Y)}{ p_{11}}- \frac{\mathbbm{1}_{(0, 1)}(A, Y)}{p_{01}} \big) \right\|_*^2\right]}.
\]
We now study the limit distribution $\tilde Z$. In the next step, we study the limit of $H^N$. \begin{align*}
H^{N} 
&=\frac{1}{\sqrt{N}}\sum_{i=1}^{N}h_\beta(\wh x_{i})\left( \frac{\mathbbm{1}_{(1,1)}\left(
\wh a_{i}, \wh y_{i}\right) }{\wh p_{11}^N}-\frac{\mathbbm{1}_{(0,1)}\left( \wh a_{i}, \wh y_{i}\right) 
}{\hat{p}_{01}^N}\right) \\
&=\frac{1}{\hat{p}_{11}^N\hat{p}_{01}^N} \times\frac{1}{\sqrt{N}}\sum_{i=1}^{N}h_\beta(\wh x_{i})\left( \wh p
_{01}^N \mathbbm{1}_{(1,1)}\left( \wh a_{i}, \wh y_{i}\right) -\wh{p}_{11}^N \mathbbm{1}_{(0,1)}\left(
\wh a_{i}, \wh y_{i}\right) \right)\\
&= \frac{1}{\hat{p}_{11}^N\hat{p}_{01}^N} \times \Big( \frac{1}{\sqrt{N}}\sum_{i=1}^{N}h_\beta(\wh x_{i})\big( p
_{01} \mathbbm{1}_{(1,1)}\left( \wh a_{i}, \wh y_{i}\right) -{p}_{11}\mathbbm{1}_{(0,1)}(
\wh a_{i}, \wh y_{i}) \big) \\
& \hspace{2cm} + \sqrt{N} ( \hat{p}_{01}^N-p_{01}) \frac{1}{N}%
\sum_{i=1}^{N}\mathbbm{1}_{(1,1)}( \wh a_{i}, \wh y_{i}) h_\beta(\wh x_{i}) \\
& \hspace{2cm} - \sqrt{N} ( \hat{p}_{11}^N-p_{11}) \frac{1}{N}%
\sum_{i=1}^{N}\mathbbm{1}_{(0,1)}( \wh a_{i}, \wh y_{i}) h_\beta(\wh x_{i})  \Big)
\end{align*}
By Slutsky's theorem, we have
\begin{align*}
&\sqrt{N}( \hat{p}_{01}^N-p_{01}) \times
\frac{1}{N}\sum_{i=1}^{N}\left(
\mathbbm{1}_{(1,1)}\left( \wh a_{i}, \wh y_{i}\right) h_\beta(\wh x_{i})-\EE_{\PP}[\mathbbm{1}_{(1,1)}(
A, Y) h_\beta(X)]\right) =o_{\PP}(1), \\
&\sqrt{N}( \hat{p}_{11}^N-p_{11}) \times \frac{1}{N} \sum_{i=1}^{N}\left(
\mathbbm{1}_{(0,1)}( \wh a_{i},\wh y_{i}) h_\beta(\wh x_{i})-\EE_{\PP}[1_{(0,1)}\left(
A,Y\right) h_\beta(X)]\right) =o_{\PP}(1).
\end{align*}
Under the null hypothesis $\mc H_0^\opp$, we have
\begin{align*}
H^{N}
&= \frac{1}{\hat{p}_{11}^N\hat{p}_{01}^N} \times \Big[ \frac{1}{\sqrt{N}}\sum_{i=1}^{N}h_\beta(\wh x_{i})\left( p
_{01} \mathbbm{1}_{(1,1)}\left( \wh a_{i}, \wh y_{i}\right) -{p}_{11}\mathbbm{1}_{(0,1)}(
\wh a_{i}, \wh y_{i}) \right) \\
& \quad + \sqrt{N} \left(\frac{1}{N} \sum_{i=1}^{N}\mathbbm{1}_{(0,1)}( \wh a_{i}, \wh y_{i})-p_{01}\right) \EE_{\PP}[\mathbbm{1}_{(1,1)}(
A, Y) h_\beta(X)] \\
& \quad - \sqrt{N} \left( \frac{1}{N}%
\sum_{i=1}^{N}\mathbbm{1}_{(1,1)}( \wh a_{i}, \wh y_{i})-p_{11}\right)  \EE_{\PP}[\mathbbm{1}_{(0,1)}(
A, Y) h_\beta(X)] \Big] + o_{\PP}(1)\\
&= \frac{1}{\hat{p}_{11}^N\hat{p}_{01}^N} \times \Big[ \frac{1}{\sqrt{N}}\sum_{i=1}^{N}h_\beta(\wh x_{i})\left( p
_{01} \mathbbm{1}_{(1,1)}\left( \wh a_{i}, \wh y_{i}\right) -{p}_{11}\mathbbm{1}_{(0,1)}(
\wh a_{i}, \wh y_{i}) \right) \\
& \quad + \frac{1}{\sqrt{N}} \sum_{i=1}^{N}\left(\mathbbm{1}_{(0,1)}( \wh a_{i}, \wh y_{i})-p_{01}\right) \EE_{\PP}[\mathbbm{1}_{(1,1)}(
A, Y) h_\beta(X)] \\
& \quad - \frac{1}{\sqrt{N}} \sum_{i=1}^{N}\left(\mathbbm{1}_{(1,1)}( \wh a_{i}, \wh y_{i})-p_{11}\right)  \EE_{\PP}[\mathbbm{1}_{(0,1)}(
A, Y) h_\beta(X)] \Big] + o_{\PP}(1) \\
& \xrightarrow{d.} \tilde Z,
\end{align*}
where 
$\tilde Z\sim  \frac{1}{p_{11}p_{01}} \mc N(0, \sigma^2 )$, $\sigma^2 = \mathrm{Cov}(Z)$, where $Z$ is defined as in the theorem statement. Defining $\theta$ completes the proof.
\end{proof}

%%%%%%%%%%%%%%%%%%%%%%%%%%%%%%%%
\subsection{Proofs of Section~\ref{sec:EOdd}}

The proof of Proposition~\ref{prop:R-refor-odd} necessitates the following preparatory lemma. We use the same notations with Lemma~\ref{lemma:slater-opp}.
\begin{lemma}[Slater condition - Probabilistic equalized odds] \label{lemma:slater-odd}
    Suppose that $\beta \neq 0$ and $\wh p_{ay}^N \in (0,1)$ for all $(a, y) \in \mc A \times \mc Y$. Define the functions
    \begin{align*}
        f_\beta(X, A, Y)&\Let\frac{1}{\wh p_{11}^N} h_\beta(X) \mathbbm{1}_{(1,1)}(A, Y) -  \frac{1}{\wh p_{01}^N} h_\beta(X)  \mathbbm{1}_{(0,1)}(A, Y),\\
        g_\beta(X, A, Y)&\Let\frac{1}{\wh p_{10}^N} h_\beta(X) \mathbbm{1}_{(1,0)}(A, Y) -  \frac{1}{\wh p_{00}^N} h_\beta(X)  \mathbbm{1}_{(0,0)}(A, Y),
    \end{align*}
    and let $f$ be a vector-valued function $f: \Xi \times \wh \Xi_N \to \R^{N+2}$ 
    \[
        f(\xi, \xi') = \begin{pmatrix}
         \mathbbm{1}_{\hat \xi_i}(\xi') \\
         \vdots \\
         \mathbbm{1}_{\hat \xi_N}(\xi') \\
         f_\beta(\xi) \\
         g_\beta(\xi)
    \end{pmatrix}
    \]
    Then we have
    \[
        \begin{pmatrix}
            1/N \\
            \vdots \\
            1/N \\
            0 \\
            0
        \end{pmatrix} \in
        \mathrm{int}\left\{ 
            \EE_{\pi}[f(\xi, \xi')]: \pi \in \mc M_+(\Xi \times \wh \Xi_N)
        \right\}.
    \]
\end{lemma}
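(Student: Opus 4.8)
The plan is to mirror verbatim the structure of the proof of Lemma~\ref{lemma:slater-opp}, the only genuine difference being that there are now two fairness moments, $f_\beta$ and $g_\beta$, rather than one. It suffices to show that for every $q$ in the full-dimensional open box
\[
q \in \left( \frac{1}{2N}, \frac{3}{2N} \right)^N \times \left( -\frac{1}{4}, \frac{1}{4}\right)^2
\]
there exists a nonnegative measure $\pi \in \mc M_+(\Xi \times \wh \Xi_N)$ with $\EE_\pi[f(\xi, \xi')] = q$, since this places the target vector $(1/N, \ldots, 1/N, 0, 0)$ in the interior of the moment set. First I would introduce four free locations $x_{ay} \in \mc X$, one per group $(a,y) \in \mc A \times \mc Y$, and set $\pi$ to put mass $q_i$ at $(\xi, \xi') = ((x_{\wh a_i \wh y_i}, \wh a_i, \wh y_i), (\wh x_i, \wh a_i, \wh y_i))$ and zero elsewhere. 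By construction this forces $\EE_\pi[\mathbbm{1}_{\wh \xi_i}(\xi')] = q_i$, so the first $N$ coordinates match automatically, and the task reduces to selecting the $x_{ay}$ so that $\EE_\pi[f_\beta(\xi)] = q_{N+1}$ and $\EE_\pi[g_\beta(\xi)] = q_{N+2}$.

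The key structural observation, which makes the extension painless, is that the two fairness constraints are \emph{decoupled}: $f_\beta$ depends only on the label-$1$ locations $x_{11}, x_{01}$, whereas $g_\beta$ depends only on the label-$0$ locations $x_{10}, x_{00}$. Hence the two equations can be balanced independently, and each one is exactly the single-constraint problem already solved in Lemma~\ref{lemma:slater-opp}. For the $f_\beta$ equation I would split into $q_{N+1} \ge 0$ and $q_{N+1} < 0$; in the first case fix $x_{01}$ with $h_\beta(x_{01}) = 1/6$ and solve for the required value of $h_\beta(x_{11})$. The proper-marginal hypothesis $\wh p_{ay}^N \in (0,1)$ together with the box constraints gives $(\wh p_{01}^N)^{-1} \sum_{i \in \mc I_{01}} q_i < 3/2$ and $(\wh p_{11}^N)^{-1} \sum_{i \in \mc I_{11}} q_i > 1/2$, pinning the required $h_\beta(x_{11})$ strictly inside $(0,1)$; since $\beta \neq 0$ the sigmoid attains every value in $(0,1)$, so a valid $x_{11}$ exists. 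The case $q_{N+1} < 0$ is symmetric. The identical argument, applied to $g_\beta$ using the groups $\mc I_{10}, \mc I_{00}$ and the marginals $\wh p_{10}^N, \wh p_{00}^N$, balances $q_{N+2}$.

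I do not anticipate any serious obstacle. Because the two constraints use disjoint location sets, the proof is essentially two parallel copies of the equal-opportunity argument, and the only point requiring care is verifying that the bounds induced by the box on $q$ keep both target sigmoid values strictly within $(0,1)$. Assembling the four chosen locations into a single measure $\pi$ realizes the whole vector $q$; as $q$ ranges over an open box containing the target point, this exhibits the target point in the interior of the moment set, which is the claimed Slater condition.
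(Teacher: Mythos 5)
Your proposal is correct and follows essentially the same route as the paper: the paper's own proof likewise reduces to realizing every $q$ in the box $\left(\tfrac{1}{2N},\tfrac{3}{2N}\right)^N\times\left(-\tfrac14,\tfrac14\right)^2$ via a measure supported on four group-dependent locations, and exploits exactly the decoupling you identify — $f_\beta$ is balanced by $x_{11},x_{01}$ and $g_\beta$ independently by $x_{10},x_{00}$, each by the case analysis from Lemma~\ref{lemma:slater-opp}. No gaps.
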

\begin{proof}[Proof of Lemma~\ref{lemma:slater-odd}]
    It suffices to show that for any 
    \[
        q \in \left(\frac{1}{2N}, \frac{3}{2N} \right)^N \times \left(-\frac{1}{4}, \frac{1}{4} \right)^2,
    \]
    there exists a nonnegative measure $\pi \in \mc M_+(\Xi \times \wh \Xi_N)$ such that $q = \EE_{\pi}[f(\xi, \xi')]$. The proof follows a similar argument as that of Lemma~\ref{lemma:slater-opp} by noticing that
    \begin{align*}
        \EE_{\pi}[g_\beta(\xi)] = (\wh p_{10}^N)^{-1} h_\beta(x_{10}) \sum_{i \in \mc I_{10}} q_i - (\wh p_{00}^N)^{-1} h_\beta(x_{00}) \sum_{i \in \mc I_{00}} q_i,
    \end{align*}
    and the specification of $x_{10}$ and $x_{00}$ can be achieved using similar steps.
\end{proof}

\begin{proof}[Proof of Proposition~\ref{prop:R-refor-odd}] 
To ease the exposition, we let the function $\Lambda: \mc A \times \mc Y \to \R^2$ be defined as
\begin{align*}
    \Lambda(a, y) =& \begin{pmatrix} (\wh p_{11}^N)^{-1} \mathbbm{1}_{(1, 1)}(a, y) - (\wh p_{01}^N)^{-1} \mathbbm{1}_{(0, 1)}(a, y) \\ (\wh p_{10}^N)^{-1} \mathbbm{1}_{(1, 0)}(a, y) - (\wh p_{00}^N)^{-1} \mathbbm{1}_{(0, 0)}(a, y)\end{pmatrix}  .
\end{align*}
Moreover, we define $f_\beta$ as in~\eqref{eq:f-def}, and additionally define $g_\beta$ as
\[
    g_\beta(x, a, y) = (\hat p_{10}^N)^{-1} h_\beta(x) \mathbbm{1}_{(1,0)}(a, y) - (\hat p_{00}^N)^{-1} h_\beta(x)  \mathbbm{1}_{(0,0)}(a, y).
\]
From the definition of $\mc{R}^\odd(\Pnom^N, \wh p^N)$, we have
\begin{align*}
        \mc{R}^\odd(\Pnom^N, \wh p^N) 
        =& \left\{ \begin{array}{cl}
        \Inf{\QQ \in \mc P}&\Wass(\Pnom^N, \QQ)^2 \\
        \st & (\hat p_{11}^N)^{-1} \EE_{\QQ}[h_\beta(X) \mathbbm{1}_{(1,1)}(A, Y)]  = (\hat p_{01}^N)^{-1} \EE_{\QQ}[h_\beta(X)  \mathbbm{1}_{(0,1)}(A, Y)] \\
        & (\hat p_{10}^N)^{-1} \EE_{\QQ}[h_\beta(X) \mathbbm{1}_{(1,0)}(A, Y)] = (\hat p_{00}^N)^{-1} \EE_{\QQ}[h_\beta(X)  \mathbbm{1}_{(0,0)}(A, Y)] \\
        & \QQ(A = a, Y = y) = \hat p_{ay}^N
        \quad\forall a \in \mc A,~ y\in \mc Y
        \end{array}
        \right. \\
        =& \left\{ \begin{array}{cll}
        \Inf{\pi}& \EE_{\pi}[ c\big( (X', A', Y'), (X, A, Y) \big)^2] \\
        \st & \pi \in \mc P((\mc X \times \mc A \times \mc Y) \times (\mc X \times \mc A \times \mc Y)) \\
        &\EE_{\pi}[f_\beta(X, A, Y)] = 0 \\
        &\EE_{\pi}[g_\beta(X, A, Y)] = 0 \\
        & \pi(A = a, Y = y) = \hat p_{ay}^N 
        &\forall a \in \mc A,~ y\in \mc Y \\
        &\EE_{\pi}[ \mathbbm{1}_{(\wh x_i, \wh a_i, \wh y_i)}(X', A', Y')] = 1/N &\forall i \in [N].
        \end{array}
        \right.
    \end{align*}
    To shorten the notations, we use $\Xi = \mc X \times \mc A \times \mc Y$ and $\wh \Xi_N = \{(\wh x_i, \wh a_i, \wh y_i)\}$. Moreover, define the vector $\bar q$ and the vector-valued Borel measurable function on $\Xi \times \wh \Xi_N$ as \[\bar q= \begin{pmatrix}
         0 \\ 
         0 \\
         1/N \\
         \vdots \\
         1/N
    \end{pmatrix}~ \hspace{.5cm}
    f(\xi, \xi') = \begin{pmatrix}
         f_\beta(\xi) \\
         g_\beta(\xi) \\
         \mathbbm{1}_{\hat \xi_i}(\xi') \\
         \vdots \\
         \mathbbm{1}_{\hat \xi_N}(\xi')
    \end{pmatrix}.\]
    By using the introduced notation, we can reformulate the above optimization problem as
    \[\inf \left\{
         \EE_{\pi}[ c(\xi, \xi')^2]:\pi \in \mc M_+(\Xi \times \wh \Xi_N),
        \EE_{\pi}[f(\xi, \xi')] = \bar q \right\}
    \] which is a problem of moments. 
    By Lemma~\ref{lemma:slater-odd}, the above optimization problem satisfies the Slater condition, thus the strong duality result~\cite[Section~2.2]{ref:smith1995generalized} implies that 
    \begin{align*}
        \mc{R}^\odd(\Pnom^N, \wh p^N) 
        =& \left\{ \begin{array}{cl}
            \sup & \ds  \frac{1}{N} \sum_{i=1}^N b_i \\
            \st & b \in \R^{N},~\dualvar \in \R,~\zeta \in \R \\
            &\ds\sum_{i=1}^N b_i \mathbbm{1}_{(\wh x_i, \wh a_i, \wh y_i)}(x', a', y') - \dualvar f_\beta(x, a, y)- \zeta g_\beta(x, a, y) \le c\big( (x', a', y'), (x, a, y) \big)^2 \\
            &\hspace{1cm}\quad \forall (x, a, y), (x', a', y') \in \mc X \times \mc A \times \mc Y
        \end{array}
        \right. \\
        =& \left\{ \begin{array}{cl}
            \sup & \ds \frac{1}{N} \sum_{i=1}^N  b_i \\
            \st & b \in \R^{N},~\dualvar \in \R,~\zeta \in \R \\
            &\ds b_i - \dualvar f_\beta(x_i, a_i, y_i)- \zeta g_\beta(x_i, a_i, y_i)  \le c\big( (\wh x_i, \wh a_i, \wh y_i), (x_i, a_i, y_i) \big)^2 \\
            &\hspace{1cm}\forall (x_i, a_i, y_i) \in \mc X \times \mc A \times \mc Y, \forall i \in [N]
        \end{array}
        \right. \\
        =& \Sup{\dualvar, \zeta} ~ \frac{1}{N} \sum_{i=1}^N \Inf{x_i \in \mc X} \left\{ \| x_i - \wh x_i \|^2 + \gamma f_\beta(x_i, \hat a_i,\hat y_i) + \zeta g_\beta(x_i,\hat a_i, \hat y_i) \right\},
    \end{align*}
    By definition of $f_\beta$, $g_\beta$ and the parameters $\lambda_i$, we have
    \begin{align*}
         \gamma f_\beta(x_i, \hat a_i,\hat y_i) + \zeta g_\beta(x_i,\hat a_i, \hat y_i) 
         = (\dualvar \lambda_i \mathbbm{1}_1 (\wh y_i) + \zeta
     \lambda_i \mathbbm{1}_0 (\wh y_i)) h_\beta(x_i).
    \end{align*}
    The proof is complete.
\end{proof}

\begin{proof}[Proof of Lemma~\ref{lemma:R-compute-odd}]
Because $[N] = \mc I_0 \cup \mc I_1$, we can write
\begin{align*}
    \mc{R}^\odd(\Pnom^N, \wh p^N) 
     =\Sup{\dualvar \in \R} ~\frac{1}{N} \sum_{i \in \mc I_1} \Inf{x_i \in \mc X} \left\{ \| x_i - \wh x_i \|^2 +  
     \dualvar \lambda_i h_\beta(x_i) \right\}    + \Sup{\zeta \in \R}~\frac{1}{N} \sum_{i \in \mc I_0} \Inf{x_i \in \mc X} \left\{ \| x_i - \wh x_i \|^2 +  
     \zeta \lambda_i h_\beta(x_i) \right\} .
\end{align*}
Note that the first supremum coincides with $\mc{R}^\opp(\Pnom^N, \wh p^N)$, and the second supremum is $U_N$. Under the Euclidean norm assumption, we can use Lemma~\ref{lemma:individual} to reformulate the inner infimum problems for $U_N$, which leads to~\eqref{eq:U_N}.
\end{proof}

\begin{proof}[Proof of Theorem~\ref{thm:limiting-odd}]
By applying a similar duality argument as in the proof of Theorem~\ref{thm:limiting-opp}, we can reformulate $\mc{R}^\odd(\Pnom^N, \wh p^N)$ as
\begin{align*}
    \mc{R}^\odd(\Pnom^N, \wh p^N) 
    &= \Sup{\gamma, \zeta} \EE_{\Pnom^N}\left[\Inf{\Delta} \left\{ \begin{array}{l}
    \gamma h_\beta(X + \Delta) \big( \frac{ \mathbbm{1}_{(1,1)}(A, Y) }{\wh p_{11}^N} - \frac{ \mathbbm{1}_{(0,1)}(A, Y)}{\wh p_{01}^N} \big) \\
    + \zeta h_\beta(X + \Delta) \big( \frac{ \mathbbm{1}_{(1,0)}(A, Y) }{\wh p_{10}^N} - \frac{ \mathbbm{1}_{(0,0)}(A, Y)}{\wh p_{00}^N} \big) + \|\Delta\|^2
    \end{array}
    \right\}
    \right] \\
    &= \Sup{\gamma, \zeta} \left\{ \frac{1}{\sqrt{N}}(\zeta H_0^N+ \gamma H_1^N) + \EE_{\Pnom^N}\left[\Inf{\Delta} \left( \begin{array}{l}
    \gamma [h_\beta(X + \Delta) - h_\beta(X)] \big( \frac{ \mathbbm{1}_{(1,1)}(A, Y) }{\wh p_{11}^N} - \frac{ \mathbbm{1}_{(0,1)}(A, Y)}{\wh p_{01}^N} \big) \\
    + \zeta [h_\beta(X + \Delta) - h_\beta(X)] \big( \frac{ \mathbbm{1}_{(1,0)}(A, Y) }{\wh p_{10}^N} - \frac{ \mathbbm{1}_{(0,0)}(A, Y)}{\wh p_{00}^N} \big) \\
    + \|\Delta\|^2
    \end{array}
    \right)
    \right]
    \right\}
\end{align*}
with the random variables $H_0^N$ and $H_1^N$ being defined as
\begin{align*}
        H_0^N &\Let \frac{1}{\sqrt{N}} \sum_{i=1}^N h_\beta(\wh x_i) \big( \frac{ \mathbbm{1}_{(1,0)}(\wh a_i, \wh y_i) }{\wh p_{10}^N} - \frac{ \mathbbm{1}_{(0,0)}(\wh a_i, \wh y_i)}{\wh p_{00}^N} \big),\\
        H_1^N &\Let \frac{1}{\sqrt{N}} \sum_{i=1}^N h_\beta(\wh x_i) \big( \frac{ \mathbbm{1}_{(1,1)}(\wh a_i, \wh y_i) }{\wh p_{11}^N} - \frac{ \mathbbm{1}_{(0,1)}(\wh a_i, \wh y_i)}{\wh p_{01}^N} \big).
\end{align*}
Notice that the condition
\[
    \PP\left( \Big \Vert\begin{pmatrix}
            \gamma_1 \\
            \gamma_0
        \end{pmatrix}^\top
        \Lambda(A, Y) \nabla h_\beta(X) \Big \Vert_* > 0\right) > 0
\]
is satisfied for any $(\gamma_0, \gamma_1) \not= 0$. Using the same argument as in the proof of \cite[Theorem~3]{ref:blanchet2019rwpi}, we can show that the optimal solution for $\gamma$, $\zeta$ and $\Delta_i$ belong to a compact set with high probability.
As $\wh p_{ay} - p_{ay} = o_{\PP}(1)$ for any $(a, y) \in \mc A \times \mc Y$, we have
\begin{align*}
    N \times \mc{R}^\odd(\Pnom^N, \wh p^N)
    &= 
    \Sup{\gamma, \zeta} \Bigg\{ \gamma H_1^N + \zeta H_0^N + \frac{1}{N} \sum_{i=1}^N \Inf{\Delta_i} \gamma \int_0^1 \nabla h_\beta\left(\wh x_i + t\frac{\Delta_i}{\sqrt{N}}\right) \cdot \Delta_i \mathrm{d}t ~ \times\\
    &\hspace{1cm}\begin{pmatrix}
            \gamma \\
            \zeta
        \end{pmatrix}^\top
        \begin{pmatrix} 
        p_{11}^{-1} \mathbbm{1}_{(1, 1)}(\wh a_i, \wh y_i) - p_{01}^{-1} \mathbbm{1}_{(0, 1)}(\wh a_i, \wh y_i) \\
        p_{10}^{-1} \mathbbm{1}_{(1, 0)}(\wh a_i, \wh y_i) - p_{00}^{-1} \mathbbm{1}_{(0, 0)}(\wh a_i, \wh y_i)
        \end{pmatrix}  + \| \Delta_i\|^2  + o_{\PP}(1)\Bigg\}.
\end{align*}
Using a similar argument, we can bound
\begin{align*}
        \| [\nabla h_\beta(x+\Delta) - \nabla h_\beta(x)] (p_{10}^{-1}\mathbbm{1}_{(1, 0)}(a, y) -  p_{00}^{-1} \mathbbm{1}_{(0,0)}(a, y))\|_2 \le 3\|\beta\|_*^2 M_0 \|\Delta\|
        \end{align*}
        for some constant $M_0$,
        and thus Assumption 6' in~\cite{ref:blanchet2019rwpi} is satisfied. If $H_0^N \xrightarrow{d.} H_0$ and $H_1^N \xrightarrow{d.} H_1$ for some random variables $H_0$ and $H_1$, then \cite[Lemma~4]{ref:blanchet2019rwpi} asserts that
\begin{align*}
        N \times \mc{R}^\odd(\Pnom^N, \wh p^N) \xrightarrow{d.}\Sup{\gamma, \zeta
        }\left\{\gamma H_1 + \zeta H_0 +  
        \EE_\PP\Big[\Big\Vert\begin{pmatrix}
            \gamma \\
            \zeta
        \end{pmatrix}^\top
        \begin{pmatrix} 
        p_{11}^{-1} \mathbbm{1}_{(1, 1)}(A, Y)- p_{01}^{-1} \mathbbm{1}_{(0, 1)}(A, Y) \\
        p_{10}^{-1} \mathbbm{1}_{(1, 0)}(A, Y)- p_{00}^{-1} \mathbbm{1}_{(0, 0)}(A, Y)
        \end{pmatrix} \nabla h_\beta(X) 
        \Big\Vert_*^2 \Big]\right\}.
    \end{align*}
    Using the same limiting argument as in the proof of Theorem~\ref{thm:limiting-opp}, we have the characterization of $H_1$ and $H_0$ as in the statement of the theorem.
\end{proof}

%%%%%%%%%%%%%%%%%%%%%%%%%%%
\section{Appendix - Auxiliary Result}
\label{sec:app-k-result}
\FloatBarrier
\begin{figure*}[hbt!]
    \centering
    \begin{subfigure}[h]{0.35\textwidth}
    \centering
    \includegraphics[width=\textwidth]{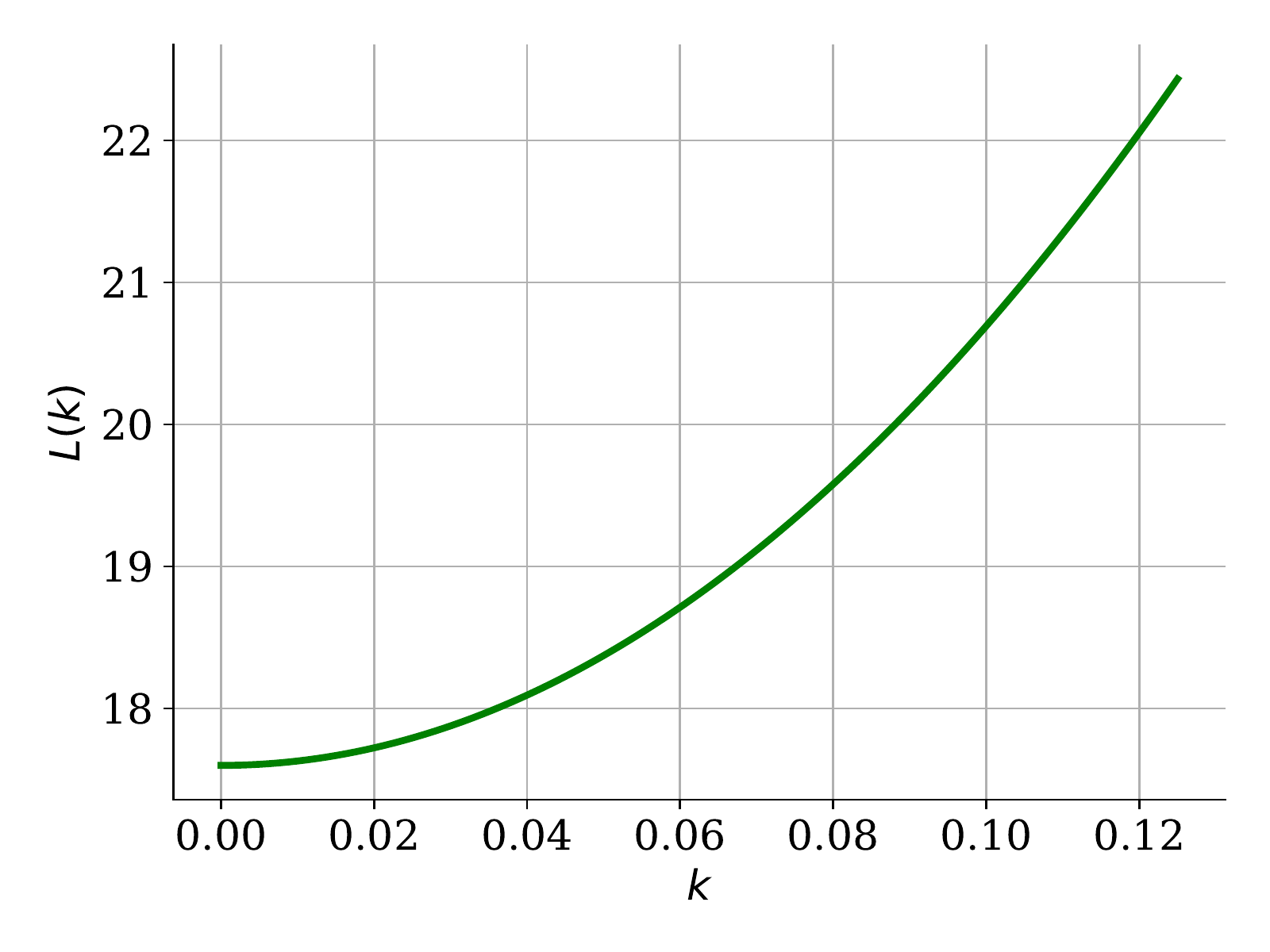}
    \caption{$\beta=(0, 1)^\top,~\wh x= (-2, 10)^\top,~\omega = 17.6$}
    \label{fig:Lk1}
    \end{subfigure}\hspace{2.5cm}
    \begin{subfigure}[h]{0.35\textwidth}
    \centering
    \includegraphics[width=\textwidth]{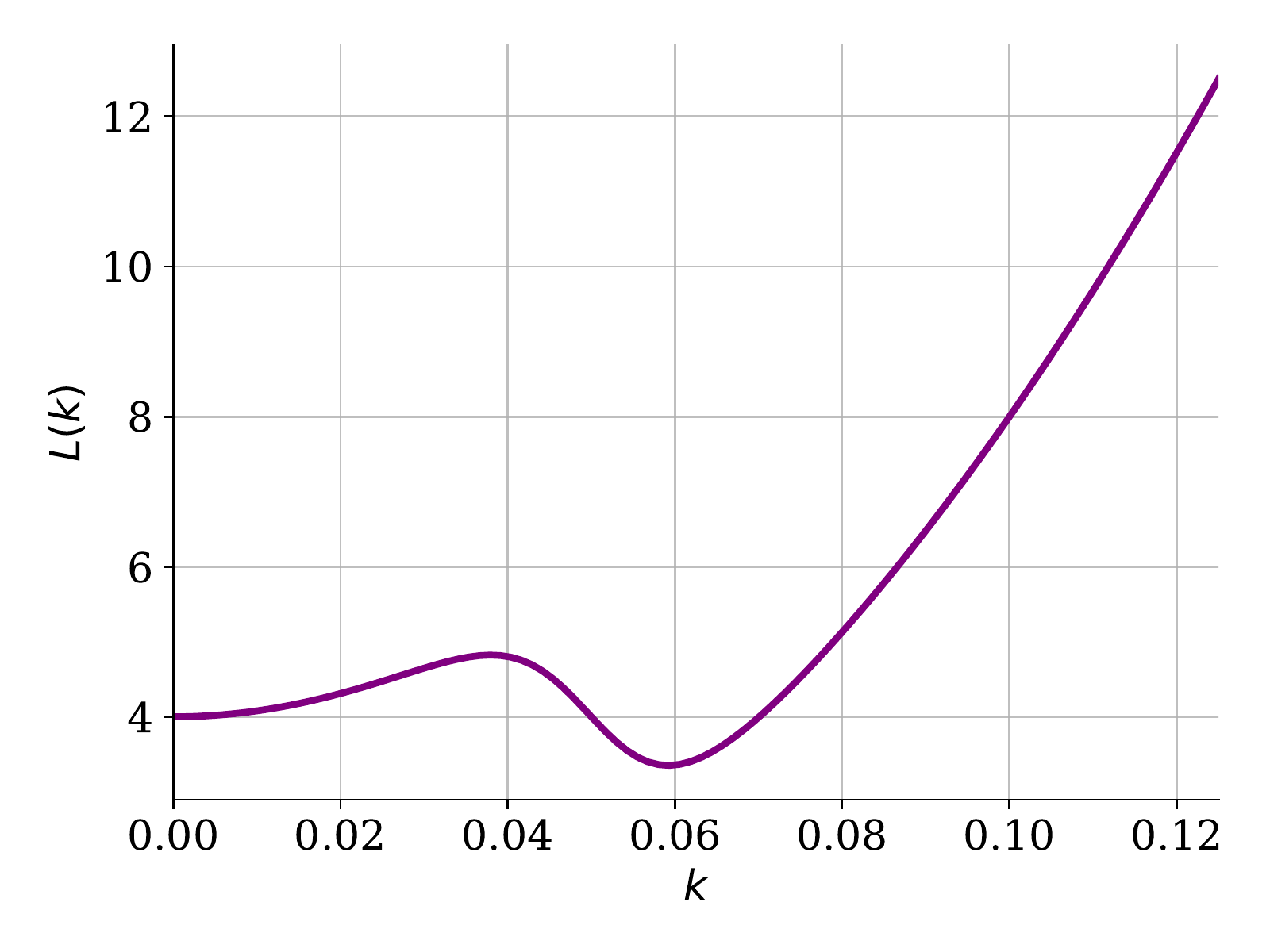}
    \caption{$\beta=(-5, 5)^\top,~\wh x = (3, 5)^\top,~\omega  = 4$}
    \label{fig:Lk2}
    \end{subfigure}
    \begin{subfigure}[h]{0.35\textwidth}
    \centering
    \includegraphics[width=\textwidth]{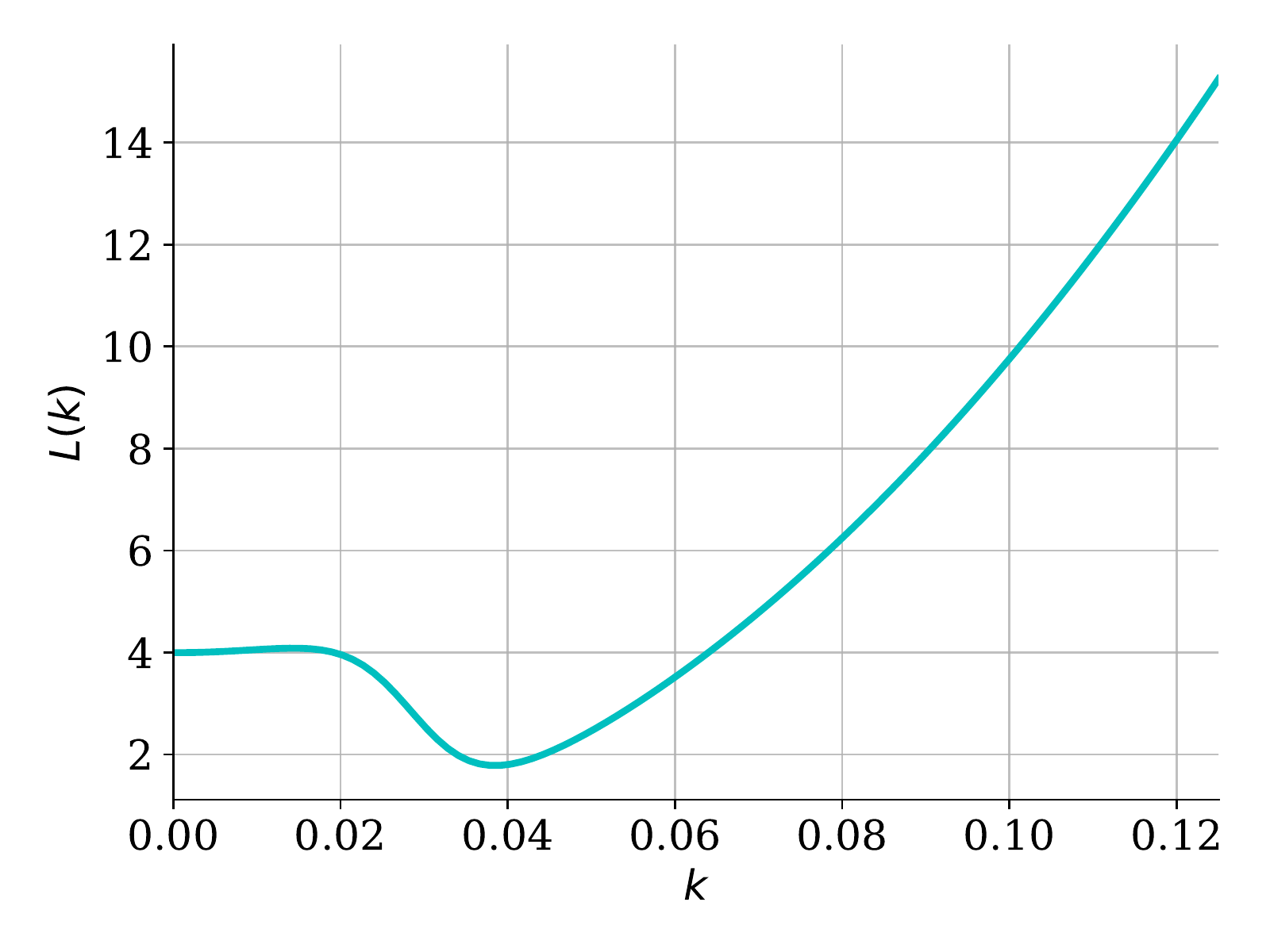}
    % \caption{$\beta=(-6, 5)^\top,~\gamma= 2,~ \wh x_i = (3, 5)^\top,~\lambda_i  = 2$}
    \caption{$\beta=(-6, 5)^\top,~ \wh x = (3, 5)^\top,~\omega  = 4$}
    \label{fig:Lk3}
    \end{subfigure}\hspace{2.5cm}
     \begin{subfigure}[h]{0.35\textwidth}
    \centering
    \includegraphics[width=\textwidth]{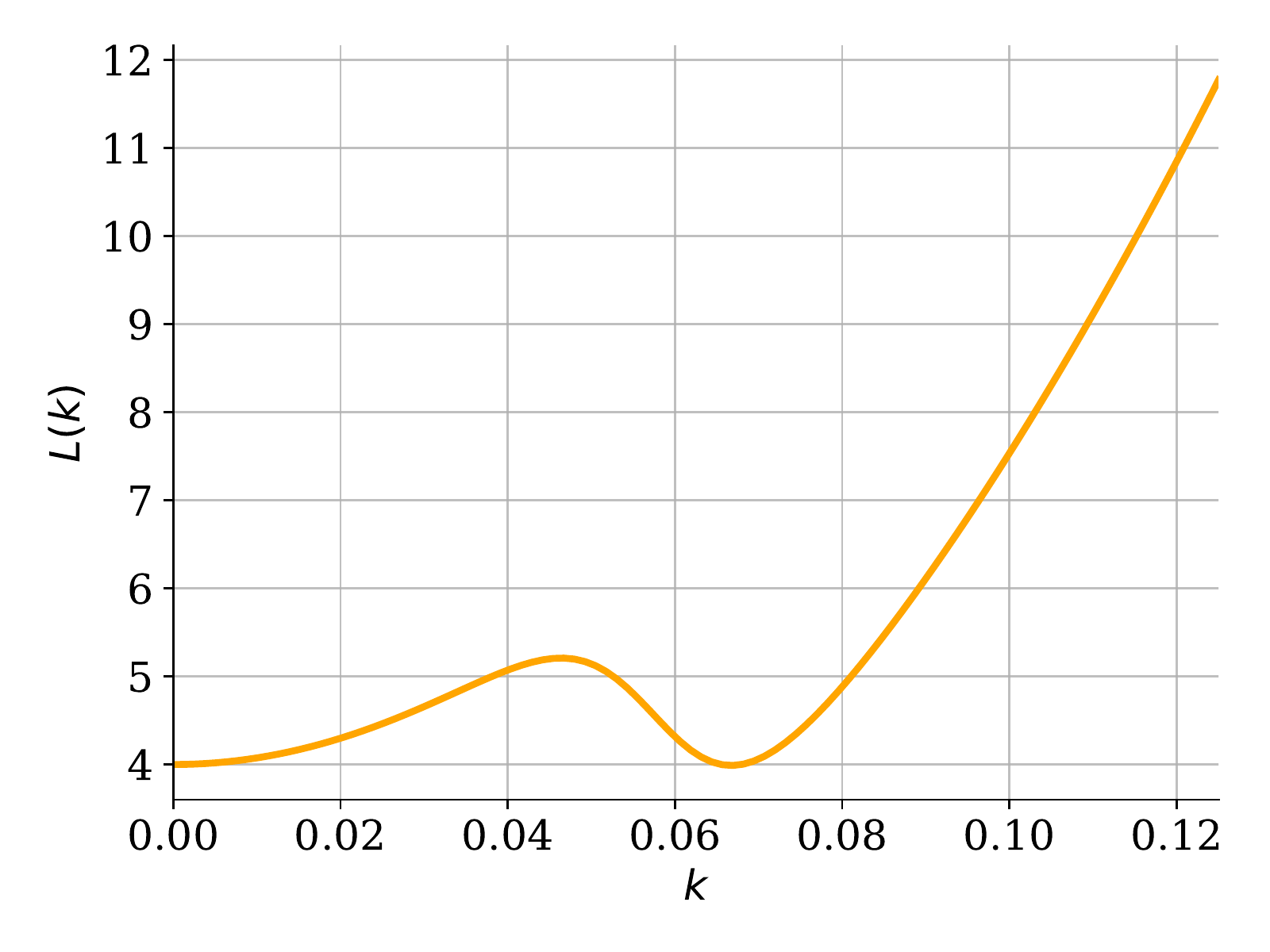}
    \caption{$\beta=(-4.7, 5)^\top,~\wh x = (3, 5)^\top,~\omega  = 4$}
    \label{fig:Lk4}
    \end{subfigure}
    \caption{Plots of $L(k)$ with respect to $k$ for different values of $\beta, \wh x$ and $\omega$.}
    \label{fig:non-convex-k}
\end{figure*}
The following lemma is used repeatedly to prove Lemmas~\ref{lemma:R-compute} and~\ref{lemma:R-compute-odd}.
\begin{lemma} \label{lemma:individual}
    For any $\omega \in \R$, $\wh x \in \R^p$ and $\beta \in \R^p$, we have
    \begin{align}
        \Inf{x \in \R^p}~\|x - \wh x\|_2^2 + \frac{\omega}{1 + \exp(-\beta^\top x)} 
        = \Min{ k \in [0, \frac{1}{8}]}~\omega^2 \| \beta\|_2^2 k^2 + \frac{\omega}{1 + \exp(-\beta^\top \wh x + k \omega \| \beta\|_2^2)}. \label{eq:min-L-of-k}
    \end{align}
\end{lemma}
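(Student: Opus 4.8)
The plan is to collapse the $p$-dimensional inner minimization to a scalar one, reparametrize that scalar problem so it matches the right-hand side verbatim, and then show that restricting the new variable to $[0,\tfrac18]$ does not alter the optimal value. First I would set $x = \wh x + \Delta$, turning the objective into $\|\Delta\|_2^2 + \omega/(1 + \exp(-\beta^\top\wh x - \beta^\top\Delta))$. Since the sigmoid term depends on $\Delta$ only through the scalar $t := \beta^\top\Delta$, I would split the infimum as $\inf_{t\in\R}\inf_{\Delta:\,\beta^\top\Delta = t}$. The inner problem minimizes $\|\Delta\|_2^2$ subject to a single linear equality; its solution is the projection $\Delta = t\beta/\|\beta\|_2^2$ with optimal value $t^2/\|\beta\|_2^2$. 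Hence the left-hand side equals $\inf_{t\in\R} g(t)$, where $g(t) = t^2/\|\beta\|_2^2 + \omega/(1 + \exp(-\beta^\top\wh x - t))$.

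Next, assuming $\beta\neq 0$ and $\omega\neq 0$ (if $\beta=0$ both sides equal $\omega/2$, and if $\omega=0$ both sides equal $0$, so these cases are immediate), I would substitute $t = -k\,\omega\|\beta\|_2^2$, which is a bijection of $\R$ onto $\R$. A direct computation gives $g(t) = L(k)$ with $L(k) := \omega^2\|\beta\|_2^2 k^2 + \omega/(1 + \exp(-\beta^\top\wh x + k\omega\|\beta\|_2^2))$, exactly the right-hand side objective: the quadratic term maps to $\omega^2\|\beta\|_2^2 k^2$, and the exponent $-t$ maps to $+k\omega\|\beta\|_2^2$. Consequently $\inf_{t\in\R} g(t) = \inf_{k\in\R} L(k)$, and it remains only to verify that this unconstrained infimum is attained inside $[0,\tfrac18]$.

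To locate the minimizer, note that $\omega\neq 0$ makes the quadratic term coercive, so the global minimum of $L$ is attained at a stationary point. Writing $\sigma$ for the sigmoid and $u := \beta^\top\wh x - k\omega\|\beta\|_2^2$, one computes $L'(k) = \omega^2\|\beta\|_2^2\big(2k - \sigma(u)(1-\sigma(u))\big)$, so every stationary point satisfies $k = \tfrac12\,\sigma(u)(1-\sigma(u))$. Since $\sigma(u)(1-\sigma(u))\in(0,\tfrac14]$ for all $u$, every stationary point—hence the global minimizer—lies in $(0,\tfrac18]\subseteq[0,\tfrac18]$. Therefore $\inf_{k\in\R}L(k) = \min_{k\in[0,1/8]}L(k)$, which is the claimed identity.

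I expect the last step, rather than the algebra, to be the main subtlety: $L$ is nonconvex—indeed Figure~\ref{fig:non-convex-k} exhibits several local minima of $L$—so I cannot invoke convexity to bound the minimizer. The argument instead relies on coercivity to guarantee a global minimizer at a stationary point, together with the uniform bound $\sigma(u)(1-\sigma(u))\le\tfrac14$ that forces every stationary point into $[0,\tfrac18]$; this is precisely what justifies shrinking the feasible range to the compact interval without loss of optimality.
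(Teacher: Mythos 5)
Your proof is correct and follows essentially the same route as the paper's: reduce the $p$-dimensional problem to a scalar one along the direction of $\beta$ (the paper uses an orthogonal decomposition $x = \wh x - k\omega\beta - k'\beta^\perp$ where you use projection onto the hyperplane $\beta^\top\Delta = t$, which is the same computation), then use the derivative identity $L'(k) = \omega^2\|\beta\|_2^2\bigl(2k - \sigma(1-\sigma)\bigr)$ and the bound $\sigma(1-\sigma)\le \tfrac14$ to confine the minimizer to $[0,\tfrac18]$. Your handling of the degenerate cases $\beta=0$ and $\omega=0$ and your coercivity argument for attainment are slightly more explicit than the paper's sign-of-derivative argument, but the substance is identical.
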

\begin{proof}[Proof of Lemma~\ref{lemma:individual}]
    Any $x \in \R^p$ can be written using the orthogonal decomposition as $x = \wh x - k\omega \beta - k' \beta^{\perp}$ for some $k \in \R$, $k' \in \R$ and $\beta^{\perp}$ perpendicular to $\beta$, that is, $\beta^\top (\beta^\perp) = 0$. Optimizing over $x$ is equivalent to jointly optimizing over $k$, $k'$ and $\beta^\perp$ as
    \[
        \begin{array}{cl}
            \inf & \| k \omega \beta + k' \beta^\perp \|_2^2 + \ds \frac{\omega}{1 + \exp(-\beta^\top \wh x + k \omega \| \beta \|_2^2)} \\
            \st & k \in \R, \; k' \in \R, \; \beta^\perp \in \R^p,\; \beta^\top (\beta^\perp) = 0.
        \end{array}
    \]
    After extending the norm, and by noticing that the optimal solution in $k'$ and $\beta^\perp$ should satisfy $k' \beta^\perp = 0$, the above optimization problem is equivalent to
    \[
        \begin{array}{cl}
            \inf &  k^2 \omega^2 \|\beta\|_2^2 + \ds \frac{\omega}{1 + \exp(-\beta^\top \wh x + k \omega \| \beta \|_2^2)} \\
            \st & k \in \R.
        \end{array}
    \]
    Let $L(k)$ be the objective function of the above optimization problem, we have
    \begin{align*}
    \nabla_k L(k) &= 2\omega^2\|\beta\|_2^2 k - 
    \frac{\omega^2 \|\beta\|_2^2 \exp(-\beta^\top \hat x + k \omega \|\beta\|_2^2)}{(1+\exp(-\beta^\top \hat x + k\omega \|\beta\|_2^2))^2} \\
    &= \omega^2 \| \beta \|_2^2 \left( 2 k - \sigma(k) ( 1- \sigma(k)) \right),
    \end{align*}
    where for the purpose of this proof, we define $\sigma(k)$ as
    \[
        \sigma(k) \Let \frac{1}{1+\exp(-\beta^\top \hat x + k\omega \|\beta\|_2^2)} \in (0, 1).
    \]
    Notice that $\sigma(k) (1- \sigma(k)) \in (0, \frac{1}{4})$ for any value of $k \in \R$. Because $\nabla_k L(k)$ is continuous in $k$, $\nabla_k L(k) \le 0$ for any $k \le 0$, and $\nabla_k L(k) \ge 0$ for any $k \ge \frac{1}{8}$, one can conclude that there exists an optimal solution $k\opt$ that lies in the compact range $[0, \frac{1}{8}]$. This completes the proof.
\end{proof}
Let $L(k)$ be the objective function of the optimization problem~\eqref{eq:min-L-of-k}. Figure~\ref{fig:non-convex-k} visualizes several instances of $L(k)$ for different values of inputs $\beta, \wh x$ and $\omega$. Note that $L(k)$ is non-convex in $k$, and the optimizer of $L(k)$ is not necessarily unique as indicated in Figure~\ref{fig:Lk4}.

%%%%%%%%%%%%%%%%%%%%%%%%%%%%%
\section{Appendix - Numerical Results}
We use the synthetic experiment from~\cite{ref:zafar2015fairness} to generate unfairness landscapes provided in Figure~\ref{fig:unfairness_landscape}.
We set the true distributions of the class labels $\PP(Y=0) = \PP(Y=1)= 1/2$, and conditioning on $Y$, the feature $X$ is distributed as
    \begin{align*}
        X|Y=1 &\sim \mc N([2; 2], [5, 1; 1, 5]),\\
        X|Y=0 &\sim \mc N([-2; -2], [10, 1; 1, 3]).
    \end{align*}
Then, we draw sensitive attribute of each sample $x$ from a Bernoulli distribution, that is
    \[\PP(A = 1 | X = x') = pdf(x' |Y = 1)/(pdf(x'|Y = 1) + pdf(x'|Y = 0)),\] where $x'= [\cos(\pi/4),  \sin(\pi/4); \sin(\pi/4), \cos(\pi/4)]x$ is a rotated version of the feature vector $x$ and $pdf(\cdot | Y = y)$ is the Gaussian probability density function of $X$ given $Y=y$.
    
    %%%%%%%%%%%%%%%%%%%%%%%%%%%
\bibliographystyle{siam} 
\bibliography{main.bbl}
\end{document}